\newcommand{\bE}{\mathbb{E}}
\newcommand{\bR}{\mathbb{R}}
\newcommand{\cA}{\mathcal A}
\newcommand{\cD}{\mathcal D}
\newcommand{\cE}{\mathcal E}
\newcommand{\cF}{\mathcal F}
\newcommand{\cG}{\mathcal G}
\newcommand{\cL}{\mathcal L}
\newcommand{\cM}{\mathcal M}
\newcommand{\cO}{\mathcal O}
\newcommand{\cP}{\mathcal P}
\newcommand{\cS}{\mathcal S}
\newcommand{\cT}{\mathcal T}
\newcommand{\cW}{\mathcal W}
\providecommand{\customgenericname}{}
\newcommand{\newcustomtheorem}[2]{%
  \newenvironment{#1}[1]
  {%
   \renewcommand\customgenericname{#2}%
   \renewcommand\theinnercustomgeneric{##1}%
   \innercustomgeneric
  }
  {\endinnercustomgeneric}
}
\theoremstyle{plain}
\newtheorem{theorem}{Theorem}[section]
\newtheorem*{theorem*}{Theorem}
\newtheorem{lemma}{Lemma}
\theoremstyle{definition}
\newtheorem{definition}[theorem]{Definition}
\newtheorem{assumption}[theorem]{Assumption}
\theoremstyle{remark}
\newtheorem{remark}[theorem]{Remark}
\title{Adversarially Trained Weighted Actor-Critic for Safe Offline Reinforcement Learning}
\author{%
  Honghao Wei \\
  Washington State University\\
  \texttt{honghao.wei@wsu.edu} 
  \And
  Xiyue Peng\\
  ShanghaiTech University\\
  \texttt{pengxy2024@shanghaitech.edu.cn}
  \And
  Arnob Ghosh\\
  New Jersey Institute of Technology\\
  \texttt{arnob.ghosh@njit.edu}
  \And
  Xin Liu\\
  ShanghaiTech University\\
  \texttt{liuxin7@shanghaitech.edu.cn}\\
}
\begin{document}

\maketitle

\vspace{-0.1in}
\begin{abstract}
 We propose WSAC (Weighted Safe Actor-Critic), a novel algorithm for Safe Offline Reinforcement Learning (RL) under functional approximation, which can robustly optimize policies to improve upon an arbitrary reference policy with limited data coverage. WSAC is designed as a two-player Stackelberg game to optimize a refined objective function. The actor optimizes the policy against two adversarially trained value critics with small importance-weighted Bellman errors, which focus on scenarios where the actor's performance is inferior to the reference policy. In theory, we demonstrate that when the actor employs a no-regret optimization oracle, WSAC achieves a number of guarantees: $(i)$ For the first time in the safe offline  RL setting, we establish that WSAC can produce a policy that outperforms {\bf any} reference policy while maintaining the same level of safety, which is critical to designing a safe algorithm for offline RL. $(ii)$ WSAC achieves the optimal statistical convergence rate of $1/\sqrt{N}$ to the reference policy, where $N$ is the size of the offline dataset. $(iii)$ We theoretically show that WSAC guarantees a safe policy improvement across a broad range of hyperparameters that control the degree of pessimism, indicating its practical robustness. Additionally, we offer a practical version of WSAC and compare it with existing state-of-the-art safe offline RL algorithms in several continuous control environments. WSAC outperforms all baselines across a range of tasks, supporting the theoretical results.
\end{abstract}
\vspace{-0.1in}
\section{Introduction} 
Online safe reinforcement learning (RL) has found successful applications in various domains, such as autonomous driving \citep{IseNakFuj_18}, recommender systems \citep{ChoGhaJan_17}, and robotics \citep{AchHelDav_17}. It enables the learning of safe policies effectively while satisfying certain safety constraints, including collision avoidance, budget adherence, and reliability. However, collecting diverse interaction data can be extremely costly and infeasible in many real-world applications, and this challenge becomes even more critical in scenarios where risky behavior cannot be tolerated. Given the inherently risk-sensitive nature of these safety-related tasks, data collection becomes feasible only when employing behavior policies satisfies all the safety requirements.

To overcome the limitations imposed by interactive data collection, offline RL algorithms are designed to learn a policy from an available dataset collected from historical experiences by some behavior policy, which may differ from the policy we aim to learn. A desirable property of an effective offline algorithm is the assurance of robust policy improvement (RPI), which guarantees that a learned policy is always at least as good as the baseline behavior policies \citep{FujMegPre_19,LarTriDes_19,KumFuSoh_19,SieSprJos_20,ChenZhaWen_22,ZhuRasJia_23,BhaXieBoo_24}. We extend the property of RPI to offline safe RL called safe robust policy improvement (SRPI), which indicates the improvement should be {\it safe} as well. This is particularly important in offline safe RL. For example, in autonomous driving, an expert human driver operates the vehicle to collect a diverse dataset under various road and weather conditions, serving as the behavior policy. This policy is considered both effective and safe, as it demonstrates proficient human driving behavior while adhering to all traffic laws and other safety constraints. Achieving a policy that upholds the SRPI characteristic with such a dataset can significantly mitigate the likelihood of potential collisions and other safety concerns.
\vspace{-0.1in}
\begin{wrapfigure}{r}{0.45\textwidth}
\centering
\includegraphics[trim=0cm 0.2cm 0cm 0.2cm, clip=true, width=0.45\textwidth]{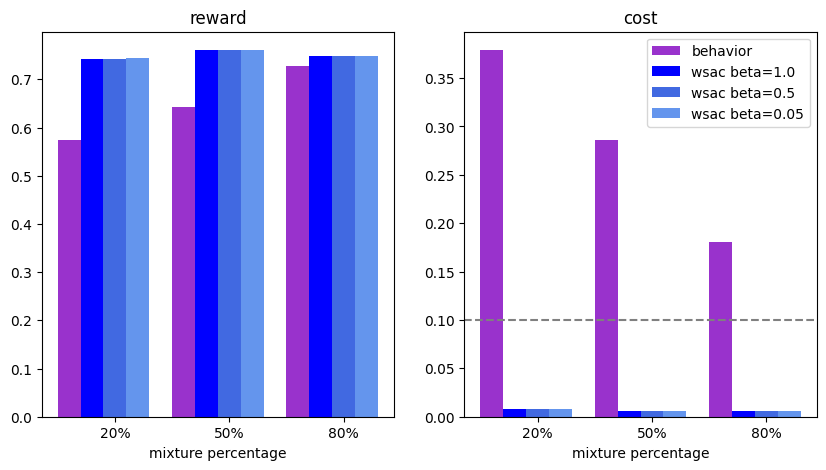}
    \vspace{-0.1in}
  \caption{ Comparison between WSAC and the behavior policy in the tabular case. The behavior policy is a mixture of the optimal policy and a random policy, with the mixture percentage representing the proportion of the optimal policy. The cost threshold is set to 0.1. We observe that WSAC consistently ensures a safely improved policy across various scenarios, even when the behavior policy is not safe.} \label{fig:intro}
    \vspace{-0.1in}
\end{wrapfigure}

In offline RL, we represent the state-action occupancy distribution of policy $\pi$ over the dataset distribution $\mu$ using the ratio $w^\pi = d^\pi / \mu$. A commonly required assumption is that the $\ell_\infty$ concentrability $C_{\ell_\infty}^\pi$ is bounded, which is defined as the infinite norm of $w^\pi$ for \textbf{all} policies \citep{LiuCaiYan_19, ChenJia_19, WanCaiYan_19, LiaoQiWan_22, ZhaKopBed_20}. A stronger assumption requires a uniform lower bound on $\mu(a \vert s)$ \citep{XieJia_21}. However, such all-policy concentrability assumptions are difficult to satisfy in practice, particularly for offline safe RL, as they essentially require the offline dataset to have good coverage of \textbf{all} unsafe state-action pairs. To address the full coverage requirement, other works \citep{RasZhuMa_21, ZhaHuaHua_22, CjeJia_22, XieCheJia_21, UehSun_21} adapt conservative algorithms by employing the principle of pessimism in the face of uncertainty, reducing the assumption to the best covered policy (or optimal policy) concerning $\ell_\infty$ concentrability. \cite{ZhuRasJia_23} introduce $\ell_2$ concentrability to further relax the assumption, indicating that $\ell_\infty$ concentrability is always an upper bound of $\ell_2$ concentrability (see Table \ref{tab:results} for detailed comparisons with previous works). While provable guarantees are obtained using single policy concentrability for unconstrained MDP as Table~\ref{tab:results} suggests  for the safe RL setting,   all the existing studies \citep{HonLiTew_24,LeVolYue_19} {\em still} require the coverage on {\bf all} the policies. Further, as Table~\ref{tab:results} suggests, the above papers do not guarantee robust safe policy improvement. m ,                                                                            
Our main contributions are summarized below:

\begin{enumerate}[leftmargin=*]
 \item We prove that our algorithm, which uses average Bellman error, enjoys an optimal statistical rate of $1/\sqrt{N}$ under partial data coverage assumption. {\em This is the first work that achieves such a result using only single-policy $\ell_2$ concentrability}.
    \item We propose a novel offline safe RL algorithm, called \underline{W}eighted \underline{S}afe \underline{A}ctor-\underline{C}ritic (WSAC), which can robustly learn policies that improve upon any behavior policy with controlled relative pessimism. We prove that under standard function approximation assumptions, when the actor incorporates a no-regret policy optimization oracle, WSAC outputs a policy that never degrades the performance of a reference policy (including the behavior policy) for a range of hyperparameters (defined later). {\em This is the first work that provably demonstrates the property of SRPI in offline safe RL setting.} 
    \item We point out that primal-dual-based approaches \cite{HonLiTew_24} may require \textbf{all}-policy concentrability assumption. Thus, unlike, the primal-dual-based appraoch, we propose a novel rectified penalty-based approach to obtain results using \textbf{single-policy} concentrability. Thus, we need novel analysis techniques to prove results. 
    \item Furthermore, we provide a practical implementation of WSAC following a two-timescale actor-critic framework using adversarial frameworks similar to \cite{CheXieJia_22,ZhuRasJia_23}, and test it on several continuous control environments in the offline safe RL benchmark \citep{LiuGuoLin_23}. WSAC outperforms all other state-of-the-art baselines, validating the property of a safe policy improvement.
\end{enumerate}
\begin{table}[ht]
\renewcommand{\arraystretch}{1.5}
        \caption{Comparison of algorithms for offline RL (safe RL) with function approximation. The parameters $C_{\ell_2}^\pi,C_{\ell_\infty}^\pi,C_{Bellman}^\pi$ refer to different types of concentrabilities, it always hold $C_{\ell_2}^\pi\leq C_{\ell_{\infty}}^\pi$ and under certain condition $C_{\ell_2}^\pi\leq C_{Bellman}^\pi,$ detailed definitions and more discussions can be found in Section \ref{sec:concentrability}. }
    \label{tab:results}
    \centering
    \begin{tabular}{c|c|c|c|c}
    \toprule
    Algorithm & Safe RL  & \makecell{Coverage \\ assumption}& \makecell{Policy \\ Improvement} & Suboptimality \\
    \midrule
    \cite{XieJia_21} & No & all policy, $C_{\ell_2}^\pi$ & Yes & $\cO({1}/{\sqrt{N}})$ \\
    \midrule
    \cite{XieCheJia_21} & No & single-policy, $C_{Bellman}^\pi$ & Yes & $\cO({1}/{\sqrt{N}})$ \\
    \midrule 
    \cite{CheXieJia_22} & No & single-policy, $C_{Bellman}^\pi$ & Yes \& Robust & $\cO({1}/{{N^{1/3}}})$ \\
    \midrule
    \cite{OzdPatZha_23}  & No & single-policy, $C_{\ell_\infty}^\pi$ & No & $\cO({1}/{\sqrt{N}})$ \\
    \toprule
     \cite{ZhuRasJia_23} & No & single-policy, $C_{\ell_2}^\pi$ & Yes \& Robust & $\cO({1}/{\sqrt{N}})$\\
    \toprule
    \toprule
    \cite{LeVolYue_19} & Yes & all policy, $C_{\ell_\infty}^\pi$ & No & $\cO({1/}{\sqrt{N}})$ \\
    \midrule
      \cite{HonLiTew_24} & Yes & all policy, $C_{\ell_2}^\pi$ & No & $\cO({1/}{\sqrt{N}})$ \\
    \midrule
    \cellcolor[gray]{0.8} Ours & \cellcolor[gray]{0.8}  Yes & \cellcolor[gray]{0.8} 
 single-policy, $ \cellcolor[gray]{0.8}  C_{\ell_2}^\pi$ & \cellcolor[gray]{0.8}  Yes \& \cellcolor[gray]{0.8}  Robust  & \cellcolor[gray]{0.8}  $\cO({1/}{\sqrt{N}})$ \\
    \bottomrule
    \end{tabular}
\end{table}

\section{Related Work}
{\bf Offline safe RL:} Deep offline safe RL algorithms \citep{LeePadMan_22, LiuGuoYao_23, XuZhaZhu_22, CheLuRaj_21, ZheLiYu_24} have shown strong empirical performance but lack theoretical guarantees. To the best of our knowledge, the investigation of policy improvement properties in offline safe RL is relatively rare in the state-of-the-art offline RL literature. \cite{WuZhaYan_21} focus on the offline constrained multi-objective Markov Decision Process (CMOMDP) and demonstrate that an optimal policy can be learned when there is sufficient data coverage. However, although they show that CMDP problems can be formulated as CMOMDP problems, they assume a linear kernel CMOMDP in their paper, whereas our consideration extends to a more general function approximation setting. \cite{LeVolYue_19} propose a model-based primal-dual-type algorithm with deviation control for offline safe RL in the tabular setting. With prior knowledge of the slackness in Slater's condition and a constant on the concentrability coefficient, an $(\epsilon,\delta)$-PAC error is achievable when the number of data samples $N$ is large enough $(N=\tilde{\mathcal{O}}(1/\epsilon^2))$. These assumptions make the algorithm impractical, and their computational complexity is much higher than ours. Additionally, we consider a more practical, model-free function approximation setting. In another concurrent work \citep{HonLiTew_24}, a primal-dual critic algorithm is proposed for offline-constrained RL settings with general function approximation. However, their algorithm requires \(\ell_2\) concentrability for \textbf{all} policies, which is not practical as discussed. The reason is that the dual variable optimization in their primal-dual design requires an accurate estimation of all the policies used in each episode, which necessitates coverage over all policies. Moreover, they cannot guarantee the property of SRPI. Moreover, their algorithm requires an additional offline policy evaluation (OPE) oracle for policy evaluation, making the algorithm less efficient.

\vspace{-0.1in}
\section{Preliminaries}	
\subsection{Constrained Markov Decision Process}

We consider a Constrained Markov Decision Process (CMDP) $\cM,$ denoted by $(\cS,\cA,\cP,R,C,\gamma,\rho).$ $\cS$ is the state space, $\cA$ is the action space, $\cP:\cS\times\cA\rightarrow \Delta(\cS)$ is the transition kernel, where $\Delta(\cdot)$ is a probability simplex, $R:\cS\times\cA\rightarrow[0, 1] $ is the reward function, $C:\cS \times\cA \rightarrow [-1,1] $ is the cost function, $\gamma\in[0,1)$ is the discount factor and $\rho:\cS\rightarrow[0,1]$ is the initial state distribution. We assume $\cA$ is finite while allowing $\cS$ to be arbitrarily complex.
We use $\pi:\cS\rightarrow\Delta(\cA)$ to denote a stationary policy, which specifies a distribution over actions for each state. At each time, the agent observes a state $s_t\in\cS,$ takes an action $a_t\in\cS$ according to a policy $\pi,$ receives a reward $r_t$ and a cost $c_t,$ where $r_t=R(s_t,a_t), c_t=C(s_t,a_t).$ Then the CMDP moves to the next state $s_{t+1}$ based on the transition kernel $\cP(\cdot \vert s_t,a_t).$ Given a policy $\pi,$ we use $V_r^{\pi}(s)$ and $V_c^{\pi}(s)$ to denote the expected discounted return  and 
 the expected cumulative discounted cost of $\pi$, starting from state $s$, respectively.
 \begin{align}
    V_r^\pi (s) := &  \bE[ \sum_{t=0}^\infty \gamma^t r_t\vert s_0=s, a_t \sim \pi(\cdot\vert s_t) ] \\
    V_c^\pi(s) := & \bE[ \sum_{t=0}^\infty \gamma^t c_t\vert s_0=s, a_t \sim \pi(\cdot\vert s_t) ].
 \end{align}
  Accordingly, we also define the $Q-$value function of a policy $\pi$ for the reward and cost as 
  \begin{align}
      Q^\pi_r (s,a):= &\bE[\sum_{t=0}^\infty \gamma^t r_t\vert (s_0,a_0)=(s,a),a_t\sim \pi(\cdot\vert s_t)] \\
     Q^\pi_c (s,a):= &\bE[\sum_{t=0}^\infty \gamma^t c_t\vert (s_0,a_0)=(s,a),a_t\sim \pi(\cdot\vert s_t)], 
  \end{align}
  respectively. As rewards and costs are bounded, we have that $0\leq Q^\pi_r\leq \frac{1}{1-\gamma},$ and $-\frac{1}{1-\gamma} \leq Q_c^\pi \leq \frac{1}{1-\gamma}.$ We let $V_{\max}=\frac{1}{1-\gamma}$ to simplify the notation.  We further write $$J_r(\pi):= (1-\gamma)\bE_{s\sim \rho}[V_r^\pi(s)],\quad J_c(\pi):= (1-\gamma)\bE_{s\sim \rho}[V_c^\pi(s)]$$ to represent the normalized average reward/cost value of policy $\pi.$ In addition, we use $d^\pi(s,a)$ to denote the normalized and discounted state-action occupancy measure of the policy $\pi:$ $$d^\pi(s,a) := (1-\gamma)\bE[\sum_{t=0}^\infty \gamma^t \mathds{1} (s_t=s,a_t=a) \vert a_t\sim \pi(\cdot\vert s_t)  ],$$ where $\mathds{1}(\cdot)$ is the indicator function. We also use $d^\pi(s)=\sum_{a\in\cA}d^\pi(s,a)$ to denote the discounted state occupancy and we use $\bE_\pi$ as a shorthand  of $\bE_{(s,a)\sim d^\pi}[\cdot]$ or $\bE_{s\sim d^\pi}[\cdot]$to denote the expectation with respect to $d^\pi.$ Thus The objective in safe RL for an agent is to find a policy such that 
\begin{equation}
\begin{aligned} 
\pi\in&\arg\max~J_r(\pi) \quad
&\text{s.t.}~J_c(\pi) \leq 0. 
\end{aligned} \label{eq:obj_function}
\end{equation}
\begin{remark}
   For ease of exposition, this paper exclusively focuses on a single constraint. However, it is readily extendable to accommodate multiple constraints.
\end{remark}
\subsection{Function Approximation}
In complex environments, the state space $\cS$ is usually very large or even infinite. We assume access to a policy class $\Pi\subseteq (\cS\rightarrow\Delta(\cA))$ consisting of all candidate policies from which we can search for good policies. We also assume access to a value function class $\cF\subseteq (\cS\times\cA \rightarrow [0, V_{\max}])$ to model the reward $Q-$functions, and $\cG\subseteq (\cS\times\cA \rightarrow [-V_{\max}, V_{\max}])$ to model the cost $Q-$functions of candidate policies. We further assume access to a function class $\cW\in\{w:\cS\times\cA\rightarrow[0,B_w]\}$ that represents marginalized importance weights with respect to the offline data distribution. Without loss of generality, we assume that the all-one function is contained in $\cW.$

For a given policy $\pi\in\Pi,$ we denote $f(s',\pi):= \sum_{a'} \pi(a'\vert s')f(s',a')$ for any $s\in\cS.$ The Bellman operator $\cT_r^\pi:\bR^{\cS\times\cA}\rightarrow\bR^{\cS\times\cA}$ for the reward is defined as $$(\cT^\pi_r f)(s,a) := R(s,a) + \gamma \bE_{\cP (s'\vert s,a)}[f(s',\pi)],$$ The Bellman operator $\cT_c^\pi:\bR^{\cS\times\cA}\rightarrow\bR^{\cS\times\cA} $ for the cost is $$(\cT^\pi_c f)(s,a) := C(s,a) + \gamma \bE_{\cP (s'\vert s,a)}[f(s',\pi)].$$

Let $\Vert \cdot\Vert_{2,\mu}:=\sqrt{\bE_\mu[(\cdot)^2]}$ denote the Euclidean norm weighted by distribution $\mu.$ We make the following standard assumptions in offline RL setting \citep{XieCheJia_21,CheXieJia_22,ZhuRasJia_23} on the representation power of the function classes:

\begin{assumption}[Approximate Realizability] \label{as:real}
  Assume there exists $\epsilon_1\geq 0,$ s.t. for any given policy $\pi\in\Pi,$ we have $\min_{f\in\cF}\max_{\text{admissible } \nu} \Vert f-T_r^\pi f\Vert_{2,\nu}^2 \leq \epsilon_1$, and $ \min_{f\in\cF}\max_{\text{admissible } \nu} \Vert f-T_c^\pi f\Vert_{2,\nu}^2  \leq \epsilon_1,$ where $\nu$ is the state-action distribution of any admissible policy such that $\nu \in \{d^\pi,\forall \pi\in\Pi \}.$ 
\end{assumption}
Assumption \ref{as:real} assumes that for any policy $\pi\in\Pi,$ $Q_r^\pi$ and $Q_c^\pi$ are approximately realizable in $\cF$ and $\cG.$ When $\epsilon_1$ is small for all admissible $\nu,$ we have $f_r\approx Q_r^\pi,$ and  $f_c\approx Q_c^\pi.$ In particular, when $\epsilon_1=0,$ we have $Q_r^\pi\in\cF,Q_c^\pi\in\cF$ for any policy $\pi\in\Pi.$ Note that we do not need Bellman completeness assumption \cite{CheXieJia_22}.

\subsection{Offline RL} \label{sec:concentrability}
In offline RL, we assume that the available offline data $\cD=\{(s_i,a_i,r_i,c_i,s_i') \}_{i=1}^N$ consists of $N$ samples. Samples are i.i.d. (which are common assumptions in unconstrained \cite{CheXieJia_22}, as well as constrained setting \cite{HonLiTew_24}), and the distribution of each tuple $(s,a,r,c,s')$ is specified by a distribution $\mu\in\Delta(\cS\times\cA),$ which is also the discounted visitation probability of a behavior policy (also denoted by $\mu$ for simplicity). In particular, $(s,a)\sim\mu,r=R(s,a),c=C(s,a),s'\sim \cP(\cdot\vert s,a).$ We use $a\sim\mu(\cdot\vert s),$ to denote that the action is drawn using the behavior policy and $(s,a,s')\sim\mu$ to denote that $(s,a)\sim\mu,$ and $s'\sim\cP(\cdot\vert s,a).$

For a given policy $\pi,$ we define the marginalized importance weights $w^\pi(s,a):= \frac{d^\pi(s,a)}{\mu(s,a)}$ which is the ratio between the discounted state-action occupancy of $\pi$ and the data distribution $\mu.$ This ratio can be used to measure the concentrability of the data coverage \citep{XieJia_20,ZhaHuaHua_22,RasZhuYan_22,OzdPatZha_23,LeeJeoLee_21}.

In this paper we study offline RL with access to a dataset with limited coverage. The coverage of a policy $\pi$ is the dataset can be measured by the weighted $\ell_2$ single policy concentrability coefficient \citep{ZhuRasJia_23,YinWan_21,UehKalLee_24,HonLiTew_24}:
\begin{definition}[$\ell_2$ Concentrability]\label{def:concen}
    Given a policy $\pi,$ define $C_{\ell_2}^\pi=\Vert w^\pi\Vert_{2,\mu}=\Vert d^\pi / \mu\Vert_{2,\mu}.$
\end{definition}
\begin{remark}
The definition here is much weaker than the {\bf all policy} concentrability used in offline RL \citep{ChenJia_19} and safe offline RL \citep{LeVolYue_19,HonLiTew_24}, which requires the ratio $\frac{d^\pi(s,a)}{\mu(s,a)}$ to be bounded for all $s\in\cS$ and $a\in\cA$ and \textbf{all} policies $\pi.$ In particular, the all-policy concentrability assumption essentially requires the dataset to have full coverage of all policies ((nearly all the state action pairs). This requirement is often violated in practical scenarios. This requirement is even impossible to meet in safe offline RL because it would require collecting data from {\bf every} dangerous state and actions, which clearly is impractical. 
\end{remark}
In the following lemma, we compare two variants of single-policy concentrability definition with the $\ell_2$ defined in Definition \ref{def:concen}.
\begin{lemma}[Restate Proposition $2.1$ in \cite{ZhuRasJia_23}]
Define the $\ell_\infty$ single policy concentrability \citep{RasZhuMa_21} as $C_{\ell_\infty}^\pi = \Vert d^\pi / \mu\Vert_{\infty} $ and the Bellman-consistent single-policy concentrability \citep{XieCheJia_21} as $C_{Bellman}^\pi = \max_{f\in\cF}\frac{\Vert f-\cT^\pi f\Vert_{2,d^\pi}^2 }{ \Vert f-\cT^\pi f\Vert_{2,\mu}^2}$ ($\cT$ could be $\cT_r$ or $\cT_c$ in our setting) Then, it always holds $(C_{\ell_2}^\pi)^2\leq C_{\ell_\infty}^\pi, C_{\ell_2}^\pi \leq C_{\ell_\infty}^\pi$ and there exist offline RL instances where $(C_{\ell_2}^\pi)^2\leq C_{Bellman}^\pi, C_{\ell_2}^\pi \leq C_{Bellman}^\pi.$
\end{lemma}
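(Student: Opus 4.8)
These I would obtain from the elementary identity for the occupancy‑weighted norm: expanding the square and extracting one copy of $d^\pi/\mu$,
\[
(C_{\ell_2}^\pi)^2 \;=\; \bE_\mu\!\Big[\big(d^\pi/\mu\big)^2\Big] \;=\; \sum_{s,a} d^\pi(s,a)\,\frac{d^\pi(s,a)}{\mu(s,a)} \;\le\; \Big\Vert \frac{d^\pi}{\mu}\Big\Vert_\infty \sum_{s,a} d^\pi(s,a) \;=\; C_{\ell_\infty}^\pi ,
\]
where the last equality uses $\sum_{s,a} d^\pi(s,a)=1$. This is $(C_{\ell_2}^\pi)^2\le C_{\ell_\infty}^\pi$. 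Since $\bE_\mu[d^\pi/\mu]=\sum_{s,a} d^\pi(s,a)=1$, the maximum $C_{\ell_\infty}^\pi=\Vert d^\pi/\mu\Vert_\infty$ is at least $1$, so $C_{\ell_2}^\pi\le\sqrt{C_{\ell_\infty}^\pi}\le C_{\ell_\infty}^\pi$, giving the second inequality.

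\textbf{The $C_{Bellman}^\pi$ separation.} The plan is to exhibit a single tabular instance --- an MDP $\cM$, a target policy $\pi$ with occupancy $d^\pi$, a behavior/data distribution $\mu$, and a value class $\cF$ --- on which $C_{\ell_2}^\pi=O(1)$ while $C_{Bellman}^\pi$ is as large as we like; taking the free parameter small enough then makes this instance satisfy both $(C_{\ell_2}^\pi)^2\le C_{Bellman}^\pi$ and $C_{\ell_2}^\pi\le C_{Bellman}^\pi$, which is all that is claimed. The mechanism is to plant a rarely‑visited set $X\subseteq\cS\times\cA$ with $\mu(X)=\epsilon^2$ and $d^\pi(X)=\epsilon$, while keeping $d^\pi(s,a)/\mu(s,a)=O(1)$ for every $(s,a)\notin X$. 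On $X$ the pointwise ratio is $1/\epsilon$, but its contribution to $(C_{\ell_2}^\pi)^2$ is only $\mu(X)\cdot(1/\epsilon)^2=1$, so a direct summation gives $C_{\ell_2}^\pi=O(1)$. I would then take $\cF$ to contain $Q^\pi$ (so the realizability Assumption~\ref{as:real} holds with $\epsilon_1=0$) together with one extra function $f=Q^\pi+g$ whose Bellman residual $f-\cT^\pi f = g-\gamma P^\pi g$ is proportional to the indicator of $X$; since the ratio defining $C_{Bellman}^\pi$ is scale‑invariant, for this $f$ it equals $d^\pi(X)/\mu(X)=1/\epsilon$, so $C_{Bellman}^\pi\ge 1/\epsilon\to\infty$.

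\textbf{Main point to get right.} The Bellman residual of $f$ is not a free parameter: it is the function $g-\gamma P^\pi g$ fixed by the dynamics and $\pi$. The observation that unlocks the construction is that $I-\gamma P^\pi$ is invertible for $\gamma<1$ (with inverse $\sum_{k\ge 0}\gamma^k (P^\pi)^k$), so \emph{any} prescribed residual $r$ is realized by $g=(I-\gamma P^\pi)^{-1}r$; choosing $r$ a small multiple of $\mathbf{1}_X$ reads off the required $g$ and keeps $f\in[0,V_{\max}]$, decoupling the residual design from the concrete MDP. The only genuinely instance‑specific work that remains is to display a concrete chain on which $d^\pi$ and $\mu$ actually exhibit the planted $X$‑structure and to verify $C_{\ell_2}^\pi=O(1)$ there by direct computation --- routine bookkeeping that I would not spell out.
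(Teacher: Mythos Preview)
The paper does not give its own proof of this lemma: it is explicitly labeled a restatement of Proposition~2.1 in \cite{ZhuRasJia_23}, and neither the main text nor the appendix contains an argument. The only commentary is the remark following the lemma, which points to Example~1 in \cite{ZhuRasJia_23} for the $\ell_2$ vs.\ $\ell_\infty$ separation and observes informally that when $\cF$ is highly expressive $C_{Bellman}^\pi$ can approach $C_{\ell_\infty}^\pi$.

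Since there is nothing in the paper to compare against, a brief correctness check of your proposal: the $\ell_\infty$ inequalities are derived correctly via the change-of-measure identity $\bE_\mu[(d^\pi/\mu)^2]=\bE_{d^\pi}[d^\pi/\mu]$ and the observation $C_{\ell_\infty}^\pi\ge 1$. For the Bellman separation, your mechanism is sound: because $\cT^\pi$ is affine, $f=Q^\pi+g$ has residual $(I-\gamma P^\pi)g$, and invertibility of $I-\gamma P^\pi$ lets you prescribe the residual to be $\mathbf{1}_X$ on a set with $d^\pi(X)=\epsilon$, $\mu(X)=\epsilon^2$, yielding $C_{Bellman}^\pi\ge 1/\epsilon$ while $(C_{\ell_2}^\pi)^2$ picks up only an $O(1)$ contribution from $X$. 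This is exactly the ``expressive $\cF$ makes $C_{Bellman}^\pi$ behave like the sup-ratio on the residual's support'' intuition the paper's remark alludes to, so your approach is consistent with the cited source's spirit and nothing in it is incorrect.
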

\begin{remark}
It is easy to observe that the $\ell_2$ variant is bounded by $\ell_\infty$ and $C_{Bellman}^\pi$ under some cases. There is an example (Example $1$) in \cite{ZhuRasJia_23} showing that $C_{\ell_2}^\pi$ is bounded by a constant $\sqrt{2}$ while $C_{\ell_\infty}^\pi$ could be arbitrarily large. For the case when the function class $\cF$ is highly expressive, $C_{Bellman}^\pi$ could be close to $C_{\ell_\infty}^\pi$ and thus possibly larger than $C_{\ell_2}^\pi.$ Intuitively, $C^{\pi}_{\ell_2}$ implies that only $\mathbb{E}_{d^{\pi}}[w^{\pi}(s,a)]$ is  bounded, rather, $w^{\pi}(s,a)$ is bounded for all $(s,a)$ in $\ell_{\infty}$ concentrability bound.
\end{remark}

Given the definition of the concentrability, we make the following assumption on the weight function class $\cW$ and a single-policy realizability:
\begin{assumption}[Boundedness of $\cW$ in $\ell_2$ norm] \label{as:w}
    For all $w\in\cW,$ assume that $\Vert w\Vert_{2,\mu}\leq C_{\ell_2}^*.$
\end{assumption}
\begin{assumption}[Single-policy realizability of $w^\pi$] \label{as:single}
   For some policy $\pi$ that we would like to compete with, assume that $w^\pi\in\cW.$
\end{assumption}
In this paper, we want to study the robust policy improvement on any reference policy, then we assume that we are provided a reference policy $\pi_{\text{ref}}$. Note that in many applications (e.g., scheduling, networking) we indeed have a reference policy. We want that while applying a sophisticated RL policy it should do better and be safe as well. This is one of the main motivations behind this assumption. 
\begin{assumption}[Reference Policy] \label{as:ref}
We assume access to a reference policy $\pi_{\text{ref}}\in\Pi,$ which can be queried at any state. 
\end{assumption}
In many applications such as networking, scheduling, and control problems, there are existing good enough reference policies. In these cases, a robust and safe policy improvement over these reference policies has practical value. If $\pi_{\text{ref}}$ is not provided, we can simply run a behavior cloning on the offline data to extract the behavior policy as $\pi_{\text{ref}}$ accurately, as long as the size of the offline data set is large enough. More discussion can be found in Section \ref{ap:bc} in the Appendix.

\vspace{-0.1in}
\section{Actor-Critic with Importance Weighted Bellman Error} 
Our algorithm design builds upon the constrained actor-critic method, in which we iteratively optimize a policy and improve the policy based on the evaluation of reward and cost. Consider the following actor-critic approach for solving the optimization problem \eqref{eq:obj_function}:
\begin{align*}
    {\textbf{Actor: }} & \hat{\pi}^*\in  \arg\max_{\pi\in\Pi} f_r^\pi(s_0,\pi) \quad s.t. \quad f_c^\pi(s_0,\pi)\leq 0\\
    {\textbf{Critic: }} & f_r^\pi \in  \arg \text{min}_{f\in\cF}\bE_\mu[((f-\cT_rf)(s,a) )^2 ],\quad
     f_c^\pi \in  \arg \text{min}_{f\in\cG}\bE_\mu[((f-\cT_cf)(s,a) )^2 ],
\end{align*}
where we assume that $s_0$ is a fixed initial state, and $f_r(s,\pi) = \sum_{a\in\cA}\pi(a\vert s)f_r(s,a), f_c(s,\pi) = \sum_{a\in\cA}\pi(a\vert s)f_c(s,a).$ The policy is optimized by maximizing the reward $q$ function $f_r$ while ensuring that $f_c$ satisfies the constraint, and the two functions are trained by minimizing the Bellman error. However, this formulation has several disadvantages. $1)$ It cannot handle insufficient data coverage, which may fail to provide an accurate estimation of the policy for unseen states and actions. $2)$It cannot guarantee robust policy improvement. $3)$ The actor training step is computationally intractable especially when the policy space is extremely large. 

To address the insufficient data coverage issue, as mentioned in \cite{XieCheJia_21} the critic can include a Bellman-consistent pessimistic evaluation of $\pi,$ which selects the most pessimistic function that approximately satisfies the Bellman equation, which is called absolute pessimism. Then later as indicated by \cite{CheXieJia_22}, instead of using an absolute pessimism, a relative pessimism approach by considering competing to the behavior policy can obtain a robust improvement over the behavior policy. However, this kind of approach can only achieve a suboptimal statistical rate of $N^{1/3},$ and fails to achieve the optimal statistical rate of $1/\sqrt{N},$ then later a weighted average Bellman error \citep{UehHuaJia_20,XieJia_20,ZhuRasJia_23} could be treated as one possible solution for improving the order. We remark here that all the discussions here are for the traditional {\em unconstrained} offline RL. Regarding safety, {\em no existing efficient algorithms in safe offline RL have theoretically demonstrated} the property of robust policy improvement with optimal statistical rate.

\textbf{Can Primal-dual based approaches achieve result using only single policy coverability?}:  The most commonly used approach for addressing safe RL problems is primal-dual optimization \citep{EfrManPir_20,Alt_99}. As shown in current offline safe RL literature \citep{HonLiTew_24,LeVolYue_19}, the policy can be optimized by maximizing a new unconstrained ``reward" $Q-$ function $f_r^\pi(s_0,\pi) -\lambda f_c^\pi(s_0,\pi)$  where $\lambda$ is a dual variable. Then, the dual-variable can be tuned by taking gradient descent step. As we discussed in the introduction, all these require \textbf{all} policy concentrability which is not practical especially for safe RL. Important question is whether all policy concentrability assumption can be relaxed. Note that primal-dual algorithm relies on solving the min-max problem $\min_{\lambda}\max_{\pi}f_r^\pi(s_0,\pi) -\lambda f_c^\pi(s_0,\pi)$. Recent result \citep{cui2022offline} shows that single policy concentrability assumption is {\em not} enough for offline min-max game. Hence, we {\em conjecture} that using the primal-dual method we can not relax the all policy concentrability assumption. Intuitively, the primal-dual based method \citep{HonLiTew_24} rely on bounding the regret in dual domain $\sum_{k}(\lambda_k-\lambda^*)(f_c^{\pi_k}-0)$, hence, all the policies $\{\pi_k\}_{k=1}^K$ encountered throughout the iteration must be supported by the dataset to evaluate the dual value $\lambda^*(f_c^{\pi_k}-0)$ where $\lambda^*$ is the optimal dual value.

\textbf{Our novelty}: In contrast, we propose an aggression-limited objective function $f_r(s_0,\pi)-\lambda\cdot [f_c(s_0,\pi)]_{+}$ to control aggressive policies, where $\{\cdot\}_+:=\max\{\cdot,0\}.$ The high-level intuition behind this aggression-limited objective function is that by appropriately selecting a $\lambda$ (usually large enough), we penalize all the policies that are not safe. As a result, the policy that maximizes the objective function is the optimal safe policy. This formulation is fundamentally different from the traditional primal-dual approach as it does not require dual-variable tuning, and thus, does not require all policy concentrability. In particular, we only need to bound the primal domain regret which can be done as long as the reference policy is covered by the dataset similar to the unconstrained setup.

Combining all the previous ideas together provides the design of our main algorithm named WSAC ({\bf W}eighted {\bf S}afe {\bf A}ctor-{\bf C}ritic). In Section \ref{sec:result}, we will provide theoretical guarantees of WSAC and discuss its advantages over existing approaches in offline safe RL. WSAC aims to solve the following optimization problem:
    \begin{equation}
      \begin{aligned} 
      &\hat{\pi}^* \in \underset{\pi\in \Pi}{\arg\max}~\cL_\mu (\pi,f^\pi_r)-\lambda \{  {\cL}_\mu(\pi,f^\pi_c)  \}_{+} \\
      s.t.~~~ &f^\pi_r \in \underset{f_r\in\cF}{\arg\min}~\cL_\mu(\pi,f_r) + \beta\cE_\mu(\pi,f_r), \quad
      f^\pi_c \in \underset{f_c\in\cG}{\arg\min}~-\lambda  {\cL}_\mu(\pi,f_c) + \beta \hat{\cE}_\mu(\pi,f_c),
      \end{aligned} \label{eq:rob_op}
    \end{equation}
    where $\cL_\mu (\pi,f ) := \bE_\mu [f(s,\pi) - f(s,a)],$ and $ \cE_\mu (\pi, f) := \max_{w\in\cW}\vert \bE_\mu [w(s,a) ((f-T^\pi_r f)(s,a) ) ]\vert, \hat{\cE}_\mu (\pi, f) := \max_{w\in\cW}\vert \bE_\mu [w(s,a)((f-T^\pi_c f)(s,a) ) ]\vert.$
This formulation can also be treated as a Stackelberg game \citep{VonHei_10} or bilevel optimization problem. We penalize the objective function only when the approximate cost $Q$-function $f_c^\pi$ of the policy $\pi$ is more perilous than the behavior policy ($f_c^\pi(s,\pi)\geq f_c^\pi(s,a)$) forcing our policy to be as safe as the behavior policy. Maximization over $w$ in for training the two critics can ensure that the Bellman error is small when averaged over measure $\mu\cdot w$ for any $w \in\cW,$ which turns out
to be sufficient to control the suboptimality of the learned policy.

In the following theorem, we show that the solution of the optimization problem \eqref{eq:rob_op} is not worse than the behavior policy $\mu$ in both performance and safety for any $\beta \geq 0,\lambda>0$ than the policy $\mu$ under Assumption \ref{as:real} with $\epsilon_1=0$. 

\begin{theorem}\label{the:opt-policy}
    Assume that Assumption \ref{as:real} holds with $\epsilon_1=0,$ and the behavior policy $\mu\in\Pi,$ then for any $\beta\geq 0,\lambda>0$ we have $J_r(\hat{\pi}^*) \geq J_r(\mu),$ and  $\{  J_c(\hat{\pi}^*)\}_{+} \leq   \{  J_c(\mu)\}_{+} + \frac{1}{\lambda}.$ 
\end{theorem}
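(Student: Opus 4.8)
The plan is to exploit the fact that when $\epsilon_1 = 0$, the true $Q$-functions $Q_r^\mu$ and $Q_c^\mu$ lie in $\cF$ and $\cG$ respectively, and that the all-one function is in $\cW$. First I would analyze the critic optimization when the candidate policy is $\mu$ itself: since $Q_r^\mu$ satisfies $Q_r^\mu - \cT_r^\mu Q_r^\mu = 0$ pointwise, we have $\cE_\mu(\mu, Q_r^\mu) = 0$, so $Q_r^\mu$ achieves the minimum possible penalty term $\beta\cE_\mu$; moreover, by the performance-difference / telescoping identity, $\cL_\mu(\mu, Q_r^\mu) = \bE_\mu[Q_r^\mu(s,\mu) - Q_r^\mu(s,a)]$ which, because the behavior policy generating $\mu$ is exactly $\mu$, equals $0$ (the two expectations coincide). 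The same reasoning gives $\cL_\mu(\mu, Q_c^\mu) = 0$ and $\hat\cE_\mu(\mu, Q_c^\mu) = 0$. So $Q_r^\mu$ and $Q_c^\mu$ are the (or at least a) minimizer for the inner problems when $\pi = \mu$, and the resulting objective value for the actor at $\pi = \mu$ is $\cL_\mu(\mu, Q_r^\mu) - \lambda\{\cL_\mu(\mu, Q_c^\mu)\}_+ = 0 - \lambda\{0\}_+ = 0$.

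Next, since $\hat\pi^*$ maximizes the actor objective, its value is at least the value at $\mu$, i.e.\ $\cL_\mu(\hat\pi^*, f_r^{\hat\pi^*}) - \lambda\{\cL_\mu(\hat\pi^*, f_c^{\hat\pi^*})\}_+ \geq 0$. The key step is then to relate the critic quantities $\cL_\mu(\hat\pi^*, f_r^{\hat\pi^*})$ and $\cL_\mu(\hat\pi^*, f_c^{\hat\pi^*})$ back to the true performance gaps $J_r(\hat\pi^*) - J_r(\mu)$ and $J_c(\hat\pi^*) - J_c(\mu)$. Here I would use the standard fact that for the true $Q$-function $f = Q_r^{\hat\pi^*} \in \cF$ (realizable, $\epsilon_1 = 0$), the weighted Bellman error $\cE_\mu(\hat\pi^*, Q_r^{\hat\pi^*}) = 0$ (since the all-one weight gives $\bE_\mu[(Q_r^{\hat\pi^*} - \cT_r^{\hat\pi^*}Q_r^{\hat\pi^*})] = 0$ pointwise, and in fact every weight gives $0$), so $Q_r^{\hat\pi^*}$ is feasible-optimal for the reward critic's penalty; combined with $\cL_\mu$-minimization this forces $\cL_\mu(\hat\pi^*, f_r^{\hat\pi^*}) \leq \cL_\mu(\hat\pi^*, Q_r^{\hat\pi^*})$. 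Then the telescoping/performance-difference lemma identifies $\cL_\mu(\hat\pi^*, Q_r^{\hat\pi^*})$ with something like $(J_r(\mu) - J_r(\hat\pi^*))/(1-\gamma)$ up to the $s_0$-versus-$\rho$ and normalization conventions — this is the identity $\bE_{d^{\hat\pi^*}}[\ldots]$ type manipulation where the occupancy of $\hat\pi^*$ versus the data distribution $\mu$ enters, and one must be careful about whether $\cL_\mu$ uses $\mu$ or $d^{\hat\pi^*}$ as the outer expectation (the importance weight $w^{\hat\pi^*}$ converts between them, which is exactly why the weighted Bellman error class $\cW$ appears). Analogously for the cost critic, using $-\lambda\cL_\mu$ minimization and $Q_c^{\hat\pi^*} \in \cG$.

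Assembling these: from $\cL_\mu(\hat\pi^*, f_r^{\hat\pi^*}) - \lambda\{\cL_\mu(\hat\pi^*, f_c^{\hat\pi^*})\}_+ \geq 0$ and the identifications $\cL_\mu(\hat\pi^*, f_r^{\hat\pi^*}) \lesssim J_r(\mu) - J_r(\hat\pi^*)$ (appropriately signed/normalized) and $\cL_\mu(\hat\pi^*, f_c^{\hat\pi^*}) \gtrsim J_c(\hat\pi^*) - J_c(\mu)$ — note the cost critic maximizes $\cL_\mu$ (minimizes $-\lambda\cL_\mu$), so it picks the most pessimistic, i.e.\ largest, $\cL_\mu(\hat\pi^*, f_c)$, hence $\cL_\mu(\hat\pi^*, f_c^{\hat\pi^*}) \geq \cL_\mu(\hat\pi^*, Q_c^{\hat\pi^*})$ which equals roughly $J_c(\hat\pi^*) - J_c(\mu)$ — I would deduce $J_r(\mu) - J_r(\hat\pi^*) \leq \lambda\{J_c(\hat\pi^*) - J_c(\mu)\}_+ - \lambda\{\cdots\}$. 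Actually the cleanest route: the actor inequality gives $\cL_\mu(\hat\pi^*, f_r^{\hat\pi^*}) \geq \lambda\{\cL_\mu(\hat\pi^*, f_c^{\hat\pi^*})\}_+ \geq 0$, so $\cL_\mu(\hat\pi^*, f_r^{\hat\pi^*}) \geq 0$, which (after the sign-correct performance-difference identity) yields $J_r(\hat\pi^*) \geq J_r(\mu)$ directly. For the cost bound, rearrange to $\{\cL_\mu(\hat\pi^*, f_c^{\hat\pi^*})\}_+ \leq \frac{1}{\lambda}\cL_\mu(\hat\pi^*, f_r^{\hat\pi^*}) \leq \frac{1}{\lambda}\cdot\frac{1}{\text{(normalization)}}$, and since $\cL_\mu(\hat\pi^*, f_r^{\hat\pi^*})$ is bounded by $\cL_\mu(\hat\pi^*, Q_r^{\hat\pi^*})$ which is at most $(J_r(\hat\pi^*) - J_r(\mu))$-ish $\le$ a constant like $1$ after normalization (since $J_r \in [0,1]$), combine with $\{J_c(\hat\pi^*)\}_+ \leq \{J_c(\mu)\}_+ + \{\cL_\mu(\hat\pi^*,f_c^{\hat\pi^*})\}_+$-type bound to get $\{J_c(\hat\pi^*)\}_+ \leq \{J_c(\mu)\}_+ + \frac{1}{\lambda}$.

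The main obstacle I anticipate is bookkeeping the exact form of the performance-difference lemma that connects $\cL_\mu(\pi, Q^\pi)$ — an expectation under the data distribution $\mu$ — to the normalized value gap $J(\mu) - J(\pi)$, since $\cL_\mu$ averages the "advantage-like" quantity $Q^\pi(s,\pi) - Q^\pi(s,a)$ under $\mu$ rather than under $d^\pi$ or $d^\mu$. Getting the constant $\frac{1}{\lambda}$ exactly (rather than $\frac{C}{\lambda}$ for some constant) requires the normalization $J_r \in [0,1]$ and the telescoping identity to line up precisely, and requires that for $\pi = \mu$ the quantity $\cL_\mu(\mu, Q^\mu_r)$ is \emph{exactly} $0$ — which holds because the outer expectation's action marginal is $\mu(\cdot\mid s)$, matching the $f(s,\mu)$ term, so the integrand vanishes in expectation. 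I would double-check that this cancellation is genuinely exact (it is, because $\bE_{a\sim\mu(\cdot\mid s)}[Q(s,a)] = Q(s,\mu)$ by definition), which is the linchpin making $\mu$ a valid "comparator with zero objective value."
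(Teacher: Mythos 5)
Your proposal is correct and follows essentially the same route as the paper's proof: exact realizability puts $Q_r^{\hat\pi^*}\in\cF$, $Q_c^{\hat\pi^*}\in\cG$ with zero weighted Bellman error, so critic optimality gives the pessimism directions $\cL_\mu(\hat\pi^*,f_r^{\hat\pi^*})\leq\cL_\mu(\hat\pi^*,Q_r^{\hat\pi^*})$ and $\cL_\mu(\hat\pi^*,f_c^{\hat\pi^*})\geq\cL_\mu(\hat\pi^*,Q_c^{\hat\pi^*})$, which combined with the performance-difference identity, the actor's optimality over $\mu\in\Pi$ (whose objective value is exactly $0$ since $\cL_\mu(\mu,\cdot)\equiv 0$), and $J_r\in[0,1]$ yields both claims. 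The only bookkeeping to fix is a transient sign slip: the identity is $\cL_\mu(\pi,Q_\diamond^{\pi})=J_\diamond(\pi)-J_\diamond(\mu)$ with no extra $1/(1-\gamma)$ factor (the paper's $J$ is already normalized and $\mu$ is the normalized occupancy of the behavior policy), which is exactly the "sign-correct" version you invoke in your final assembly.
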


The result in Theorem \ref{the:opt-policy} shows that by selecting $\lambda$ large enough, for any $\beta\geq 0,$ the solution can achieve better performance than the behavior policy while maintaining safety that is arbitrarily close to that of the behavior policy. The Theorem verifies the design of our framework which has the potential to have a robust safe improvement.

In the next section, we will introduce our main algorithm WSAC and provide its theoretical guarantees.
\vspace{-0.1in}
\section{Theoretical Analysis of WSAC }\label{sec:result}
\subsection{Main Algorithm}
In this section, we present the theoretical version of our new model-free offline safe RL algorithm WSAC. Since we only have access to a dataset $\cD$ instead of the data distribution. WSAC sovles an empirical version of \eqref{eq:rob_op}:
  \begin{equation}
      \begin{aligned} 
      &\hat{\pi} \in \underset{\pi\in \Pi}{\arg\max}~\cL_\cD (\pi,f^\pi_r)-\lambda \{  {\cL}_\cD(\pi,f^\pi_c)  \}_{+} \\
      s.t.~~~ &f^\pi_r \in \underset{f_r\in\cF}{\arg\min}~\cL_\cD(\pi,f_r) + \beta\cE_\cD(\pi,f_r),\quad
      f^\pi_c \in \underset{f_c\in\cG}{\arg\min}~-\lambda  {\cL}_\cD(\pi,f_c) + \beta \hat{\cE}_\cD(\pi,f_c),
      \end{aligned} \label{eq:rob_op_emp}
    \end{equation}
where
\begin{equation}
     \begin{aligned} 
      \cL_\cD (\pi,f ) := &\bE_\cD [f(s,\pi) - f(s,a)]  \\
     \cE_\cD (\pi, f) :=  &\max_{w\in\cW} \vert \bE_\cD [w(s,a) (f(s,a)-r-\gamma f(s',\pi))]\vert  \\ 
      \hat{\cE}_\cD (\pi, f) := &\max_{w\in\cW} \vert \bE_\cD [w(s,a)(f(s,a)-c-\gamma f(s',\pi))]\vert. \label{eq:def-l-e}
    \end{aligned}
\end{equation}\label{eq:emploss}
As shown in Algorithm \ref{alg:atsac}, at each iteration, WSAC selects $f_r^k$ maximally pessimistic and $f_c^k$ maximally optimistic for the current policy $\pi_k$ with a weighted regularization on the estimated Bellman error for reward and cost, respectively (Line $4$ and $6$) to address the worse cases within reasonable range. In order to achieve a safe robust policy improvement, the actor then applies a no-regret policy optimization oracle to update the policy $\pi_{k+1}$ by optimizing the aggression-limited objective function compared with the reference policy (Line $7$) $
f_r^k (s,a)- \lambda\{f_c^k(s,a) - f_c^k(s,\pi_{\text{ref}})\}_{+}.$ Our algorithm is very computationally efficient and tractable compared with existing approaches \citep{HonLiTew_24,LeVolYue_19}, since we do not need another inner loop for optimizing the dual variable with an additional online algorithm or offline policy evaluation oracle. 

The policy improvement process relies on a no-regret policy optimization oracle, a technique commonly employed in offline RL literature \citep{ZhuRasJia_23, CheXieJia_22, HonLiTew_24, ZhuRasJia_23}. 
Extensive literature exists on such methodologies. For instance, approaches like soft policy iteration \citep{PirResPec_13} and algorithms based on natural policy gradients \citep{Kak_01, AgaKakLee_21} can function as effective no-regret policy optimization oracles. We now formally define the oracle:

\begin{definition}[No-regret policy optimization oracle] \label{de:no-reg}
An algorithm {\bf PO} is called a no-regret policy optimization oracle if for any sequence of functions $f^1,\dots,f^K$ with $f^k: \cS\times\cA \rightarrow [0,V_{\max}],\forall k\in [K].$ The policies $\pi_1,\dots,\pi_K$ produced by the oracle {\bf PO} satisfy that for any policy $\pi\in \Pi:$
\begin{align}
\epsilon_{opt}^\pi \triangleq \frac{1}{K} \sum_{k=1}^K \bE_{\pi}[f^k(s,\pi)-f^k(s,\pi_k)] = o(1) 
\end{align}
\end{definition}
There indeed exist many methods that can serve as the no-regret oracle, for example, the mirror-descent approach \citep{GeiSchPie_19} or the natural policy gradient approach \citep{Kak_01} of the form $\pi_{k+1}(a\vert s)\propto \pi_k(a\vert s)\exp(\eta f^k(s,a))$ with $\eta=\sqrt{\frac{\log\vert \cA\vert}{2V^2_{\max}K}}$\citep{EveEyaKak_09,AgaKakLee_21}. In the following define $\epsilon_{opt}^\pi$ as the error generated from the oracle {\bf PO} by considering $f_r^k(s,a) - \lambda \{f_c^k(s,a) - f_c^k(s,\pi)\}_+ $ as the sequence of functions in Definition \ref{de:no-reg}, then we have the following guarantee.
\begin{lemma} \label{le:no-ora}
    Applying a no-regret oracle {\bf PO} for $K$ episodes with {{$(f_r^k(s,a) - \lambda \{f^k_c(s,a) - f^k_c(s,\pi) \}_+)$}} for an arbitrary policy $\pi,$ can guarantee
\begin{align}
  &  \frac{1}{K} \sum_{k=1}^K  \bE_\pi[ f_r^k(s,\pi )-f_r^k(s,\pi_k)] \leq \epsilon_{opt}^\pi \\
&  \frac{1}{K}  \sum_{k=1}^K \bE_\pi[\{ f_c^k(s,\pi_k) - f_c^k(s,\pi )\}_{+}  ] \leq  \epsilon_{opt}^\pi + \frac{V_{\max}}{\lambda}.
\end{align}
\end{lemma}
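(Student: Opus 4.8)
# Proof Proposal for Lemma \ref{le:no-ora}

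The plan is to invoke the no-regret guarantee of the oracle \textbf{PO} (Definition \ref{de:no-reg}) with the specific choice of score functions $g^k(s,a) := f_r^k(s,a) - \lambda\{f_c^k(s,a) - f_c^k(s,\pi)\}_+$, and then split the resulting single aggregate inequality into the two claimed bounds. First I would note that, because $f_r^k$ is $[0,V_{\max}]$-valued and $\lambda\{f_c^k(s,a)-f_c^k(s,\pi)\}_+ \ge 0$, the function $g^k$ is bounded above by $V_{\max}$, which is what the oracle definition requires (up to the harmless fact that $g^k$ may take negative values, which only shifts the range and does not affect the regret bound for exponential-weights / mirror-descent style oracles; alternatively one clips, as discussed after Definition \ref{de:no-reg}). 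Applying Definition \ref{de:no-reg} to the sequence $g^1,\dots,g^K$ against the comparator $\pi$ yields
\begin{equation*}
\frac{1}{K}\sum_{k=1}^K \bE_\pi\big[g^k(s,\pi) - g^k(s,\pi_k)\big] \le \epsilon_{opt}^\pi .
\end{equation*}

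The next step is to expand $g^k(s,\pi) - g^k(s,\pi_k)$. Here one must be careful about how $g^k(s,\pi)$ is read: $g^k(s,\pi) = \sum_a \pi(a|s) g^k(s,a) = f_r^k(s,\pi) - \lambda \sum_a \pi(a|s)\{f_c^k(s,a) - f_c^k(s,\pi)\}_+$. The key observation is that $\sum_a \pi(a|s)\{f_c^k(s,a) - f_c^k(s,\pi)\}_+ \ge \{ \sum_a \pi(a|s)(f_c^k(s,a) - f_c^k(s,\pi)) \}_+ = \{f_c^k(s,\pi) - f_c^k(s,\pi)\}_+ = 0$ by Jensen / convexity of $\{\cdot\}_+$, so in fact $g^k(s,\pi) \le f_r^k(s,\pi)$, and moreover $g^k(s,\pi) \ge f_r^k(s,\pi) - \lambda\sum_a\pi(a|s)\{f_c^k(s,a)-f_c^k(s,\pi)\}_+$; actually since each action-level deviation can be nonzero even though the mean is zero, the cleanest route is just to write $g^k(s,\pi_k) = f_r^k(s,\pi_k) - \lambda\{f_c^k(s,\pi_k) - f_c^k(s,\pi)\}_+$ (the argument of $\{\cdot\}_+$ evaluated at $a\sim\pi_k$, then averaged — wait, one must match the paper's convention). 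I would clarify that in Line 7 of Algorithm \ref{alg:atsac} and in Lemma \ref{le:no-ora} the function fed to the oracle is $a \mapsto f_r^k(s,a) - \lambda\{f_c^k(s,a) - f_c^k(s,\pi)\}_+$, so $g^k(s,\pi_k) = \bE_{a\sim\pi_k(\cdot|s)}[f_r^k(s,a) - \lambda\{f_c^k(s,a)-f_c^k(s,\pi)\}_+]$. Substituting and using linearity of expectation over $s\sim d^\pi$:
\begin{equation*}
\frac{1}{K}\sum_{k=1}^K \bE_\pi\big[f_r^k(s,\pi) - f_r^k(s,\pi_k)\big] - \frac{\lambda}{K}\sum_{k=1}^K\bE_\pi\Big[\{f_c^k(s,\pi) - f_c^k(s,\pi)\}_+ - \{f_c^k(s,\pi_k) - f_c^k(s,\pi)\}_+\Big] \le \epsilon_{opt}^\pi,
\end{equation*}
where I used that at $a\sim\pi$ averaged, $\bE_\pi[\{f_c^k(s,\pi)-f_c^k(s,\pi)\}_+]=0$ (the action-averaged term; or more honestly $\bE_{a\sim\pi}\{f_c^k(s,a)-f_c^k(s,\pi)\}_+\ge 0$, which I handle below). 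This gives
\begin{equation*}
\frac{1}{K}\sum_{k=1}^K \bE_\pi\big[f_r^k(s,\pi) - f_r^k(s,\pi_k)\big] + \frac{\lambda}{K}\sum_{k=1}^K\bE_\pi\big[\{f_c^k(s,\pi_k) - f_c^k(s,\pi)\}_+\big] \le \epsilon_{opt}^\pi + (\text{nonneg. slack}).
\end{equation*}

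For the \textbf{first claimed inequality}, I drop the nonnegative term $\frac{\lambda}{K}\sum_k \bE_\pi[\{f_c^k(s,\pi_k)-f_c^k(s,\pi)\}_+] \ge 0$ from the left side, obtaining $\frac{1}{K}\sum_k \bE_\pi[f_r^k(s,\pi)-f_r^k(s,\pi_k)] \le \epsilon_{opt}^\pi$. For the \textbf{second claimed inequality}, I instead lower-bound $\frac{1}{K}\sum_k\bE_\pi[f_r^k(s,\pi)-f_r^k(s,\pi_k)] \ge -V_{\max}$, since $f_r^k \in [0,V_{\max}]$ forces each term $f_r^k(s,\pi)-f_r^k(s,\pi_k) \ge -V_{\max}$; rearranging gives $\frac{\lambda}{K}\sum_k\bE_\pi[\{f_c^k(s,\pi_k)-f_c^k(s,\pi)\}_+] \le \epsilon_{opt}^\pi + V_{\max}$, and dividing by $\lambda$ yields the stated bound $\frac{1}{K}\sum_k\bE_\pi[\{f_c^k(s,\pi_k)-f_c^k(s,\pi)\}_+] \le \epsilon_{opt}^\pi + \frac{V_{\max}}{\lambda}$. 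The only genuine subtlety — and the step I expect to need the most care — is the bookkeeping around the $\{\cdot\}_+$ term evaluated at the comparator $\pi$: one must decide whether the nonlinearity is applied action-wise then averaged, or to the action-averaged $Q$-gap, and ensure the Jensen direction ($\bE_{a\sim\pi}\{X(a)\}_+ \ge \{\bE_{a\sim\pi} X(a)\}_+$) is used consistently so that the "slack" always lands on the favorable side of the inequality; with the convention matching Algorithm \ref{alg:atsac} this works out cleanly, but it is the place where an off-by-a-sign error would creep in.
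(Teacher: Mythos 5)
Your proposal is correct and follows essentially the same route as the paper's proof in Appendix \ref{ap:no-ora}: invoke the oracle guarantee on the sequence $f_r^k(s,a)-\lambda\{f_c^k(s,a)-f_c^k(s,\pi)\}_+$, drop the nonnegative hinge term at $\pi_k$ to obtain the reward bound, and lower-bound the reward regret by $-V_{\max}$ and divide by $\lambda$ to obtain the cost bound. The comparator-side subtlety you flag is handled in the paper exactly by the reading you ultimately adopt: the hinge at the comparator is written as $\{f_c^k(s,\pi)-f_c^k(s,\pi)\}_+=0$ (i.e., applied to the policy-averaged gap), so the slack term vanishes and no extra bookkeeping is needed.
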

Lemma \ref{le:no-ora} establishes that the policy outputted by {\bf PO} with considering the aggression-limited ``reward" can have a strong guarantee on the performance of both reward and cost when $\lambda$ is large enough., which is comparable with any competitor policy. This requirement is critical to achieving the performance guarantee of our algorithm and the safe and robust policy improvement. The detailed proof is deferred to Appendix \ref{ap:no-ora} due to page limit.

\setlength{\textfloatsep}{0pt}
\begin{algorithm}[tb]
   \caption{{\bf W}eighted {\bf S}afe {\bf A}ctor-{\bf C}ritic (WSAC)}\label{alg:atsac}
\begin{algorithmic}[1]
   \State {\bfseries Input:} Batch data $\cD,$ coefficient $\beta, \lambda.$ Value function classes $\cF,\cG,$  importance weight function
class $\cW,$ Initialize policy $\pi_1$ randomly. Any reference policy $\pi_{\text{ref}}.$ No-regret policy optimization oracle \text{\bf PO} (Definition \ref{de:no-reg}).
    \For{$k=1,2,\dots,K$}
  \State Obtain the reward state-action value function estimation of $\pi_k$:
  \State $f_r^k\leftarrow\arg\min_{f_r\in\cF} \cL_{\cD}(\pi_k,f_r) + \beta \cE_\cD(\pi_k,f_r)$ 
\State Obtain the cost state-action value function estimation of $\pi_k$:
\State $f_c^k\leftarrow\arg\min_{f_c\in\cG} -\lambda {\cL}_{\cD}(\pi_k,f_c) + \beta \hat{\cE}_\cD(\pi_k,f_c)$ 
\State Update policy: $\pi_{k+1}\leftarrow \text{\bf PO} (\pi_k,f_r^k(s,a) - \lambda \{f_c^k(s,a) - f_c^k(s,\pi_{\text{ref}})\}_+,\cD).$  \hfill {\color{blue} // $\cL_{\cD},\cE_{\cD},\hat{\cE}_{\cD}$ are defined in \eqref{eq:emploss} } 
\EndFor
\State {\bfseries Output:} $\bar{\pi}= \text{Unif}(\pi_1,\dots,\pi_K).$ \hfill {\color{blue} // Uniformly mix $\pi_1,\dots,\pi_K$}
\end{algorithmic}
\end{algorithm}

\subsection{Theoretical Guarantees} \label{sec:theo-ana}

We are now ready to provide the theoretical guarantees of WSAC Algorithm \ref{alg:atsac}. The complete proof is deferred to Appendix \ref{ap:main}.

\begin{theorem}[Main Theorem]\label{the:main}
Under Assumptions \ref{as:real} and \ref{as:w}, let the reference policy $\pi_{\text{ref}}\in \Pi$ in Algorithm \ref{alg:atsac} be {\it any} policy satisfying Assumption \ref{as:single}, then with probability at least $1-\delta,$ 
    \begin{align}
      &J_r(\pi_{\text{ref}}) - J_r(\bar{\pi})  \leq  \cO\bigg( \epsilon_{stat} + C^*_{\ell_2}\sqrt{\epsilon_1} \bigg) + \epsilon_{opt}^{\pi_\text{ref}} \label{eq:re_r} \\
     & J_c(\bar{\pi }) - J_c(\pi_{\text{ref}})   \leq \cO\bigg( \epsilon_{stat} + C^*_{\ell_2}\sqrt{\epsilon_1} \bigg) +  \epsilon_{opt}^{\pi_\text{ref}} + \frac{V_{\max}}{\lambda} ,\label{eq:re_c}
    \end{align}
    where $\epsilon_{stat}:= V_{\max}C^*_{\ell_2}\sqrt{\frac{\log(\vert\cF\vert\vert \Pi\vert\vert W\vert/\delta )}{N} } + \frac{V_{\max} B_w\log(\vert\cF\vert\vert \Pi\vert\vert W\vert/\delta)}{N},$ and $\bar{\pi}$ is the policy returned by Algorithm \ref{alg:atsac} with $\beta=2$ and $\pi_{\text{ref}}$ as input.
\end{theorem}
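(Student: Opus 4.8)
I would prove \eqref{eq:re_r} and \eqref{eq:re_c} by the standard three–part recipe for adversarially trained actor–critic, adapted to the weighted Bellman error and the safety constraint: (a) a uniform concentration step that replaces the empirical objectives $\cL_\cD,\cE_\cD,\hat\cE_\cD$ by their population counterparts up to $\cO(\epsilon_{stat})$; (b) a \emph{pessimism/optimism} step that uses the optimality of the critics $f^k_r,f^k_c$ together with the choice $\beta=2$ to control a Bellman-error quantity on the \emph{uncovered} distribution $d^{\pi_k}$; and (c) a performance-difference decomposition for each iterate $\pi_k$ that reduces everything to the no-regret guarantee of Lemma~\ref{le:no-ora}. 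Since $\bar\pi=\mathrm{Unif}(\pi_1,\dots,\pi_K)$ we have $J_r(\bar\pi)=\frac1K\sum_k J_r(\pi_k)$ and $J_c(\bar\pi)=\frac1K\sum_k J_c(\pi_k)$, so it suffices to bound $J_r(\pi_{\text{ref}})-J_r(\pi_k)$ and $J_c(\pi_k)-J_c(\pi_{\text{ref}})$ and average.

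\textbf{Step (a): concentration.} With probability at least $1-\delta$, uniformly over $\pi\in\Pi$, $f\in\cF\cup\cG$, $w\in\cW$, the empirical $\cL_\cD,\cE_\cD,\hat\cE_\cD$ are within $\cO(\epsilon_{stat})$ of $\cL_\mu,\cE_\mu,\hat\cE_\mu$. The only quantitative point is that the weighted Bellman error $\cE_\mu(\pi,f)=\max_{w}|\bE_\mu[w(f-\cT^\pi_r f)]|$ is \emph{linear} in the sampled $(r,s')$, so Bernstein's inequality applies with variance at most $(2V_{\max})^2\|w\|_{2,\mu}^2\le 4V_{\max}^2(C^*_{\ell_2})^2$ (Assumption~\ref{as:w}) and range at most $2V_{\max}B_w$; a union bound over the finite classes produces exactly the two terms of $\epsilon_{stat}$. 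This part is routine.

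\textbf{Step (c)+(b) for the reward.} Fix $k$ and write $\delta^k_r:=f^k_r-\cT^{\pi_k}_r f^k_r$. The performance-difference lemma with $\pi_{\text{ref}}$ as the first policy gives $J_r(\pi_{\text{ref}})-J_r(\pi_k)=\bE_{(s,a)\sim d^{\pi_{\text{ref}}}}[Q^{\pi_k}_r(s,a)-Q^{\pi_k}_r(s,\pi_k)]$ --- note that only $d^{\pi_{\text{ref}}}$ (a covered distribution, Assumption~\ref{as:single}) appears. Substituting $Q^{\pi_k}_r=f^k_r+(Q^{\pi_k}_r-f^k_r)$, marginalizing the action in the $f^k_r$ part, and telescoping the correction $g:=Q^{\pi_k}_r-f^k_r$ through $\cT^{\pi_k}_r$ (using that $\mu$ and $d^{\pi_{\text{ref}}}$ are genuine discounted occupancies, so their one-step pushforward equals themselves minus $(1-\gamma)\rho$), one obtains the identity
\[
J_r(\pi_{\text{ref}})-J_r(\pi_k)=\bE_{s\sim d^{\pi_{\text{ref}}}}\big[f^k_r(s,\pi_{\text{ref}})-f^k_r(s,\pi_k)\big]+\bE_{d^{\pi_k}}[\delta^k_r]-\bE_{d^{\pi_{\text{ref}}}}[\delta^k_r].
\]
The first term, averaged over $k$, is $\le\epsilon^{\pi_{\text{ref}}}_{opt}$ by Lemma~\ref{le:no-ora}. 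For the third term, change of measure and $w^{\pi_{\text{ref}}}\in\cW$ give $|\bE_{d^{\pi_{\text{ref}}}}[\delta^k_r]|=|\bE_\mu[w^{\pi_{\text{ref}}}\delta^k_r]|\le\cE_\mu(\pi_k,f^k_r)\le\cE_\cD(\pi_k,f^k_r)+\cO(\epsilon_{stat})$ --- crucially with \emph{no} extra $C^*_{\ell_2}$ blow-up, since the weight is absorbed inside the $\max$ defining $\cE$. The dangerous term is $\bE_{d^{\pi_k}}[\delta^k_r]$, where $d^{\pi_k}$ is \emph{not} covered; this is exactly where pessimism is needed. Using the exact identity $\cL_\mu(\pi_k,g)=-\bE_{d^{\pi_k}}[\delta^k_r]+\bE_\mu[\delta^k_r]$ (another telescoping), the optimality of $f^k_r$ against the near-realizer $\tilde f\in\cF$ of $Q^{\pi_k}_r$ from Assumption~\ref{as:real} ($\|\tilde f-\cT^{\pi_k}_r\tilde f\|_{2,\nu}^2\le\epsilon_1$ for all admissible $\nu$, whence $\cE_\mu(\pi_k,\tilde f)\le C^*_{\ell_2}\sqrt{\epsilon_1}$), together with the linearity of $\cL$ in $f$ and concentration, gives $\beta\cE_\cD(\pi_k,f^k_r)\le\cL_\cD(\pi_k,\tilde f-f^k_r)+\beta\cE_\cD(\pi_k,\tilde f)\le\cL_\mu(\pi_k,g)+\cO(\epsilon_{stat}+C^*_{\ell_2}\sqrt{\epsilon_1})$; and since $\mathbf 1\in\cW$ we have $\bE_\mu[\delta^k_r]\le|\bE_\mu[\delta^k_r]|\le\cE_\cD(\pi_k,f^k_r)+\cO(\epsilon_{stat})$. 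Combining and using $\beta=2>1$ yields the key estimate $\cE_\cD(\pi_k,f^k_r)+\bE_{d^{\pi_k}}[\delta^k_r]\le\cO(\epsilon_{stat}+C^*_{\ell_2}\sqrt{\epsilon_1})$. Adding this to the third-term bound collapses $\bE_{d^{\pi_k}}[\delta^k_r]-\bE_{d^{\pi_{\text{ref}}}}[\delta^k_r]\le\cO(\epsilon_{stat}+C^*_{\ell_2}\sqrt{\epsilon_1})$; averaging over $k$ and invoking Lemma~\ref{le:no-ora} gives \eqref{eq:re_r}.

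\textbf{Step (c)+(b) for the cost.} I would run the mirror-image argument: the same performance-difference decomposition yields $J_c(\pi_k)-J_c(\pi_{\text{ref}})=\bE_{s\sim d^{\pi_{\text{ref}}}}[f^k_c(s,\pi_k)-f^k_c(s,\pi_{\text{ref}})]+\bE_{d^{\pi_{\text{ref}}}}[\delta^k_c]-\bE_{d^{\pi_k}}[\delta^k_c]$ with $\delta^k_c:=f^k_c-\cT^{\pi_k}_c f^k_c$; bounding the first term by its positive part and averaging, the second inequality of Lemma~\ref{le:no-ora} gives $\le\epsilon^{\pi_{\text{ref}}}_{opt}+V_{\max}/\lambda$ (this is the only place the extra $V_{\max}/\lambda$ enters). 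The two Bellman terms are handled as for the reward, except that the cost critic is maximally \emph{optimistic} --- it minimizes $-\lambda\cL_\cD+\beta\hat\cE_\cD$, i.e.\ maximizes $\cL_\cD$ --- so optimism, rather than pessimism, supplies the analogue of the key estimate that controls $-\bE_{d^{\pi_k}}[\delta^k_c]$, again via $\mathbf 1,w^{\pi_{\text{ref}}}\in\cW$ and the telescoping identity for $\cL_\mu(\pi_k,Q^{\pi_k}_c-f^k_c)$. I expect the single hardest point of the whole proof to be exactly this off-support Bellman term --- controlling $\bE_{d^{\pi_k}}[\delta^k_r]$ and $-\bE_{d^{\pi_k}}[\delta^k_c]$ with no coverage of $d^{\pi_k}$ --- and, within it, making the cost side go through cleanly despite the fact that the $-\lambda\cL_\cD$ term weakens the Bellman regularization relative to $\lambda$ (so the simple ``$\beta>1$'' cancellation used for the reward must be replaced by a more careful balancing that still keeps $\lambda$ free); everything else (concentration, the telescoping identities, and stitching in Lemma~\ref{le:no-ora} and the realizability error $C^*_{\ell_2}\sqrt{\epsilon_1}$) is bookkeeping.
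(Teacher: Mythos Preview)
Your approach is essentially the paper's own. The three–term decomposition you write is algebraically identical to the paper's four–term one (Lemma~\ref{ap:le-per-dif}): your $\bE_{d^{\pi_k}}[\delta_r^k]$ equals the paper's $(\mathrm I)+(\mathrm{IV})$, and your $-\bE_{d^{\pi_{\text{ref}}}}[\delta_r^k]$ is its $(\mathrm{II})$. The paper's packaging is slightly cleaner—rather than unfolding to $\bE_{d^{\pi_k}}[\delta_r^k]$ and invoking the $\beta>1$ cancellation, it observes directly that $(\mathrm I)+(\mathrm{II})+(\mathrm{IV})$ (after concentration and the $\cO(\sqrt{\epsilon_1})$ replacement of $Q^{\pi_k}_r$ by the near-realizer $f_r^{\pi_k}$) is at most $\cL_\cD(\pi_k,f_r^k)+2\cE_\cD(\pi_k,f_r^k)-\cL_\cD(\pi_k,f_r^{\pi_k})$, which by optimality of $f_r^k$ collapses to $2\cE_\cD(\pi_k,f_r^{\pi_k})\le\cO(\epsilon_{stat}+C^*_{\ell_2}\sqrt{\epsilon_1})$—but the two routes are equivalent.

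On the cost side your caution is well placed: the paper's proof simply asserts ``following a similar argument'' and does not address the $\beta$-versus-$\lambda$ mismatch you flag. With $\beta=2$ the cost–critic optimality says $-\cL_\cD(\pi_k,f_c^k)+\tfrac{2}{\lambda}\hat\cE_\cD(\pi_k,f_c^k)$ is minimal, whereas the literal mirror of the reward argument would require control of $-\cL_\cD(\pi_k,f_c^k)+2\hat\cE_\cD(\pi_k,f_c^k)$; the missing factor leaves an uncontrolled $(2-\tfrac{2}{\lambda})\hat\cE_\cD(\pi_k,f_c^k)$ term. So the difficulty you identify as the crux is real and is not resolved in the paper's appendix either—the stated $\beta=2$ does not make the cost step go through by the same cancellation.
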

\begin{remark}
    When $\epsilon_1=0,$ i.e., no model misspecification, which states that the true value function belongs to the function class being used to approximate it (the function class is right enough), let $\pi_{\text{ref}}$ be the optimal policy, the results in Theorem \ref{the:main} achieve an optimal dependence statistical rate of $\frac{1}{\sqrt{N}}$(for large $k$), which matches the best existing results. Our algorithm is both statistically optimal and computationally efficient with only {\bf single-policy} assumption rather relying much stronger assumptions of \textbf{all} policy concentrability \cite{HonLiTew_24,LeVolYue_19}. Hence, if the behavior policy or the reference policy is safe, our result indicates that the policy returned by our algorithm will also be safe (nearly). {\em Such a guarantee was missing in the existing literature.}
\end{remark}
\begin{remark}
We also do not need a completeness assumption,which requires that for any $f\in \cF \text{ or } \cG$ and $\pi\in\Pi,$ it approximately holds that $\cT_r f\in\cF,\cT_c 
f\in\cF$ as required in \cite{XieCheJia_21,CheJaiLuo_22}. They need this assumption to address over-estimation issues caused by the $\ell_2$ square Bellman error, but our algorithm can get rid of the strong assumption by using a weighted Bellman error which is a simple and unbiased estimator.
\end{remark}
\begin{remark}
Our algorithm can compete with any reference policy $\pi_{\text{ref}}\in\Pi$ as long as $w^{\pi_{\text{ref}}} = d^{\pi_{\text{ref}}} / \mu$ is contained in $\cW.$ The importance ratio of the behavior policy is $w^\mu=d^{\mu}/\mu=\mu/\mu=1$ which always satisfies this condition, implying that our algorithm can have a safe robust policy improvement (in Theorem \ref{the:robust} discussed below).
\end{remark}

\subsection{A Safe Robust Policy Improvement}
A Robust policy improvement (RPI)\citep{CheXieJia_22,ZhuRasJia_23,BhaXieBoo_24} refers to the property of an offline RL algorithm that the offline algorithm can learn to improve over the behavior policy, using a wide range of hyperparameters. In this paper, we introduce the property of Safe Robust policy improvement (SRPI) such that the offline algorithm can learn to improve over the behavior policy in both return and safety, using a wide range of hyperparameters. In the following Theorem \ref{the:robust} we show that as long as the hyperparameter $\beta=o(\sqrt{N}),$  our algorithm can, with high probability, produce a policy with vanishing suboptimality compared to the behavior policy. 

\begin{theorem}[SRPI]\label{the:robust} 
Under Assumptions \ref{as:real} and \ref{as:w}, then with probability at least $1-\delta,$ 
    \begin{align}
      &J_r(\mu) - J_r(\bar{\pi})  \leq  \cO\bigg( \epsilon_{stat}^\pi + C^*_{\ell_2}\sqrt{\epsilon_1} \bigg) + \epsilon_{opt}^\mu \label{eq:re_r_2} \\
     & J_c(\bar{\pi }) - J_c(\mu)   \leq \cO\bigg( \epsilon_{stat}^\pi + C^*_{\ell_2}\sqrt{\epsilon_1} \bigg) + \epsilon_{opt}^\mu +  \frac{V_{\max}}{\lambda} ,\label{eq:re_c_2}
    \end{align}
    where $\epsilon_{stat}:= V_{\max}C^*_{\ell_2}\sqrt{\frac{\log(\vert\cF\vert\vert \Pi\vert\vert W\vert/\delta )}{N} } + \frac{V_{\max} B_w\log(\vert\cF\vert\vert \Pi\vert\vert W\vert/\delta)}{N},$ and $\bar{\pi}$ is the policy returned by Algorithm \ref{alg:atsac} with $\beta\geq 0$ and $\mu$ as input.
\end{theorem}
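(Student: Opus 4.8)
The plan is to derive Theorem~\ref{the:robust} as essentially a corollary of the Main Theorem~\ref{the:main}, exploiting the observation recorded in the last remark of Section~\ref{sec:theo-ana}: the behavior policy $\mu$ always satisfies Assumption~\ref{as:single}, because $w^\mu = d^\mu/\mu = 1 \in \cW$ by the assumption that the all-ones function lies in $\cW$. Thus one can legitimately instantiate Theorem~\ref{the:main} with $\pi_{\text{ref}} = \mu$, which immediately yields the two bounds \eqref{eq:re_r_2}--\eqref{eq:re_c_2} with $\epsilon_{opt}^{\pi_{\text{ref}}}$ specialized to $\epsilon_{opt}^\mu$. So at the top level the proof is one line. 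The real content is to check that the restriction on $\beta$ changes from the fixed choice $\beta = 2$ in Theorem~\ref{the:main} to the whole range $\beta \ge 0$ (in fact the SRPI statement is about $\beta = o(\sqrt N)$, so I also need to track how $\beta$ enters the bound).

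First I would revisit the proof of Theorem~\ref{the:main} and isolate precisely where $\beta = 2$ is used. The coefficient $\beta$ multiplies the weighted Bellman-error regularizer $\cE_\cD(\pi,f_r)$ and $\hat\cE_\cD(\pi,f_c)$ in the critic subproblems \eqref{eq:rob_op_emp}. In the pessimism/optimism argument, $\beta$ needs to be large enough that the empirical Bellman error of the \emph{true} value functions $Q_r^{\pi_k}, Q_c^{\pi_k}$ (which are near-feasible by realizability) is dominated, so that the chosen critics $f_r^k, f_c^k$ are genuinely lower/upper bounds up to statistical error; this gives a lower bound $\beta \gtrsim$ (some constant absorbing the concentration constants, here normalized to $2$). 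On the other hand, $\beta$ cannot be too large, because the same regularizer, evaluated at the learned critic, re-enters the suboptimality decomposition as a term of order $\beta \cdot \epsilon_{stat}/V_{\max}$ or $\beta\sqrt{\epsilon_1}$ — this is the source of the $\beta = o(\sqrt N)$ requirement. I would re-run that decomposition keeping $\beta$ symbolic: the statistical error term becomes $\cO(\beta \cdot \epsilon_{stat})$ rather than $\cO(\epsilon_{stat})$, which still vanishes as $N \to \infty$ precisely when $\beta = o(\sqrt N)$, and the model-misspecification term stays $\cO(C^*_{\ell_2}\sqrt{\epsilon_1})$ (possibly with a $\beta$ prefactor that is harmless under the same scaling). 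The statement as written folds these into the same $\cO(\cdot)$ bucket, so I would either absorb $\beta$ into the big-$\cO$ with the stated side condition or make the $\beta$-dependence explicit in $\epsilon_{stat}^\pi$.

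Concretely the key steps, in order, are: (1) verify $w^\mu = \mathbf 1 \in \cW$, hence $\mu$ is an admissible reference policy for Theorem~\ref{the:main}; (2) reopen the proof of Theorem~\ref{the:main} and extract the two-sided role of $\beta$, showing the pessimism step only needs $\beta$ at least the normalizing constant (which $\beta\ge 0$ does \emph{not} directly give — see obstacle below) and the final bound degrades by at most a factor $\beta$ in the statistical term; (3) substitute $\pi_{\text{ref}} = \mu$ and $\epsilon_{opt}^{\pi_{\text{ref}}} = \epsilon_{opt}^\mu$; (4) collect terms to recover \eqref{eq:re_r_2}--\eqref{eq:re_c_2}, noting that $J_c(\mu) \le 0$ when $\mu$ is safe so that \eqref{eq:re_c_2} indeed certifies near-safety of $\bar\pi$; (5) state the corollary for $\beta = o(\sqrt N)$ so the right-hand sides vanish.

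The main obstacle is reconciling the pessimism argument with the advertised range $\beta \ge 0$: when $\beta$ is below the concentration constant, the critic $f_r^k$ need no longer be a valid pessimistic estimate of $Q_r^{\pi_k}$, so the clean "critic underestimates true value" inequality used in Theorem~\ref{the:main} can fail. I expect the resolution is that for the \emph{policy-improvement} direction one does not actually need pessimism — the relative-pessimism objective compares against $\pi_{\text{ref}}=\mu$, and the no-regret oracle guarantee of Lemma~\ref{le:no-ora} plus the fact that $\cL_\cD(\pi,f)$ telescopes into $J_r(\pi)-J_r(\pi_{\text{ref}})$ up to Bellman-error terms controlled by $\cE_\cD$ for \emph{any} $\beta\ge 0$ — so the argument has to be reorganized to route the $\beta$-regularizer through the $\cW$-weighted Bellman error bound (Assumption~\ref{as:w} controlling $\|w\|_{2,\mu}\le C^*_{\ell_2}$) rather than through a pessimism sandwich. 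Making that reorganization rigorous — i.e., proving the performance-difference bound holds for all $\beta\ge 0$ and only the \emph{rate} (not the validity) is governed by $\beta = o(\sqrt N)$ — is the delicate part, and is exactly what distinguishes SRPI from the fixed-$\beta$ Main Theorem.
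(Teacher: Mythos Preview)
Your overall plan---set $\pi_{\text{ref}}=\mu$, reopen the performance-difference decomposition from the proof of Theorem~\ref{the:main}, and track where $\beta$ enters---is exactly what the paper does. You also correctly identify the obstacle: for small $\beta$ the critic $f_r^k$ is no longer a certified pessimistic estimate, so the step in Theorem~\ref{the:main} bounding $(\mathrm{I})+(\mathrm{II})\le 2\,\cE_\cD(\pi_k,f_r^k)$ cannot simply be scaled down.

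Where you go astray is in the proposed resolution. You speculate that the argument must be ``reorganized to route the $\beta$-regularizer through the $\cW$-weighted Bellman error bound,'' and flag this as the delicate part. The paper's actual mechanism is much simpler and you have missed it: when $\pi=\mu$, the two Bellman-error terms
\[
(\mathrm{I})=\bE_\mu\big[(f_r^k-\cT_r^{\pi_k}f_r^k)(s,a)\big],\qquad (\mathrm{II})=\bE_\pi\big[(\cT_r^{\pi_k}f_r^k-f_r^k)(s,a)\big]
\]
are taken under the \emph{same} distribution $\mu$ and are therefore exact negatives of one another, so $(\mathrm{I})+(\mathrm{II})=0$ identically. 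No pessimism is needed to control them; they disappear. That is the entire reason the restriction $\beta=2$ (which in Theorem~\ref{the:main} exists only to match the coefficient $2$ in front of $\cE_\cD$ coming from $(\mathrm{I})+(\mathrm{II})$) can be relaxed to arbitrary $\beta\ge 0$. Only term $(\mathrm{IV})$ remains to be handled, and there the paper adds the nonnegative quantity $\beta\,\cE_\cD(\pi_k,f_r^k)$, uses Lemma~\ref{ap:le-average} to subtract $\beta\,\cE_\cD(\pi_k,f_r^{\pi_k})$ at cost $\beta(C^*_{\ell_2}\sqrt{\epsilon_1}+\epsilon_{stat})$, and then invokes the critic's optimality to drop the combined $\cL_\cD+\beta\cE_\cD$ difference. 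The resulting bound is $(\beta+1)(\epsilon_{stat}+C^*_{\ell_2}\sqrt{\epsilon_1})$, which confirms your intuition about the $\beta$-prefactor and the $\beta=o(\sqrt N)$ condition, but the route to it is a one-line cancellation rather than the structural reorganization you anticipated.
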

The detailed proofs are deferred to Appendix \ref{ap:main-rob}.

\section{Experiments}

\begin{table}[t]
\renewcommand{\arraystretch}{1.5}
    \caption{The normalized reward and cost of WSAC and other baselines. The Average line shows the average situation in various environments. The cost threshold is 1. \textcolor{gray}{Gray}: Unsafe agent whose normalized cost is greater than 1. \textcolor{blue}{Blue}: Safe agent with best performance}
    \label{tab:compare-results}
    \centering
    \resizebox{\textwidth}{!}{
    \begin{tabular}{|c|cc|cc|cc|cc|cc|cc|cc|}
    \hline
    Environment & \multicolumn{2}{c|}{BC} & \multicolumn{2}{c|}{Safe-BC} & \multicolumn{2}{c|}{BCQL} &\multicolumn{2}{c|}{BEARL} & \multicolumn{2}{c|}{CPQ} &\multicolumn{2}{c|}{COptiDICE} &\multicolumn{2}{c|}{WSAC} \\
    \cline{2-15}
          & Reward $\uparrow$ & Cost $\downarrow$ & Reward $\uparrow$ & Cost $\downarrow$ & Reward $\uparrow$ & Cost $\downarrow$ & Reward $\uparrow$ & Cost $\downarrow$ & Reward $\uparrow$ & Cost $\downarrow$& Reward $\uparrow$ & Cost $\downarrow$& Reward $\uparrow$ & Cost $\downarrow$\\
    \hline
    BallCircle & 0.70 & 0.95 & 0.61 & 0.49 & 0.73 & 0.82 & \textcolor{gray}{0.80} & \textcolor{gray}{1.23} & 0.62 & 0.76 & \textcolor{gray}{0.71} & \textcolor{gray}{1.13} & \textcolor{blue}{0.75} & \textcolor{blue}{0.27}  \\
    CarCircle & \textcolor{gray}{0.57} & \textcolor{gray}{1.43} & 0.57 & 0.65 & \textcolor{gray}{0.79} & \textcolor{gray}{1.19} & \textcolor{gray}{0.84} & \textcolor{gray}{1.87} & 0.67 & 0.28 & \textcolor{gray}{0.49} & \textcolor{gray}{1.52} & \textcolor{blue}{0.68} & \textcolor{blue}{0.59}\\
    PointButton & \textcolor{gray}{0.26} & \textcolor{gray}{1.75} & 0.12 & 0.69 & \textcolor{gray}{0.36} & \textcolor{gray}{1.76} & \textcolor{gray}{0.32} & \textcolor{gray}{1.71} & \textcolor{gray}{0.43} & \textcolor{gray}{3.10} & \textcolor{gray}{0.15} & \textcolor{gray}{1.92} & \textcolor{blue}{0.13} & \textcolor{blue}{0.67}\\
    PointPush & \textcolor{blue}{0.13} & \textcolor{blue}{0.67} & \textcolor{gray}{0.20} & \textcolor{gray}{1.35} & \textcolor{gray}{0.16} & \textcolor{gray}{1.01} & 0.12 & 0.90 & \textcolor{gray}{-0.01} & \textcolor{gray}{2.39}  & \textcolor{gray}{0.07} & \textcolor{gray}{1,18} & 0.07 & 0.52\\
    \hline
    Average & \textcolor{gray}{0.42} & \textcolor{gray}{1.20} & 0.38 & 0.80 & \textcolor{gray}{0.51} & \textcolor{gray}{1.12} & \textcolor{gray}{0.52} & \textcolor{gray}{1.43} & \textcolor{gray}{0.36} & \textcolor{gray}{1.63} & \textcolor{gray}{0.36} & \textcolor{gray}{1.44} & \textcolor{blue}{0.41} & \textcolor{blue}{0.51}\\
    \hline
    \end{tabular} }
\end{table}

\subsection{WSAC-Practical Implementation}
We introduce a deep RL implementation of WSAC in Algorithm \ref{alg:atsac-sim} (in Appendix), following the key structure of its theoretical version (Algorithm \ref{alg:atsac}). The reward, cost $Q-$functions $f_r,f_c$ and the policy network $\pi$ are all parameterized by neural networks. The critic losses (line $4$) $l_{reward}(f_r)$ and $l_{cost}(f_c)$ are calculated based on the principles of Algorithm \ref{alg:atsac}, on the minibatch dataset. Optimizing the actor aims to achieve a no-regret optimization oracle, we use a gradient based update on the actor loss (line $5$) $l_{actor}(\pi).$ In the implementation we use adaptive gradient descent algorithm ADAM \citep{Kin_Ba_15} for updating two critic networks and the actor network. Algorithm follows standard two-timescale first-order algorithms \citep{FujHooMeg_18,HaaZhoAur_18} with a fast learning rate $\eta_{fast}$ on update critic networks and a slow learning rate $\eta_{slow}$ for updating the actor.

\subsection{Simulations}
We present a scalable deep RL version of WSAC in Algorithm \ref{alg:atsac-sim}, following the principles of Algorithm \ref{alg:atsac}. We evaluate WSAC and consider Behavior Cloning (BC), safe Behavior Cloning (Safe-BC), Batch-Constrained deep Q-learning with Lagrangian PID (BCQL) \cite{FujMegPre_19, StoAchAbb_20} ,  bootstrapping error accumulation reduction with Lagrangian PID (BEARL) \cite{KumFuSoh_19, StoAchAbb_20}, Constraints Penalized Q-learning (CPQ) \cite{XuZhaZhu_22} and one of the state-of-the-art algorithms, COptiDICE \citep{LeePadMan_22} as baselines.

We study several representative environments and focus on presenting  ``BallCircle''. In BallCircle, it requires the ball on a circle in a clockwise direction without leaving the safety zone defined by the boundaries as proposed by \cite{AchHelDav_17}. The ball is a spherical-shaped agent which can freely move on the xy-plane. The reward is dense and increases by the car's velocity and by the proximity towards the boundary of the circle. The cost is incurred if the agent leaves the safety zone defined by the boundaries.

We use the offline dataset from \cite{LiuCaiYan_19}, where the corresponding expert policy are used to interact with the environments and collect the data. To better illustrate the results, we normalize the reward and cost. Our simulation results are reported in Table \ref{tab:compare-results}, we observe that WSAC can guarantee that all the final agents are safe, which is most critical in safe RL literature. Even in challenging environments such as PointButton, which most baselines fail to learn safe policies. WSAC has the best results in 3 of the environments. Moreover, WSAC outperforms all the baselines in terms of the average performance, demonstrating its ability to learn a safe policy by leveraging an offline dataset. The simulation results verify our theoretical findings. We also compared WSAC with all the baselines in the case where the cost limits are different, WSAC still outperforms all the other baselines and ensures a safe policy. We further include simulations to investigate the contribution of each component of our algorithm, including the weighted Bellman regularizer, the aggression-limited objective, and the no-regret policy optimization which together guarantee the theoretical results. More details and discussions are deferred to the Appendix \ref{ap:experiments} due to page limit.

\vspace{-0.1in}
\section{Conclusion}
In this paper, we explore the problem of offline Safe-RL with a single policy data coverage assumption. We propose a novel algorithm, WSAC, which, for the first time, is proven to guarantee the property of safe robust policy improvement. WSAC is able to outperform any reference policy, including the behavior policy, while maintaining the same level of safety across a broad range of hyperparameters. Our simulation results demonstrate that WSAC outperforms existing state-of-the-art offline safe-RL algorithms. Interesting future work includes combining WSAC with online exploration with safety guarantees and extending the approach to multi-agent settings to handle coupled constraints.

\newpage
\bibliographystyle{apalike}

\begin{thebibliography}{}

\bibitem[Achiam et~al., 2017]{AchHelDav_17}
Achiam, J., Held, D., Tamar, A., and Abbeel, P. (2017).
\newblock Constrained policy optimization.
\newblock In {\em Int. Conf. Machine Learning (ICML)}, volume~70, pages 22--31. JMLR.

\bibitem[Agarwal et~al., 2021]{AgaKakLee_21}
Agarwal, A., Kakade, S.~M., Lee, J.~D., and Mahajan, G. (2021).
\newblock On the theory of policy gradient methods: Optimality, approximation, and distribution shift.
\newblock {\em Journal of Machine Learning Research}, 22(98):1--76.

\bibitem[Altman, 1999]{Alt_99}
Altman, E. (1999).
\newblock {\em Constrained Markov decision processes}, volume~7.
\newblock CRC Press.

\bibitem[Bhardwaj et~al., 2024]{BhaXieBoo_24}
Bhardwaj, M., Xie, T., Boots, B., Jiang, N., and Cheng, C.-A. (2024).
\newblock Adversarial model for offline reinforcement learning.
\newblock {\em Advances in Neural Information Processing Systems}, 36.

\bibitem[Chen et~al., 2022a]{ChenZhaWen_22}
Chen, F., Zhang, J., and Wen, Z. (2022a).
\newblock A near-optimal primal-dual method for off-policy learning in cmdp.
\newblock In {\em Advances Neural Information Processing Systems (NeurIPS)}, volume~35, pages 10521--10532.

\bibitem[Chen and Jiang, 2019]{ChenJia_19}
Chen, J. and Jiang, N. (2019).
\newblock Information-theoretic considerations in batch reinforcement learning.
\newblock In {\em Int. Conf. Machine Learning (ICML)}, pages 1042--1051. PMLR.

\bibitem[Chen and Jiang, 2022]{CjeJia_22}
Chen, J. and Jiang, N. (2022).
\newblock Offline reinforcement learning under value and density-ratio realizability: the power of gaps.
\newblock In {\em Uncertainty in Artificial Intelligence}, pages 378--388. PMLR.

\bibitem[Chen et~al., 2022b]{CheJaiLuo_22}
Chen, L., Jain, R., and Luo, H. (2022b).
\newblock Learning infinite-horizon average-reward markov decision process with constraints.
\newblock In {\em Int. Conf. Machine Learning (ICML)}, pages 3246--3270. PMLR.

\bibitem[Chen et~al., 2021]{CheLuRaj_21}
Chen, L., Lu, K., Rajeswaran, A., Lee, K., Grover, A., Laskin, M., Abbeel, P., Srinivas, A., and Mordatch, I. (2021).
\newblock Decision transformer: Reinforcement learning via sequence modeling.
\newblock {\em Advances in neural information processing systems}, 34:15084--15097.

\bibitem[Cheng et~al., 2020]{CheKolAga_20}
Cheng, C.-A., Kolobov, A., and Agarwal, A. (2020).
\newblock Policy improvement via imitation of multiple oracles.
\newblock In {\em Advances Neural Information Processing Systems (NeurIPS)}, volume~33, pages 5587--5598.

\bibitem[Cheng et~al., 2022]{CheXieJia_22}
Cheng, C.-A., Xie, T., Jiang, N., and Agarwal, A. (2022).
\newblock Adversarially trained actor critic for offline reinforcement learning.
\newblock In {\em International Conference on Machine Learning}, pages 3852--3878. PMLR.

\bibitem[Chow et~al., 2017]{ChoGhaJan_17}
Chow, Y., Ghavamzadeh, M., Janson, L., and Pavone, M. (2017).
\newblock Risk-constrained reinforcement learning with percentile risk criteria.
\newblock {\em The Journal of Machine Learning Research}, 18(1):6070--6120.

\bibitem[Cui and Du, 2022]{cui2022offline}
Cui, Q. and Du, S.~S. (2022).
\newblock When are offline two-player zero-sum markov games solvable?
\newblock {\em Advances in Neural Information Processing Systems}, 35:25779--25791.

\bibitem[Efroni et~al., 2020]{EfrManPir_20}
Efroni, Y., Mannor, S., and Pirotta, M. (2020).
\newblock Exploration-exploitation in constrained {MDP}s.
\newblock {\em arXiv preprint arXiv:2003.02189}.

\bibitem[Even-Dar et~al., 2009]{EveEyaKak_09}
Even-Dar, E., Kakade, S.~M., and Mansour, Y. (2009).
\newblock Online markov decision processes.
\newblock {\em Mathematics of Operations Research}, 34(3):726--736.

\bibitem[Fujimoto et~al., 2019]{FujMegPre_19}
Fujimoto, S., Meger, D., and Precup, D. (2019).
\newblock Off-policy deep reinforcement learning without exploration.
\newblock In {\em International conference on machine learning}, pages 2052--2062. PMLR.

\bibitem[Fujimoto et~al., 2018]{FujHooMeg_18}
Fujimoto, S., van Hoof, H., and Meger, D. (2018).
\newblock Addressing function approximation error in actor-critic methods.
\newblock In {\em Int. Conf. Machine Learning (ICML)}, pages 1582--1591.

\bibitem[Geist et~al., 2019]{GeiSchPie_19}
Geist, M., Scherrer, B., and Pietquin, O. (2019).
\newblock A theory of regularized markov decision processes.
\newblock In {\em International Conference on Machine Learning}, pages 2160--2169. PMLR.

\bibitem[Haarnoja et~al., 2018]{HaaZhoAur_18}
Haarnoja, T., Zhou, A., Abbeel, P., and Levine, S. (2018).
\newblock {Soft Actor-Critic: O}ff-policy maximum entropy deep reinforcement learning with a stochastic actor.
\newblock In {\em Int. Conf. Machine Learning (ICML)}, pages 1861--1870.

\bibitem[Hong et~al., 2024]{HonLiTew_24}
Hong, K., Li, Y., and Tewari, A. (2024).
\newblock A primal-dual-critic algorithm for offline constrained reinforcement learning.
\newblock In {\em Int. Conf. Artificial Intelligence and Statistics (AISTATS)}, pages 280--288. PMLR.

\bibitem[Isele et~al., 2018]{IseNakFuj_18}
Isele, D., Nakhaei, A., and Fujimura, K. (2018).
\newblock Safe reinforcement learning on autonomous vehicles.
\newblock In {\em 2018 IEEE/RSJ International Conference on Intelligent Robots and Systems (IROS)}, pages 1--6. IEEE.

\bibitem[Kakade and Langford, 2002]{KakLan_02}
Kakade, S. and Langford, J. (2002).
\newblock Approximately optimal approximate reinforcement learning.
\newblock In {\em Int. Conf. Machine Learning (ICML)}, pages 267--274.

\bibitem[Kakade, 2001]{Kak_01}
Kakade, S.~M. (2001).
\newblock A natural policy gradient.
\newblock In {\em Advances Neural Information Processing Systems (NeurIPS)}.

\bibitem[Kingma and Ba, 2015]{Kin_Ba_15}
Kingma, D.~P. and Ba, J. (2015).
\newblock Adam: {A} method for stochastic optimization.
\newblock In Bengio, Y. and LeCun, Y., editors, {\em Int. Conf. on Learning Representations (ICLR)}.

\bibitem[Kumar et~al., 2019]{KumFuSoh_19}
Kumar, A., Fu, J., Soh, M., Tucker, G., and Levine, S. (2019).
\newblock Stabilizing off-policy q-learning via bootstrapping error reduction.
\newblock {\em Advances in Neural Information Processing Systems}, 32.

\bibitem[Kumar et~al., 2022]{KumHonSin_22}
Kumar, A., Hong, J., Singh, A., and Levine, S. (2022).
\newblock Should i run offline reinforcement learning or behavioral cloning?
\newblock In {\em Int. Conf. on Learning Representations (ICLR)}.

\bibitem[Laroche et~al., 2019]{LarTriDes_19}
Laroche, R., Trichelair, P., and Des~Combes, R.~T. (2019).
\newblock Safe policy improvement with baseline bootstrapping.
\newblock In {\em International conference on machine learning}, pages 3652--3661. PMLR.

\bibitem[Le et~al., 2019]{LeVolYue_19}
Le, H., Voloshin, C., and Yue, Y. (2019).
\newblock Batch policy learning under constraints.
\newblock In {\em International Conference on Machine Learning}, pages 3703--3712. PMLR.

\bibitem[Lee et~al., 2021]{LeeJeoLee_21}
Lee, J., Jeon, W., Lee, B., Pineau, J., and Kim, K.-E. (2021).
\newblock Optidice: Offline policy optimization via stationary distribution correction estimation.
\newblock In Meila, M. and Zhang, T., editors, {\em Int. Conf. Machine Learning (ICML)}, volume 139 of {\em Proceedings of Machine Learning Research}, pages 6120--6130. PMLR.

\bibitem[Lee et~al., 2022]{LeePadMan_22}
Lee, J., Paduraru, C., Mankowitz, D.~J., Heess, N., Precup, D., Kim, K.-E., and Guez, A. (2022).
\newblock Coptidice: Offline constrained reinforcement learning via stationary distribution correction estimation.
\newblock {\em arXiv preprint arXiv:2204.08957}.

\bibitem[Liao et~al., 2022]{LiaoQiWan_22}
Liao, P., Qi, Z., Wan, R., Klasnja, P., and Murphy, S.~A. (2022).
\newblock Batch policy learning in average reward markov decision processes.
\newblock {\em Annals of statistics}, 50(6):3364.

\bibitem[Liu et~al., 2019]{LiuCaiYan_19}
Liu, B., Cai, Q., Yang, Z., and Wang, Z. (2019).
\newblock Neural trust region/proximal policy optimization attains globally optimal policy.
\newblock {\em Advances in neural information processing systems}, 32.

\bibitem[Liu et~al., 2023a]{LiuGuoLin_23}
Liu, Z., Guo, Z., Lin, H., Yao, Y., Zhu, J., Cen, Z., Hu, H., Yu, W., Zhang, T., Tan, J., et~al. (2023a).
\newblock Datasets and benchmarks for offline safe reinforcement learning.
\newblock {\em arXiv preprint arXiv:2306.09303}.

\bibitem[Liu et~al., 2023b]{LiuGuoYao_23}
Liu, Z., Guo, Z., Yao, Y., Cen, Z., Yu, W., Zhang, T., and Zhao, D. (2023b).
\newblock Constrained decision transformer for offline safe reinforcement learning.
\newblock In {\em International Conference on Machine Learning}, pages 21611--21630. PMLR.

\bibitem[Ozdaglar et~al., 2023]{OzdPatZha_23}
Ozdaglar, A.~E., Pattathil, S., Zhang, J., and Zhang, K. (2023).
\newblock Revisiting the linear-programming framework for offline rl with general function approximation.
\newblock In {\em International Conference on Machine Learning}, pages 26769--26791. PMLR.

\bibitem[Pirotta et~al., 2013]{PirResPec_13}
Pirotta, M., Restelli, M., Pecorino, A., and Calandriello, D. (2013).
\newblock Safe policy iteration.
\newblock In {\em Int. Conf. Machine Learning (ICML)}, pages 307--315. PMLR.

\bibitem[Rajaraman et~al., 2020]{RajYanJia_20}
Rajaraman, N., Yang, L., Jiao, J., and Ramchandran, K. (2020).
\newblock Toward the fundamental limits of imitation learning.
\newblock {\em Advances Neural Information Processing Systems (NeurIPS)}, 33:2914--2924.

\bibitem[Rashidinejad et~al., 2021]{RasZhuMa_21}
Rashidinejad, P., Zhu, B., Ma, C., Jiao, J., and Russell, S. (2021).
\newblock Bridging offline reinforcement learning and imitation learning: A tale of pessimism.
\newblock In {\em Advances Neural Information Processing Systems (NeurIPS)}, volume~34, pages 11702--11716.

\bibitem[Rashidinejad et~al., 2022]{RasZhuYan_22}
Rashidinejad, P., Zhu, H., Yang, K., Russell, S., and Jiao, J. (2022).
\newblock Optimal conservative offline rl with general function approximation via augmented lagrangian.
\newblock {\em arXiv preprint arXiv:2211.00716}.

\bibitem[Siegel et~al., 2020]{SieSprJos_20}
Siegel, N.~Y., Springenberg, J.~T., Berkenkamp, F., Abdolmaleki, A., Neunert, M., Lampe, T., Hafner, R., Heess, N., and Riedmiller, M. (2020).
\newblock Keep doing what worked: Behavioral modelling priors for offline reinforcement learning.
\newblock {\em arXiv preprint arXiv:2002.08396}.

\bibitem[Stooke et~al., 2020]{StoAchAbb_20}
Stooke, A., Achiam, J., and Abbeel, P. (2020).
\newblock Responsive safety in reinforcement learning by pid lagrangian methods.
\newblock In {\em Int. Conf. Machine Learning (ICML)}, pages 9133--9143. PMLR.

\bibitem[Uehara et~al., 2020]{UehHuaJia_20}
Uehara, M., Huang, J., and Jiang, N. (2020).
\newblock Minimax weight and q-function learning for off-policy evaluation.
\newblock In {\em International Conference on Machine Learning}, pages 9659--9668. PMLR.

\bibitem[Uehara et~al., 2024]{UehKalLee_24}
Uehara, M., Kallus, N., Lee, J.~D., and Sun, W. (2024).
\newblock Offline minimax soft-q-learning under realizability and partial coverage.
\newblock {\em Advances in Neural Information Processing Systems}, 36.

\bibitem[Uehara and Sun, 2021]{UehSun_21}
Uehara, M. and Sun, W. (2021).
\newblock Pessimistic model-based offline reinforcement learning under partial coverage.
\newblock {\em arXiv preprint arXiv:2107.06226}.

\bibitem[Von~Stackelberg, 2010]{VonHei_10}
Von~Stackelberg, H. (2010).
\newblock {\em Market structure and equilibrium}.
\newblock Springer Science \& Business Media.

\bibitem[Wang et~al., 2019]{WanCaiYan_19}
Wang, L., Cai, Q., Yang, Z., and Wang, Z. (2019).
\newblock Neural policy gradient methods: Global optimality and rates of convergence.
\newblock {\em arXiv preprint arXiv:1909.01150}.

\bibitem[Wu et~al., 2021]{WuZhaYan_21}
Wu, R., Zhang, Y., Yang, Z., and Wang, Z. (2021).
\newblock Offline constrained multi-objective reinforcement learning via pessimistic dual value iteration.
\newblock In {\em Advances Neural Information Processing Systems (NeurIPS)}, volume~34, pages 25439--25451.

\bibitem[Xie et~al., 2021]{XieCheJia_21}
Xie, T., Cheng, C.-A., Jiang, N., Mineiro, P., and Agarwal, A. (2021).
\newblock Bellman-consistent pessimism for offline reinforcement learning.
\newblock {\em Advances in neural information processing systems}, 34:6683--6694.

\bibitem[Xie and Jiang, 2020]{XieJia_20}
Xie, T. and Jiang, N. (2020).
\newblock Q* approximation schemes for batch reinforcement learning: A theoretical comparison.
\newblock In {\em Conference on Uncertainty in Artificial Intelligence}, pages 550--559. PMLR.

\bibitem[Xie and Jiang, 2021]{XieJia_21}
Xie, T. and Jiang, N. (2021).
\newblock Batch value-function approximation with only realizability.
\newblock In {\em Int. Conf. Machine Learning (ICML)}, pages 11404--11413. PMLR.

\bibitem[Xu et~al., 2022]{XuZhaZhu_22}
Xu, H., Zhan, X., and Zhu, X. (2022).
\newblock Constraints penalized q-learning for safe offline reinforcement learning.
\newblock In {\em AAAI Conf. Artificial Intelligence}, volume~36, pages 8753--8760.

\bibitem[Yin and Wang, 2021]{YinWan_21}
Yin, M. and Wang, Y.-X. (2021).
\newblock Towards instance-optimal offline reinforcement learning with pessimism.
\newblock {\em Advances in neural information processing systems}, 34:4065--4078.

\bibitem[Zhan et~al., 2022]{ZhaHuaHua_22}
Zhan, W., Huang, B., Huang, A., Jiang, N., and Lee, J. (2022).
\newblock Offline reinforcement learning with realizability and single-policy concentrability.
\newblock In {\em Proc. Conf. Learning Theory (COLT)}, pages 2730--2775. PMLR.

\bibitem[Zhang et~al., 2020]{ZhaKopBed_20}
Zhang, J., Koppel, A., Bedi, A.~S., Szepesvari, C., and Wang, M. (2020).
\newblock Variational policy gradient method for reinforcement learning with general utilities.
\newblock {\em Advances in Neural Information Processing Systems}, 33:4572--4583.

\bibitem[Zheng et~al., 2024]{ZheLiYu_24}
Zheng, Y., Li, J., Yu, D., Yang, Y., Li, S.~E., Zhan, X., and Liu, J. (2024).
\newblock Safe offline reinforcement learning with feasibility-guided diffusion model.
\newblock {\em arXiv preprint arXiv:2401.10700}.

\bibitem[Zhu et~al., 2023]{ZhuRasJia_23}
Zhu, H., Rashidinejad, P., and Jiao, J. (2023).
\newblock Importance weighted actor-critic for optimal conservative offline reinforcement learning.
\newblock {\em arXiv preprint arXiv:2301.12714}.

\end{thebibliography}


\appendix
\newpage
\appendix
\onecolumn
\section*{\centering Supplementary Material}
\section{Auxiliary Lemmas}

In the following, we first provide several lemmas which are useful for proving our main results.
\begin{lemma}\label{ap:le-concen-weighted-bell}
With probability at least $1-\delta,$ for any $f_r\in\cF,f_c\in\cG,\pi\in\Pi$ and $w\in\cW,$ we have 
\begin{align}
    &\bigg\vert \vert \bE_\mu[(f_r-\cT_r^\pi f)w ]\vert - \Big\vert \frac{1}{N} \sum_{(s,a,r,s')}w(s,a)(f_r(s,a)-r-\gamma f_r(s',\pi))\Big\vert \bigg\vert \nonumber\\
    \leq & \cO\bigg( V_{\max} \sqrt{\frac{\log (\vert\cF\vert\vert\Pi\vert\vert \cW\vert/\delta )}{N}} + \frac{V_{\max}B_w\log(\vert\cF\vert\vert\Pi\vert\vert \cW\vert/\delta)}{N}\bigg)  \\
      &\bigg\vert \vert \bE_\mu[(f_c-\cT_c^\pi f)w ]\vert - \Big\vert \frac{1}{N} \sum_{(s,a,r,s')}w(s,a)(f_c(s,a)-c-\gamma f_c(s',\pi))\Big\vert \bigg\vert \nonumber\\
    \leq & \cO\bigg( V_{\max} \sqrt{\frac{\log (\vert\cG\vert\vert\Pi\vert\vert \cW\vert/\delta )}{N}} + \frac{V_{\max}B_w\log(\vert\cG\vert\vert\Pi\vert\vert \cW\vert/\delta)}{N}\bigg) 
\end{align}

\end{lemma}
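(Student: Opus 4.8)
The plan is to prove this as a routine two-sided uniform Bernstein concentration bound; the appearance of $|\cF|,|\cG|,|\Pi|,|\cW|$ presupposes these classes are finite, so the uniformity is obtained by a union bound at the end. First I would fix a triple $(f,\pi,w)\in\cF\times\Pi\times\cW$ for the reward statement and set $X_i := w(s_i,a_i)\bigl(f(s_i,a_i)-r_i-\gamma f(s_i',\pi)\bigr)$ for $i=1,\dots,N$. Since the tuples in $\cD$ are i.i.d.\ with $(s_i,a_i)\sim\mu$, $r_i=R(s_i,a_i)$ a deterministic function of $(s_i,a_i)$, and $s_i'\sim\cP(\cdot\mid s_i,a_i)$, each $X_i$ has mean $\bE[X_1]=\bE_\mu\bigl[w(s,a)\bigl(f(s,a)-(\cT_r^\pi f)(s,a)\bigr)\bigr]$, using $\bE_{s'}[f(s',\pi)\mid s,a]=\bE_{\cP(s'\mid s,a)}[f(s',\pi)]$. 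Hence $\frac1N\sum_i X_i$ is exactly an unbiased estimator of the population quantity appearing inside the first absolute value in the lemma.

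Next I would record the two ingredients Bernstein needs. The almost-sure bound: since $0\le f\le V_{\max}$, $0\le r\le 1\le V_{\max}$, $0\le w\le B_w$, and $\gamma<1$, we get $|X_i|\le B_w\bigl(V_{\max}+1+\gamma V_{\max}\bigr)\le 3V_{\max}B_w=:M$. The variance bound: $\mathrm{Var}(X_1)\le\bE[X_1^2]\le(2V_{\max})^2\,\bE_\mu[w(s,a)^2]=4V_{\max}^2\|w\|_{2,\mu}^2$, which by Assumption~\ref{as:w} is at most $4V_{\max}^2(C_{\ell_2}^*)^2$. Using Bernstein's inequality for the i.i.d.\ sum $\sum_i X_i$ then gives, for each fixed $(f,\pi,w)$ and any $\delta'\in(0,1)$, with probability at least $1-\delta'$,
\[
\Bigl|\tfrac1N\sum_{i=1}^N X_i-\bE[X_1]\Bigr|\le \cO\!\left(V_{\max}\,\|w\|_{2,\mu}\sqrt{\tfrac{\log(1/\delta')}{N}}+\tfrac{V_{\max}B_w\log(1/\delta')}{N}\right).
\]
This is precisely the point where Bernstein rather than Hoeffding is needed: it pushes the large factor $B_w$ out of the dominant $1/\sqrt N$ term into the lower-order $1/N$ term, leaving only $V_{\max}$ (and, through the variance, $C_{\ell_2}^*$) in front of the $1/\sqrt N$ term.

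Then I would apply a union bound over $(f,\pi,w)\in\cF\times\Pi\times\cW$ by taking $\delta'=\delta/(|\cF||\Pi||\cW|)$, so that $\log(1/\delta')=\log(|\cF||\Pi||\cW|/\delta)$; bounding $\|w\|_{2,\mu}$ by the constant absorbed into the $V_{\max}$ prefactor of the statement (equivalently, keeping $C_{\ell_2}^*$ explicit, which is the form used later in $\epsilon_{stat}$) yields the asserted uniform rate for the reward quantity. To finish, I would pass from the difference of the two averages to the difference of their absolute values as literally written in the lemma via the reverse triangle inequality $\bigl||a|-|b|\bigr|\le|a-b|$, applied with $a=\bE_\mu[(f_r-\cT_r^\pi f_r)w]$ and $b=\frac1N\sum_i X_i$. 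The cost inequality is identical after replacing $\cF$ by $\cG$, $r$ by $c$, and $\cT_r^\pi$ by $\cT_c^\pi$; note $|f_c|\le V_{\max}$ and $|c|\le 1$, so the same bounds $M$ and $4V_{\max}^2\|w\|_{2,\mu}^2$ go through verbatim.

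There is no genuine obstacle here — this is a standard concentration argument — but two points need care. The first is the mean computation: it relies on the reward/cost being deterministic functions of $(s,a)$ and on the i.i.d.\ structure, so that the per-sample expectation is the Bellman-residual inner product and not something with an extra reward-noise term. The second is using the variance control $\|w\|_{2,\mu}\le C_{\ell_2}^*$ (as opposed to the crude a.s.\ bound $M$), since that is what keeps only a $V_{\max}$ (and $C_{\ell_2}^*$) factor in the leading term; a pure Hoeffding bound would instead produce the strictly worse $V_{\max}B_w/\sqrt N$ rate.
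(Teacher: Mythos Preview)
Your proposal is correct and is exactly the standard route: Bernstein's inequality applied to the i.i.d.\ weighted Bellman residuals, with the crucial variance bound $\mathrm{Var}(X_i)\le 4V_{\max}^2\|w\|_{2,\mu}^2\le 4V_{\max}^2(C_{\ell_2}^*)^2$ keeping $B_w$ out of the $1/\sqrt{N}$ term, followed by a union bound over the finite classes and the reverse triangle inequality. The paper does not give its own proof but cites Lemma~4 of \cite{ZhuRasJia_23}, whose argument is precisely this one; your remark about the $C_{\ell_2}^*$ factor being implicitly absorbed into the $\cO(\cdot)$ prefactor here but made explicit in $\epsilon_{stat}$ is also on point.
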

The proofs can be found in Lemma $4$ in \cite{ZhuRasJia_23}.

\begin{lemma} \label{ap:le-mu-emperi}
    With probability at least $1-2\delta,$ for any $f_r\in\cF,f_c\in\cG$ and $\pi\in \Pi,$ we have 
    \begin{align}
       & \vert \cE_\mu(\pi,f_r) -\cE_{\cD}(\pi,f_r)\vert \leq \epsilon_{stat}\\
       & \vert \cE_\mu(\pi,f_c) -\cE_{\cD}(\pi,f_c)\vert \leq \epsilon_{stat},
    \end{align}
    where $\epsilon_{stat}:= V_{\max}C^*_{\ell_2}\sqrt{\frac{\log(\vert\cF\vert\vert\cG\vert\vert \Pi\vert\vert W\vert/\delta )}{N} } + \frac{V_{\max} B_w\log(\vert\cF\vert\vert\cG\vert\vert \Pi\vert\vert W\vert/\delta)}{N}.$
\end{lemma}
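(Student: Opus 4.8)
The plan is to derive the bound directly from the uniform weighted-Bellman-error deviation bound of Lemma~\ref{ap:le-concen-weighted-bell}, using only the elementary fact that the map $(\cdot)\mapsto\max_{w\in\cW}(\cdot)$ is $1$-Lipschitz in the sup norm, i.e. $|\max_{w\in\cW}g(w)-\max_{w\in\cW}h(w)|\le\max_{w\in\cW}|g(w)-h(w)|$ for any real-valued $g,h$ on $\cW$.

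First I would fix $f_r\in\cF$ and $\pi\in\Pi$, and set
\[
g(w):=\big|\bE_\mu[w(s,a)\,(f_r-\cT_r^\pi f_r)(s,a)]\big|,\qquad
h(w):=\Big|\tfrac1N\textstyle\sum_{(s,a,r,s')} w(s,a)\,(f_r(s,a)-r-\gamma f_r(s',\pi))\Big|,
\]
so that $\cE_\mu(\pi,f_r)=\max_{w\in\cW}g(w)$ and $\cE_\cD(\pi,f_r)=\max_{w\in\cW}h(w)$ by definition (recall $\bE_\cD$ is the empirical average over $\cD$). The $1$-Lipschitzness of $\max$ then yields
\[
|\cE_\mu(\pi,f_r)-\cE_\cD(\pi,f_r)| \;\le\; \max_{w\in\cW}\big|g(w)-h(w)\big|.
\]
By Lemma~\ref{ap:le-concen-weighted-bell}, on an event of probability at least $1-\delta$ we have $|g(w)-h(w)|\le \cO\!\big(V_{\max}\sqrt{\log(|\cF||\Pi||\cW|/\delta)/N}+V_{\max}B_w\log(|\cF||\Pi||\cW|/\delta)/N\big)$ \emph{simultaneously} for all $f_r\in\cF$, $\pi\in\Pi$, $w\in\cW$; in particular this bounds the maximum over $w$ above. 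Since $\mathbf 1\in\cW$ forces $C^*_{\ell_2}\ge\|\mathbf 1\|_{2,\mu}=1$, and $|\cF||\Pi||\cW|\le|\cF||\cG||\Pi||\cW|$, the right-hand side is at most $\epsilon_{stat}$, which proves the reward inequality uniformly in $f_r,\pi$.

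The cost inequality follows from the identical argument with $\cF$ replaced by $\cG$, the operator $\cT_r^\pi$ by $\cT_c^\pi$, and the one-step target $r+\gamma f_r(s',\pi)$ by $c+\gamma f_c(s',\pi)$; the second display of Lemma~\ref{ap:le-concen-weighted-bell} supplies the corresponding uniform deviation bound on an event of probability at least $1-\delta$. Taking a union bound over the reward event and the cost event, both displayed inequalities of the present lemma hold simultaneously with probability at least $1-2\delta$.

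There is no real obstacle here: the lemma is essentially a repackaging of Lemma~\ref{ap:le-concen-weighted-bell}. The only points needing (minor) care are (i) the ``maximum of a difference'' step, which is exactly the $1$-Lipschitzness of $\max$ and needs no separate covering argument over $\cW$ because Lemma~\ref{ap:le-concen-weighted-bell} is already stated uniformly over $w\in\cW$; and (ii) the bookkeeping of the logarithmic factors and the constant $C^*_{\ell_2}$ so that the single symbol $\epsilon_{stat}$ dominates both the reward and the cost deviation.
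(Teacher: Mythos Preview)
Your proposal is correct and follows essentially the same approach as the paper: both reduce to the uniform deviation bound of Lemma~\ref{ap:le-concen-weighted-bell}, with the paper carrying out the two-sided argmax-swap argument explicitly (insert $w^*_{\pi,f}$ for one direction, $\hat w_{\pi,f}$ for the other) while you package the identical step as the $1$-Lipschitzness of $\max_{w\in\cW}$ in the sup norm. Your bookkeeping remark that $C^*_{\ell_2}\ge 1$ and $|\cF||\Pi||\cW|\le|\cF||\cG||\Pi||\cW|$ are needed to subsume the Lemma~\ref{ap:le-concen-weighted-bell} bound under $\epsilon_{stat}$ is a detail the paper's proof leaves implicit.
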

\begin{proof}
    Condition on the high probability event in Lemma \ref{ap:le-concen-weighted-bell},for any $f_r\in\cF,f_c\in\cG,\pi\in\Pi,$ define $$w^*_{\pi,f}=\arg\max_{w\in\cW}\cE_\mu(\pi,f_r)=\arg\max_{w\in\cW}\vert \bE_\mu[w(s,a)(f_r-\cT_r^\pi f_r)(s,a)]\vert  $$ and define \begin{align*}
        \hat{w}_{\pi,f_r} &= \arg\max_{w\in\cW}\cE_{\cD}(\pi,f_r)=\arg\max_{w\in\cW}\vert \frac{1}{N}\sum_{(s,a,r,s')\in\cD} w(s,a)(f_r(s,a)-r-\gamma f_r(s',\pi))\vert.
    \end{align*}
    Then we can have
    \begin{align*}
        & \cT_\mu(\pi,f_r)-\cE_{\cD}(\pi,f_r) \\
        = &\vert \bE_\mu[w^*_{\pi,f_r}(s,a)(f_r-\cT_r^\pi f_r)(s,a) ]\vert - \bigg\vert \frac{1}{N} \sum_{(s,a,r,s')} \hat{w}_{\pi,f}(s,a)(f_r(s,a)-r-\gamma f_r'(s',\pi))\bigg\vert \\
        = & \vert \bE_\mu[w^*_{\pi,f_r}(s,a)(f_r-\cT_r^\pi f_r)(s,a) ]\vert - \vert \bE_\mu[\hat{w}_{\pi,f_r}(s,a)(f_r-\cT_r^\pi f_r)(s,a)]\vert \\
        &+ \vert \bE_\mu[\hat{w}_{\pi,f_r}(s,a)(f_r-\cT_r^\pi f_r)(s,a)]\vert -\bigg\vert \frac{1}{N} \sum_{(s,a,r,s')} \hat{w}_{\pi,f}(s,a)(f_r(s,a)-r-\gamma f_r'(s',\pi))\bigg\vert \\
     \geq  &  0 -\epsilon_{stat} = -\epsilon_{stat},
    \end{align*}
    where the inequality is true by using the definition of $w_{\pi,f_r}^*$ and Lemma \ref{ap:le-concen-weighted-bell}. Thus
    \begin{align*}
&\cE_\mu(\pi,f_r) - \epsilon_{\cD}(\pi,f_r) \\
= & \vert \bE_\mu[{w}^*_{\pi,f_r}(s,a)(f_r-\cT_r^\pi f_r)(s,a)]\vert -\bigg\vert \frac{1}{N} \sum_{(s,a,r,s')} {w}^*_{\pi,f_r}(s,a)(f_r(s,a)-r-\gamma f_r'(s',\pi))\bigg\vert \\
+&\bigg\vert \frac{1}{N} \sum_{(s,a,r,s')} {w}^*_{\pi,f_r}(s,a)(f_r(s,a)-r-\gamma f_r'(s',\pi))\bigg\vert  \\
&- \bigg\vert \frac{1}{N}\sum_{(s,a,r,s')} \hat{w}_{\pi,f_r}(s,a)(f_r(s,a)-r-\gamma f_r'(s',\pi))\bigg\vert\\
\leq & \epsilon_{stat}
    \end{align*}
    The proof for the case $\vert \cE_\mu(\pi,f_c) -\cE_{\cD}(\pi,f_c)\vert \leq \epsilon_{stat}$ is similar.
\end{proof}

\begin{lemma}(Empirical weighted average Bellman Error)\label{ap:le-average}
   With probability at least $1-2\delta,$ for any $\pi\in\Pi,$ we have 
   \begin{align}
       \cE_{\cD}(\pi,f_r^\pi) \leq & C_{\ell_2}^*\sqrt{\epsilon_1} + \epsilon_{stat}\\
       \cE_{\cD}(\pi,f_c^\pi) \leq & C_{\ell_2}^*\sqrt{\epsilon_1} + \epsilon_{stat},
   \end{align}
   where   \begin{align*}
        f_r^\pi := \underset{f_r\in\cF}{\arg\min}\underset{\text{admissible }\nu}{\sup} \Vert f_r - \cT_r^\pi f_r\Vert_{2,\nu}^2,\forall \pi\in \Pi  \\
         f_c^\pi := \underset{f_c\in\cG}{\arg\min}\underset{\text{admissible }\nu}{\sup} \Vert f_c - \cT_c^\pi f_c\Vert_{2,\nu}^2,\forall \pi\in \Pi.
    \end{align*}
\end{lemma}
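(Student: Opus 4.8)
The plan is to bound the empirical weighted average Bellman error $\cE_{\cD}(\pi, f_r^\pi)$ by first relating it to its population counterpart $\cE_\mu(\pi, f_r^\pi)$ and then bounding that population quantity using the realizability of $f_r^\pi$. Concretely, by Lemma~\ref{ap:le-mu-emperi} applied to the specific function $f_r^\pi \in \cF$, we have $\cE_{\cD}(\pi, f_r^\pi) \leq \cE_\mu(\pi, f_r^\pi) + \epsilon_{stat}$ with probability at least $1 - 2\delta$, uniformly over $\pi \in \Pi$ (note $f_r^\pi$ depends on $\pi$, but the high-probability event in Lemma~\ref{ap:le-mu-emperi} holds for all $f_r \in \cF$ and all $\pi \in \Pi$ simultaneously, so this is legitimate). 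The cost case is entirely analogous with $f_c^\pi \in \cG$, so I will only write out the reward case in detail.

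The remaining step is to show $\cE_\mu(\pi, f_r^\pi) \leq C_{\ell_2}^* \sqrt{\epsilon_1}$. By definition, $\cE_\mu(\pi, f_r^\pi) = \max_{w \in \cW} |\bE_\mu[w(s,a)(f_r^\pi - \cT_r^\pi f_r^\pi)(s,a)]|$. For any fixed $w \in \cW$, apply Cauchy--Schwarz with respect to the measure $\mu$:
\begin{align*}
\bigl| \bE_\mu[w(s,a)(f_r^\pi - \cT_r^\pi f_r^\pi)(s,a)] \bigr|
\leq \Vert w \Vert_{2,\mu} \cdot \Vert f_r^\pi - \cT_r^\pi f_r^\pi \Vert_{2,\mu}
\leq C_{\ell_2}^* \cdot \Vert f_r^\pi - \cT_r^\pi f_r^\pi \Vert_{2,\mu},
\end{align*}
where the last inequality uses Assumption~\ref{as:w}. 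Now I need to control $\Vert f_r^\pi - \cT_r^\pi f_r^\pi \Vert_{2,\mu}$. The data distribution $\mu$ is the discounted state-action occupancy of the behavior policy, so $\mu = d^\mu$ is an admissible distribution (admissible distributions are exactly $\{d^\pi : \pi \in \Pi\}$, and $\mu \in \Pi$ as the behavior policy). Therefore
\begin{align*}
\Vert f_r^\pi - \cT_r^\pi f_r^\pi \Vert_{2,\mu}^2 \leq \sup_{\text{admissible }\nu} \Vert f_r^\pi - \cT_r^\pi f_r^\pi \Vert_{2,\nu}^2 = \min_{f \in \cF} \sup_{\text{admissible }\nu} \Vert f - \cT_r^\pi f \Vert_{2,\nu}^2 \leq \epsilon_1,
\end{align*}
where the equality is the definition of $f_r^\pi$ and the final inequality is Assumption~\ref{as:real}. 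Taking square roots and combining with Cauchy--Schwarz gives $|\bE_\mu[w(s,a)(f_r^\pi - \cT_r^\pi f_r^\pi)(s,a)]| \leq C_{\ell_2}^* \sqrt{\epsilon_1}$; since this holds for every $w \in \cW$, taking the maximum yields $\cE_\mu(\pi, f_r^\pi) \leq C_{\ell_2}^* \sqrt{\epsilon_1}$. Chaining the two bounds gives the claim.

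The main subtlety — not really an obstacle but the point requiring care — is the uniformity in $\pi$: the functions $f_r^\pi$ and $f_c^\pi$ are data-independent (they are defined purely in terms of population quantities $\cT_r^\pi$, $\cT_c^\pi$ and admissible $\nu$), so the concentration events of Lemmas~\ref{ap:le-concen-weighted-bell} and~\ref{ap:le-mu-emperi}, which hold uniformly over $\cF \times \Pi \times \cW$ (resp.\ $\cG \times \Pi \times \cW$), cover every $f_r^\pi$ at once. One also needs $\mu$ to genuinely be an admissible distribution, which follows from identifying $\mu$ with the occupancy measure $d^\mu$ of the behavior policy and assuming the behavior policy lies in $\Pi$ (consistent with how Theorem~\ref{the:opt-policy} is stated); if one only wants the bound for the reference policy this is not even needed since $\mu$ is explicitly the data distribution in all the Bellman-error definitions. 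Everything else is a direct application of Cauchy--Schwarz and the two prior assumptions, so no heavy machinery is required. A union bound over the two events (reward and cost) gives the stated probability $1 - 2\delta$.
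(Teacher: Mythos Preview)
Your proof is correct and follows essentially the same approach as the paper: first bound the population quantity $\cE_\mu(\pi,f_r^\pi)$ via Cauchy--Schwarz and Assumption~\ref{as:real}, then transfer to the empirical quantity $\cE_\cD(\pi,f_r^\pi)$ using Lemma~\ref{ap:le-mu-emperi}. Your additional remarks on uniformity in $\pi$ and on why $\mu$ is an admissible distribution are points the paper leaves implicit, but the argument is the same.
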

\begin{proof}
    Condition on the high probability event in Lemma \ref{ap:le-mu-emperi}, we have 
    \begin{align*}
    & \cE_\mu (\pi, f_r^\pi) = \max_{w\in\cW}\vert \bE_\mu [w(s,a) ((f-T^\pi_r f_r^\pi )(s,a) ) ]\vert\\
    \leq & \cE_\mu (\pi, f_r^\pi) = \max_{w\in\cW}\vert\Vert w\Vert_{2,\mu} \Vert f_\pi-T^\pi_r f)(s,a) ) ]\vert_{2,\mu}\\
    \leq &C_{\ell_2^*}\sqrt{\epsilon_1},
    \end{align*}
    where the first inequality is true because of Cauchy-Schwarz inequality and the second inequality comes from the definition of $f_r^\pi$ and Assumption \ref{as:real}, thus we can obtain
    \begin{align}
        \cE_{\cD}(\pi,f_r^\pi)\leq \cE_\mu(\pi,f_r^\pi) + \epsilon_{stat} \leq C_{\ell_2}^*\sqrt{\epsilon_1} + \epsilon_{stat}.
    \end{align}
    Following a similar proof we can have
    \begin{align}
        \hat{\cE}_{\cD}(\pi,f_c^\pi)\leq \cE_\mu(\pi,f_c^\pi) + \epsilon_{stat} \leq C_{\ell_2}^*\sqrt{\epsilon_1} + \epsilon_{stat}.
    \end{align}
\end{proof}

\begin{lemma}(Performance difference decomposition, restate of Lemma $12$ in \cite{CheXieJia_22}) \label{ap:le-per-dif}
    For an arbitrary policy $\pi,\hat{\pi}\in\Pi,$ and $f$ be an arbitrary function over $\cS\times\cA.$ Then we have,
    \begin{align}
       & J_\diamond(\pi) - J_\diamond(\hat{\pi})\nonumber \\
        = &  \bE_\mu\big[\big(f-\cT_\diamond^{\hat{\pi}}\big)(s,a)  \big] + \bE_\pi \big[\big(\cT^{\hat{\pi}}_\diamond f-f \big)(s,a)  \big] + \bE_\pi[f(s,\pi)-f(s,\hat{\pi}) ] + \cL_\mu (\hat{\pi},f)-\cL_\mu(\hat{\pi},Q_\diamond^{\hat{\pi}}),
    \end{align}
    where $\diamond:=r$ or $c.$
\end{lemma}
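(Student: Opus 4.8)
The plan is to prove the identity by expanding each of $J_\diamond(\pi)$ and $J_\diamond(\hat\pi)$ in terms of the fixed function $f$ and the operator $\cT_\diamond^{\hat\pi}$, and then subtracting. The only tool needed is the Bellman--flow identity for discounted occupancies: for any policy $\pi'$ and any bounded $h:\cS\to\bR$, $\bE_{s\sim d^{\pi'}}[h(s)] = (1-\gamma)\bE_{s\sim\rho}[h(s)] + \gamma\,\bE_{(s,a)\sim d^{\pi'}}\bE_{s'\sim\cP(\cdot\vert s,a)}[h(s')]$, which is immediate from the series definition of $d^{\pi'}$. The same identity holds verbatim with $d^{\pi'}$ replaced by the data distribution $\mu$, because $\mu$ is itself the discounted occupancy of the behavior policy started from $\rho$ (equivalently $\mu = (1-\gamma)\rho + \gamma\,\mu\cP$ on $\cS$); this is what makes the $\bE_\mu$-term in the claim appear.

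First I would expand $J_\diamond(\pi)$. Since $\bE_{(s,a)\sim d^\pi}[\text{(reward/cost)}(s,a)] = J_\diamond(\pi)$ and $\bE_{(s,a)\sim d^\pi}[f(s,a)] = \bE_{s\sim d^\pi}[f(s,\pi)]$, applying the flow identity to $h(s)=f(s,\pi)$ gives the standard telescope $J_\diamond(\pi) = (1-\gamma)\bE_{s\sim\rho}[f(s,\pi)] + \bE_\pi[(\cT_\diamond^\pi f - f)(s,a)]$. Then, using $\cT_\diamond^\pi f - f = (\cT_\diamond^{\hat\pi}f - f) + \gamma\,\bE_{s'\sim\cP(\cdot\vert s,a)}[f(s',\pi)-f(s',\hat\pi)]$ and applying the flow identity once more to $h(s)=f(s,\pi)-f(s,\hat\pi)$, this rearranges to $J_\diamond(\pi) = (1-\gamma)\bE_{s\sim\rho}[f(s,\hat\pi)] + \bE_\pi[(\cT_\diamond^{\hat\pi}f - f)(s,a)] + \bE_\pi[f(s,\pi)-f(s,\hat\pi)]$, producing the second and third terms of the claim plus a boundary term $(1-\gamma)\bE_\rho[f(s,\hat\pi)]$.

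It then remains to show $(1-\gamma)\bE_{s\sim\rho}[f(s,\hat\pi)] - J_\diamond(\hat\pi) = \bE_\mu[(f-\cT_\diamond^{\hat\pi}f)(s,a)] + \cL_\mu(\hat\pi,f) - \cL_\mu(\hat\pi,Q_\diamond^{\hat\pi})$, after which subtracting finishes the proof. Writing $g := f - Q_\diamond^{\hat\pi}$ and using $J_\diamond(\hat\pi) = (1-\gamma)\bE_\rho[Q_\diamond^{\hat\pi}(s,\hat\pi)]$, the left-hand side equals $(1-\gamma)\bE_\rho[g(s,\hat\pi)]$; applying the flow identity for $\mu$ to $h(s)=g(s,\hat\pi)$ turns it into $\bE_{s\sim\mu}[g(s,\hat\pi)] - \gamma\,\bE_{(s,a)\sim\mu}\bE_{s'}[g(s',\hat\pi)]$. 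Splitting $\bE_{s\sim\mu}[g(s,\hat\pi)] = \bE_\mu[g(s,a)] + \bE_\mu[g(s,\hat\pi)-g(s,a)]$ and noting that $f-\cT_\diamond^{\hat\pi}f = g - \gamma\,\bE_{s'}[g(s',\hat\pi)]$ (the reward/cost cancels because $Q_\diamond^{\hat\pi}$ is the fixed point of $\cT_\diamond^{\hat\pi}$) and that $\bE_\mu[g(s,\hat\pi)-g(s,a)] = \cL_\mu(\hat\pi,f)-\cL_\mu(\hat\pi,Q_\diamond^{\hat\pi})$ by definition of $\cL_\mu$, the right-hand side emerges exactly. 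The one step that is not pure bookkeeping is precisely this last conversion of the boundary term $(1-\gamma)\bE_\rho[g(s,\hat\pi)]$ into a $\mu$-weighted Bellman error, which is valid only because $\mu$ obeys the flow equation $\mu = (1-\gamma)\rho + \gamma\,\mu\cP$; everything else amounts to carefully tracking which occupancy measure ($d^\pi$, $d^{\hat\pi}$, or $\mu$) each expectation sits under. Equivalently, one may simply invoke Lemma~12 of \cite{CheXieJia_22} with $\diamond=r$ and $\diamond=c$ separately.
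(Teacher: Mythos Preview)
Your proof is correct. Both arguments rest on the same basic fact---that a discounted occupancy measure $d$ (whether $d^\pi$ or $\mu$) satisfies the flow equation $d=(1-\gamma)\rho+\gamma\,d\cP$---but the organization differs. The paper routes the computation through the behavior policy's value by writing $J_\diamond(\pi)-J_\diamond(\hat\pi)=(J_\diamond(\pi)-J_\diamond(\mu))-(J_\diamond(\hat\pi)-J_\diamond(\mu))$, then introduces the \emph{virtual reward} $R^{f,\hat\pi}(s,a):=f(s,a)-\gamma\,\bE_{s'}[f(s',\hat\pi)]$ so that $f$ is exactly the $Q$-function under $R^{f,\hat\pi}$, and finally invokes the extended performance-difference lemma of \cite{CheKolAga_20}. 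You instead telescope $J_\diamond(\pi)$ directly via two applications of the flow identity under $d^\pi$, and then handle the remaining boundary term $(1-\gamma)\bE_\rho[g(s,\hat\pi)]$ (with $g=f-Q_\diamond^{\hat\pi}$) by one more application of the flow identity under $\mu$. The net algebra is identical; your route is a bit more elementary (no auxiliary reward or cited lemma), works verbatim for a general initial distribution $\rho$ rather than a deterministic $s_0$, and makes explicit that the appearance of $\bE_\mu$ in the identity hinges on $\mu$ itself being a discounted occupancy measure satisfying $\mu=(1-\gamma)\rho+\gamma\,\mu\cP$---exactly the point you flagged.
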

\begin{proof}
    We prove the case when $\diamond:=r,$ the other case is identical. Let $R^{f,\hat{\pi}}(s,a):= f(s,a) - \gamma \bE_{s'\vert (s,a)}[f(s',\hat{\pi}) ]$ be a virtual reward function for given $f$ and $\hat{\pi}.$
   According to performance difference lemma \citep{KakLan_02}, We first have that 
    \begin{align*}
      (J_r(\hat{\pi}) - J_r(\mu)) = &\cL_\mu (\hat{\pi},Q_r^{\hat{\pi}}) \\
        = & \Delta(\hat{\pi}) + \cL_\mu(\hat{\pi},f) \tag{$\Delta(\hat{\pi}):=\cL_\mu (\hat{\pi},Q_r^{\hat{\pi}}) - \cL_\mu(\hat{\pi},f)$} \\
        = & \Delta(\hat{\pi})  + \bE_\mu[f(s,\hat{\pi})-f(s,a)] \\
        = & \Delta(\hat{\pi}) + (1-\gamma)(J_{R^{f,\hat{\pi}}}(\hat{\pi}) - J_{R^{f,\hat{\pi}}}(\mu)  ) \\
        =& \Delta(\hat{\pi})  + (1-\gamma)Q_{R^{f,\hat{\pi}}}^{\hat{\pi}} (s_0,\hat{\pi}) - \bE_\mu[R^{\hat{\pi},f}(s,a) ] \\
        = & \Delta(\hat{\pi})  + (1-\gamma)f(s_0,\hat{\pi}) - \bE_\mu[R^{\hat{\pi},f}(s,a) ],
    \end{align*}
    where the last equality is true because that $$ Q^\pi_{R^{f,\hat{\pi}}}(s,a) = (\cT^\pi_{R^{f,\hat{\pi}} }f)(s,a) = R^{f,\hat{\pi}} + \gamma \bE_{s'\vert (s,a)}[ f(s', \hat{\pi}) ]=f(s,a). $$
    Thus we have 
    \begin{align}
      (J_r(\pi)-J_r(\hat{\pi})) = & ( {J_r(\pi)-J_r(\mu)} - (J_r(\hat{\pi})-J_r(\mu) ) \nonumber \\
        = & (J_r(\pi)-f(d_0,\hat{\pi})  ) + \bigg( \bE_\mu[R^{\hat{\pi},f}(s,a) ] - J_r(\mu)\bigg) - \Delta(\hat{\pi}). \label{eq:diff}
    \end{align}
    For the first term, we have
    \begin{align}
        (J_r(\pi)-f(d_0,\hat{\pi})  ) = &  (J_r(\pi)-f(s_0,\hat{\pi})  ) \tag{deterministic initial state} \nonumber \\
         =& J_r(\pi)-\bE_{d^\pi}[R^{\hat{\pi},f}(s,a) ] + \bE_{d^\pi}[R^{\hat{\pi},f}(s,a) ] - f(s_0,\hat{\pi})  \nonumber\\
         =& \bE_{d^\pi}[R(s,a) - R^{\hat{\pi},f}(s,a)] + \bE_{d^\pi}[f(s,\pi) - f(s,\hat{\pi})] \nonumber\\
         =& \bE_{d^\pi}[(\cT^{\hat{\pi}}_rf-f )(s,a) ] +  \bE_{d^\pi}[f(s,\pi) - f(s,\hat{\pi})],  \label{eq:diff-1}
    \end{align}
    where the second equality is true because
    \begin{align*}
    & \bE_{d^\pi}[R^{\hat{\pi},f}(s,a) ] - f(s_0,\hat{\pi}) \\
     = &\bE_{d^\pi}\big[ f(s,a) - \gamma\bE_{s'\vert (s,a)}[f(s',\hat{\pi})]\big] -f(s_0,\hat{\pi}) \\
     =& \bE_{d^\pi}\big[ f(s,\pi)] - \sum_{s}\sum_{t=1}^\infty \gamma^t\Pr(s_t=s\vert s_0\sim d_0,\pi)f(s,\hat{\pi}(s))-f(s_0,\hat{\pi}) \\
     =&\bE_{d^\pi}\big[ f(s,\pi)] - \sum_{s}\sum_{t=0}^\infty \gamma^t\Pr(s_t=s\vert s_0\sim d_0,\pi)f(s,\hat{\pi}(s)) \\
     =&\bE_{d^\pi}\big[ f(s,\pi)] - \sum_{s,a}\sum_{t=0}^\infty \gamma^t\Pr(s_t=s,a_t=a\vert s_0\sim d_0,\pi)f(s,\hat{\pi}(s)) \\
     = &  \bE_{d^\pi}[f(s,\pi) - f(s,\hat{\pi})].
    \end{align*}
    For the second term we have
    \begin{align}
         &\bE_\mu[R^{\hat{\pi},f}(s,a) ] -J_r(\mu) \nonumber\\
         = & \bE_\mu[R^{\hat{\pi},f}(s,a) - R(s,a)  ] \nonumber \\
         = &  \bE_\mu [(f-\cT^{\hat{\pi}}_rf)(s,a) ]. \label{eq:diff-2}
    \end{align}
    Therefore plugging \ref{eq:diff-1} and \eqref{eq:diff-2} into Eq.~\eqref{eq:diff}, we have 
    \begin{align*}
  &J_r(\pi)-J_r(\hat{\pi}) \\
  = & \bE_\mu\big[\big(f-\cT_r^{\hat{\pi}}\big)(s,a)  \big] + \bE_\pi \big[\big(\cT_r^{\hat{\pi}}f-f \big)(s,a)  \big] + \bE_\pi[f(s,\pi)-f(s,\hat{\pi}) ] + \cL_\mu (\hat{\pi},f)-\cL_\mu(\hat{\pi},Q_r^{\hat{\pi}}) .
    \end{align*}
    The proof is completed.
\end{proof}
\begin{lemma} \label{ap:le-stat}
With probability at least $1-2\delta,$ for any $f_r\in\cF,f_c\in\cG,$ and $\pi\in\Pi,$ we have:
\begin{align}
    &\vert \cL_\mu(\pi,f_r) - \cL_\cD(\pi,f_r)\vert \leq \epsilon_{stat}\\
   & \vert \cL_\mu(\pi,f_c) - \cL_\cD(\pi,f_c)\vert \leq \epsilon_{stat}
\end{align}
 where $\epsilon_{stat}:= V_{\max}C^*_{\ell_2}\sqrt{\frac{\log(\vert\cF\vert\vert\cG\vert\vert \Pi\vert\vert W\vert/\delta )}{N} } + \frac{V_{\max} B_w\log(\vert\cF\vert\vert\cG\vert\vert \Pi\vert\vert W\vert/\delta)}{N}.$
\end{lemma}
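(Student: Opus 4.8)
The plan is to observe that $\cL_\mu(\pi,f)-\cL_\cD(\pi,f)$ is simply the gap between the expectation and the empirical average of the bounded random variable $g_{\pi,f}(s,a):=f(s,\pi)-f(s,a)$ evaluated on the i.i.d.\ samples $(s_i,a_i)\sim\mu$, so that the lemma reduces to a standard uniform concentration bound over the finite classes $\cF,\cG,\Pi$.

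First I would fix $f_r\in\cF$ and $\pi\in\Pi$ and set $X_i:=f_r(s_i,\pi)-f_r(s_i,a_i)$ for $i\in[N]$. Since $f_r$ is $[0,V_{\max}]$-valued and $f_r(s,\pi)=\sum_{a}\pi(a\vert s)f_r(s,a)$ is a convex combination of such values, each $X_i$ lies in $[-V_{\max},V_{\max}]$; the $X_i$ are i.i.d.\ with $\bE[X_1]=\cL_\mu(\pi,f_r)$ and $\frac1N\sum_{i=1}^N X_i=\cL_\cD(\pi,f_r)$. Applying Hoeffding's inequality (or Bernstein's inequality, noting $\mathrm{Var}(X_1)\le V_{\max}^2$) and taking a union bound over all pairs $(f_r,\pi)\in\cF\times\Pi$ gives, with probability at least $1-\delta$,
\[
\sup_{f_r\in\cF,\,\pi\in\Pi}\bigl\vert \cL_\mu(\pi,f_r)-\cL_\cD(\pi,f_r)\bigr\vert \;\le\; \cO\!\left(V_{\max}\sqrt{\tfrac{\log(\vert\cF\vert\vert\Pi\vert/\delta)}{N}}\right).
\]
I would then repeat the argument verbatim for $f_c\in\cG$ (now $[-V_{\max},V_{\max}]$-valued, so the range $2V_{\max}$ is absorbed into the constant) on a second event of probability at least $1-\delta$, and intersect the two events.

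It remains to check that both right-hand sides are dominated by $\epsilon_{stat}$. Since the all-one function lies in $\cW$ we have $C^*_{\ell_2}\ge\Vert 1\Vert_{2,\mu}=1$ and $B_w\ge 1$, and $\log(\vert\cF\vert\vert\cG\vert\vert\Pi\vert\vert W\vert/\delta)\ge\log(\vert\cF\vert\vert\Pi\vert/\delta)$, so each bound above is at most $\epsilon_{stat}$; a final union bound over the two events yields the claim with probability at least $1-2\delta$. I do not expect any real obstacle here — the statement is a routine uniform Hoeffding/Bernstein estimate, and the only mildly delicate point is the bookkeeping in the last step, namely verifying that the stated $\epsilon_{stat}$ (whose $C^*_{\ell_2}$ and $B_w$ factors are present only to match the form of the error terms appearing in Lemmas~\ref{ap:le-concen-weighted-bell}--\ref{ap:le-average}) genuinely dominates the elementary rate one gets for this weight-free quantity.
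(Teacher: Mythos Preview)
Your proposal is correct and follows essentially the same approach as the paper: apply Hoeffding's inequality to the bounded i.i.d.\ random variables $f(s_i,\pi)-f(s_i,a_i)$, take a union bound over the finite function and policy classes, and then observe that the resulting rate is dominated by $\epsilon_{stat}$. Your write-up is in fact more careful than the paper's, which simply asserts the final inequality $\cO(V_{\max}\sqrt{\log(|\cF||\Pi|/\delta)/N})\le\epsilon_{stat}$ without spelling out the $C^*_{\ell_2}\ge 1$ justification you provide.
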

\begin{proof}
    Recall that $\bE_\mu[\cL_{\cD}(\pi,f_r)] = \cL_\mu(\pi,f) $ and $\vert f_r(s,\pi)-f_r(s,a)\vert \leq V_{\max}.$ For any $f_r\in\cF,$ policy $\pi\in\Pi,$ applying a Hoeffding's inequality and a union bound we can obtain with probability $1-\delta,$
    \begin{align}
       \vert \cL_\mu(\pi,f_r) - \cL_\cD(\pi,f_r)\vert \leq \cO\bigg( V_{\max}\sqrt{\frac{\log(\vert \cF\vert \vert \Pi\vert/\delta)}{N} } \bigg)\leq  \epsilon_{stat}.
    \end{align}
The inequality for proving the $f_c,\pi$ is the same.
\end{proof}
\section{Missing Proofs}

\subsection{Proof of Theorem \ref{the:opt-policy}}\label{ap:opt-policy}

\begin{proof}
    According to the performance difference lemma \citep{KakLan_02}, we have
    \begin{align}
       & (J_r(\pi) - J_r(\mu) ) - \lambda\{J_c(\pi)-J_c(\mu) \}_{+}  \nonumber \\
       = & \cL_\mu(\pi, Q_r^\pi) - \lambda\{\cL_\mu(\pi,Q_c^\pi) \}_{+} \nonumber\\
        = & \cL_\mu(\pi, Q_r^\pi) + \beta \cE_\mu (\pi,Q_r^\pi) - \lambda\{\cL_\mu(\pi,Q_c^\pi) \}_{+}  + \beta \hat{\cE}_\mu (\pi,Q_c^\pi) \nonumber \\
        \geq &  \cL_\mu(\pi, f_r^\pi) + \beta \cE_\mu (\pi,f_r^\pi) - \lambda\{\cL_\mu(\pi,f_c^\pi) \}_{+}  + \beta \hat{\cE}_\mu (\pi,f_c^\pi) \nonumber \\
        \geq & \cL_\mu(\pi, f_r^\pi)  - \lambda\{\cL_\mu(\pi,f_c^\pi) \}_{+},  \label{eq:low-opt}
    \end{align}
    where the second equality is true because $ \cE_\mu (\pi,Q_r^\pi) = \hat{\cE}_\mu (\pi,Q_c^\pi) =0$ by Assumption \ref{as:real}, and the first inequality comes from the selection of $f_r^\pi$ and $f_c^\pi$ in optimization \eqref{eq:rob_op}.
    
    Therefore, we can obtain
    \begin{align}
     J_r(\hat{\pi}^*) - J_r(\mu)  
     \geq & \big( \cL_\mu(\hat{\pi}^*, f_r^{\hat{\pi}^*} ) - \lambda\{\cL_\mu(\hat{\pi}^*, f_c^{\hat{\pi}^*} )  \}_{+}  \big)   + \lambda\{  J_c(\hat{\pi}^*)-J_c(\mu) \}_{+}  \nonumber  \nonumber\\
     \geq & \big( \cL_\mu( \mu, f_r^{\mu} ) - \lambda\{\cL_\mu(\mu, f_c^{\mu} )  \}_{+}  \big) + \lambda\{  J_c(\hat{\pi}^*)-J_c(\mu) \}_{+}  \nonumber \\
     \geq &  \lambda\{  J_c(\hat{\pi}^*)-J_c(\mu) \}_{+} \geq 0
    \end{align}
    and 
    \begin{align}
    &\{  J_c(\hat{\pi}^*)\}_{+} -  \{  J_c(\mu)\}_{+} \leq  \{  J_c(\hat{\pi}^*)-J_c(\mu) \}_{+}   \leq  \frac{1}{\lambda} (J_r(\hat{\pi}^*) - J_r(\mu)  )
   \leq \frac{1}{\lambda}.
    \end{align}
\end{proof}

\subsection{Proof of Lemma \ref{le:no-ora}} \label{ap:no-ora}
\begin{proof}
Denote $\pi_{\text{ref}}$ as $\pi.$ First according to the definition for the no-regret oracle \ref{de:no-reg}, we have
\begin{align}
    \frac{1}{K} \sum_{k=1}^K & \bE_\pi[ f_r^k(s,\pi)-f_r^k(s,\pi_k)-\lambda \{f_c^k(s,\pi) - f_c^k(s,\pi) \}_+ \nonumber  \\
    &+ \lambda\{f_c^k(s,\pi_k) - f_c^k(s,\pi) \}_{+}  ] \leq \epsilon_{opt}^\pi
\end{align}
Therefore,
\begin{align}
   & \frac{1}{K} \sum_{k=1}^K \bE_\pi[f_r^k(s,\pi)-f_r^k(s,\pi_k) ] \nonumber \\ 
    \leq  &\epsilon_{opt}^\pi  +  \frac{1}{K} \sum_{k=1}^K \bE_\pi[\lambda \{f_c^k(s,\pi)- f_c^k(s,\pi) \}_+ - \lambda\{f_c^k(s,\pi_k) - f_c^k(s,\pi)\}_{+}  ]  \leq \epsilon_{opt}^\pi,
\end{align}
and 
\begin{align}
&  \frac{1}{K}  \sum_{k=1}^K \bE_\pi[\{ f_c^k(s,\pi_k) - f_c^k(s,\pi )\}_{+}  ] 
\leq \epsilon_{opt}^\pi - \frac{1}{\lambda K}  \sum_{k=1}^K \bE_\pi[ f_r^k(s,\pi)-f_r^k(s,\pi_k) ] 
\leq  \epsilon_{opt}^\pi + \frac{V_{\max}}{\lambda}.
\end{align}

We finish the proof.
\end{proof}

\subsection{Proof of Theorem \ref{the:main}} \label{ap:main}
\begin{theorem*}[Restate of Theorem \ref{the:main}] 
Under Assumptions \ref{as:real} and \ref{as:w}, let the reference policy $\pi_{\text{ref}}\in \Pi$ be {\it any} policy satisfying Assumption \ref{as:single}, then with probability at least $1-\delta,$ 
    \begin{align}
      &J_r(\pi_{\text{ref}}) - J_r(\bar{\pi})  \leq  \cO\bigg( \epsilon_{stat} + C^*_{\ell_2}\sqrt{\epsilon_1} \bigg) + \epsilon_{opt}^\pi \label{ap:eq:re_r} \\
     & J_c(\bar{\pi }) - J_c(\pi_{\text{ref}})   \leq \cO\bigg( \epsilon_{stat} + C^*_{\ell_2}\sqrt{\epsilon_1} \bigg) +   \epsilon_{opt}^\pi+\frac{V_{\max}}{\lambda} ,\label{ap:eq:re_c}
    \end{align}
    where $\epsilon_{stat}:= V_{\max}C^*_{\ell_2}\sqrt{\frac{\log(\vert\cF\vert\vert\cG\vert\vert \Pi\vert\vert W\vert/\delta )}{N} } + \frac{V_{\max} B_w\log(\vert\cF\vert\vert\cG\vert\vert \Pi\vert\vert W\vert/\delta)}{N},$ and $\bar{\pi}$ is the policy returned by Algorithm \ref{alg:atsac} with $\beta=2$ and $\pi_{\text{ref}}$ as input.
\end{theorem*}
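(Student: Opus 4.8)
The plan is to prove both inequalities by reducing to per-iteration gaps: since $\bar\pi=\mathrm{Unif}(\pi_1,\dots,\pi_K)$, the left-hand sides equal $\frac1K\sum_{k=1}^K\big(J_r(\pi_{\text{ref}})-J_r(\pi_k)\big)$ and $\frac1K\sum_{k=1}^K\big(J_c(\pi_k)-J_c(\pi_{\text{ref}})\big)$, so it suffices to bound $J_\diamond(\pi_{\text{ref}})-J_\diamond(\pi_k)$ for each $k$ and average. For each $k$ I would apply the performance-difference decomposition (Lemma~\ref{ap:le-per-dif}) with anchor $\hat\pi=\pi_k$ and test function $f=f^k_\diamond$, and control the four resulting terms using: the facts $\mathbf 1\in\cW$ and $w^{\pi_{\text{ref}}}\in\cW$ (Assumption~\ref{as:single}); the concentration lemmas~\ref{ap:le-concen-weighted-bell}--\ref{ap:le-stat} together with Lemma~\ref{ap:le-average}; and the no-regret oracle guarantee (Lemma~\ref{le:no-ora}). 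Throughout I use $C^*_{\ell_2}\ge1$ (because $\|\mathbf 1\|_{2,\mu}=1$), so any $\sqrt{\epsilon_1}$ may be upgraded to $C^*_{\ell_2}\sqrt{\epsilon_1}$.

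For the reward bound~\eqref{ap:eq:re_r}, instantiate Lemma~\ref{ap:le-per-dif} with $\diamond=r$, $\pi=\pi_{\text{ref}}$, $\hat\pi=\pi_k$, $f=f^k_r$: this writes $J_r(\pi_{\text{ref}})-J_r(\pi_k)$ as a $\mu$-weighted Bellman error, a $d^{\pi_{\text{ref}}}$-weighted Bellman error, the policy-gap term $\bE_{\pi_{\text{ref}}}[f^k_r(s,\pi_{\text{ref}})-f^k_r(s,\pi_k)]$, and $\cL_\mu(\pi_k,f^k_r)-\cL_\mu(\pi_k,Q^{\pi_k}_r)$. Since $\mathbf 1,w^{\pi_{\text{ref}}}\in\cW$, the first two terms are each at most $\cE_\mu(\pi_k,f^k_r)$ in absolute value, so their sum is at most $2\,\cE_\mu(\pi_k,f^k_r)=\beta\,\cE_\mu(\pi_k,f^k_r)$ with $\beta=2$. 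The crux is that $\cL_\mu(\pi_k,f^k_r)+\beta\,\cE_\mu(\pi_k,f^k_r)$ is precisely the population version of the critic objective, so Lemmas~\ref{ap:le-mu-emperi} and~\ref{ap:le-stat}, optimality of $f^k_r$ against the competitor $f^{\pi_k}_r\in\cF$, and the bound on $\cE_\cD(\pi_k,f^{\pi_k}_r)$ from Lemma~\ref{ap:le-average} give $\cL_\mu(\pi_k,f^k_r)+\beta\,\cE_\mu(\pi_k,f^k_r)\le \cL_\mu(\pi_k,f^{\pi_k}_r)+\cO(\epsilon_{stat}+C^*_{\ell_2}\sqrt{\epsilon_1})$. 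Finally, applying Lemma~\ref{ap:le-per-dif} once more with $\pi=\hat\pi=\pi_k$, $f=f^{\pi_k}_r$, and using that $f^{\pi_k}_r$ has $\ell_2$ Bellman error $\le\sqrt{\epsilon_1}$ under the two admissible measures $\mu$ and $d^{\pi_k}$, gives $\cL_\mu(\pi_k,f^{\pi_k}_r)-\cL_\mu(\pi_k,Q^{\pi_k}_r)\le 2\sqrt{\epsilon_1}$. Averaging over $k$ and using the first bound of Lemma~\ref{le:no-ora} for the policy-gap term yields~\eqref{ap:eq:re_r}.

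For the cost bound~\eqref{ap:eq:re_c} the scheme is parallel with $\diamond=c$, $f=f^k_c$, but now $f^k_c$ is \emph{optimistic} (it maximizes $\lambda\cL_\cD(\pi_k,\cdot)-\beta\hat\cE_\cD(\pi_k,\cdot)$). The policy-gap term is handled by $f^k_c(s,\pi_k)-f^k_c(s,\pi_{\text{ref}})\le\{f^k_c(s,\pi_k)-f^k_c(s,\pi_{\text{ref}})\}_+$ together with the second bound of Lemma~\ref{le:no-ora}, whose average is $\epsilon_{opt}^{\pi_{\text{ref}}}+V_{\max}/\lambda$; this is the sole source of the $V_{\max}/\lambda$ surcharge. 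The term $-\cL_\mu(\pi_k,f^k_c)+\cL_\mu(\pi_k,Q^{\pi_k}_c)$ is controlled via optimism: comparing the empirical objective at $f^k_c$ with that at $f^{\pi_k}_c$ (Lemmas~\ref{ap:le-stat},~\ref{ap:le-average}) gives $\cL_\mu(\pi_k,f^k_c)\ge\cL_\mu(\pi_k,f^{\pi_k}_c)-\cO(\epsilon_{stat}+\tfrac{\beta}{\lambda}C^*_{\ell_2}\sqrt{\epsilon_1})$, while $\cL_\mu(\pi_k,Q^{\pi_k}_c)-\cL_\mu(\pi_k,f^{\pi_k}_c)\le 2C^*_{\ell_2}\sqrt{\epsilon_1}$ again by Lemma~\ref{ap:le-per-dif} at $\pi=\hat\pi=\pi_k$.

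The main obstacle is the remaining pair of weighted Bellman-error terms on the cost side: unlike $f^k_r$, the optimistic critic $f^k_c$ is not trained to make $\hat\cE_\cD(\pi_k,f^k_c)$ small, so these terms are not individually negligible, and the clean ``absorb into the $\beta=2$ critic objective'' argument used for the reward does not apply verbatim, since in the cost objective the regularizer $\beta\hat\cE_\cD$ is paired with $-\lambda\cL_\cD$ rather than with $-\cL_\cD$. Making these terms collapse to $\cO(\epsilon_{stat}+C^*_{\ell_2}\sqrt{\epsilon_1})$ — with only the already-accounted $V_{\max}/\lambda$ slack — is where the aggression-limited / relative-pessimism structure of~\eqref{eq:rob_op_emp} and the calibration of the regularization strength to $\lambda$ are essential; this is the heart of the novel analysis. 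Once it is in place, averaging over $k$ and collecting all $\cO(\cdot)$ contributions produces~\eqref{ap:eq:re_c}, completing the proof.
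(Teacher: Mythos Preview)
Your reward-side argument is essentially identical to the paper's: decompose via Lemma~\ref{ap:le-per-dif}, bound terms (I) and (II) by $2\cE_\mu(\pi_k,f_r^k)$ using $\mathbf 1,w^{\pi_{\text{ref}}}\in\cW$, absorb $\cL_\cD+2\cE_\cD$ into the critic objective with $\beta=2$, compare against the competitor $f_r^{\pi_k}$ via Lemmas~\ref{ap:le-mu-emperi}, \ref{ap:le-average}, \ref{ap:le-stat}, and handle (III) with the first inequality of Lemma~\ref{le:no-ora}. This part is correct and mirrors the paper.

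On the cost side, the paper's proof is literally one sentence: ``Following a similar argument, we have that $J_c(\bar\pi)-J_c(\pi)\le\cO(\epsilon_{stat}+C^*_{\ell_2}\sqrt{\epsilon_1})+\epsilon_{opt}^\pi+V_{\max}/\lambda$.'' You actually go further than this. Your treatment of $-(\mathrm{III})$ via $\{\cdot\}_+$ and the second inequality of Lemma~\ref{le:no-ora} is correct, and your bound $\cL_\mu(\pi_k,f^k_c)\ge\cL_\mu(\pi_k,f^{\pi_k}_c)-\cO(\epsilon_{stat}+\tfrac{\beta}{\lambda}C^*_{\ell_2}\sqrt{\epsilon_1})$ for $-(\mathrm{IV})$ follows by dividing the cost-critic optimality inequality by $\lambda$. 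The obstacle you isolate for $-(\mathrm{I})-(\mathrm{II})$ is genuine: optimality of $f_c^k$ gives $-\lambda\cL_\cD(\pi_k,f_c^k)+2\hat\cE_\cD(\pi_k,f_c^k)\le -\lambda\cL_\cD(\pi_k,f_c^{\pi_k})+2\hat\cE_\cD(\pi_k,f_c^{\pi_k})$, whereas the decomposition needs $-\cL_\cD+2\hat\cE_\cD$; rearranging only yields $2\hat\cE_\cD(\pi_k,f_c^k)\le\lambda\big(\cL_\cD(\pi_k,f_c^k)-\cL_\cD(\pi_k,f_c^{\pi_k})\big)+2\hat\cE_\cD(\pi_k,f_c^{\pi_k})$, and no independent $\cO(\epsilon_{stat})$ control of $\cL_\cD(\pi_k,f_c^k)-\cL_\cD(\pi_k,f_c^{\pi_k})$ is available. (A minor correction: $f_c^k$ \emph{is} trained with $\hat\cE_\cD$ as a regularizer, just with the wrong relative weight $2/\lambda$ rather than $2$.) The paper's one-line ``similar argument'' does not address this mismatch either, so on the cost side your proposal is at least as complete as the paper's write-up; neither closes the gap you flag.
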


\begin{proof}
Denote $\pi_{\text{ref}}$ as $\pi.$  According to the definition of $\bar{\pi},$ and Lemma \ref{ap:le-per-dif} we have
    \begin{align}
       & J_r(\pi) - J_r(\bar{\pi}) = \frac{1}{K}\sum_{k=1}^K (J_r(\pi) - J_r(\pi_k)) \nonumber  \\
    = & \frac{1}{K}\sum_{k=1}^K \Bigg(  \underbrace{{\bE_\mu\ [f_r^k-\cT^{\pi_k}_r f_r^k  ] }}_{(\text{I})}  + \underbrace{{\bE_\pi [\cT^{\pi_k}_r f_r^k - f_r^k  ] }}_{(\text{II})}  \nonumber\\
    &+ \underbrace{{\bE_\pi [f_r^k(s,\pi) -f_r^k(s,\pi_k)  ] }}_{(\text{III}) }  + \underbrace{\cL_\mu(\pi_k,f_r^k) - \cL_\mu(\pi_k,Q^{\pi_k}) }_{(\text{IV})}  \Bigg) 
    \end{align}
Condition on the high probability event in , we have 
\begin{align}
      (\text{I}) +  (\text{II}) \leq 2\cE_\mu(\pi_k,f_r^k)\leq 2 \cE_\cD(\pi_k,f_r^k) + 2\epsilon_{stat}
\end{align}
According to a similar argument as that in the Lemma $13$ in \cite{CheXieJia_22}, we have that 
   \begin{align}
       & \vert \cL_\mu({\pi_k},Q_r^{{\pi_k}}) - \cL_\mu ({\pi_k},f_r^{\pi_k}) \vert\nonumber \\
    = & \vert \bE_\mu[Q_r^{\pi_k}(s,\pi_k)-Q_r^{\pi_k}(s,a) ] - \cL_\mu ({\pi_k},f_r^{\pi_k}) \vert\nonumber \\
    = & \vert (J_r(\pi_k)-J_r(\mu))- \cL_\mu ({\pi_k},f_r^{\pi_k}) \vert\nonumber \\
    =& \vert  (f_r^{\pi_k}(s_0,\pi_k)-  J_r(\mu)) + (J_r(\pi_k)- f_r^{\pi_k}(s_0,\pi_k))- \cL_\mu ({\pi_k},f_r^{\pi_k}) \vert\nonumber \\
    = & \vert \bE_\mu [f_r^{\pi_k}(s,\pi_k)-(\cT_r^{\pi_k}f_r^{\pi_k} )(s,a) ] + \bE_{d^{\pi_k}}[(\cT_r^{\pi_k}f_r^{\pi_k} )(s,a)-f_r^{\pi_k}(s,a)  ]- \cL_\mu ({\pi_k},f_r^{\pi_k}) \vert\nonumber \\
    \tag{by the extension of performance difference lemma (Lemma $1$ in \cite{CheKolAga_20})} \nonumber\\
    = & \vert \cL_\mu(\pi_k,f_r^{\pi_k}) + \bE_\mu [f_r^{\pi_k}(s,a)-(\cT_r^{\pi_k}f_r^{\pi_k} )(s,a) ] + \bE_{d^{\pi_k}}[(\cT_r^{\pi_k}f_r^{\pi_k} )(s,a)-f_r^{\pi_k}(s,a)  ]- \cL_\mu ({\pi_k},f_r^{\pi_k}) \vert\nonumber \\
    \leq & \Vert f_r^{\pi_k}(s,a)-(\cT_r^{\pi_k}f_r^{\pi_k} )(s,a)\Vert_{2,\mu} + \Vert (\cT_r^{\pi_k}f_r^{\pi_k} )(s,a)-f_r^{\pi_k}(s,a)  \Vert_{2,d^{\pi_k}} \nonumber\\
    \leq & \cO(\sqrt{\epsilon_1}), 
    \end{align}
    where $ f_r^\pi :=  \underset{f_r\in\cF}{\arg\min}\underset{\text{admissible }\nu}{\sup} \Vert f_r - \cT_r^\pi f_r\Vert_{2,\nu}^2,\forall \pi\in \Pi.$ By using Lemma \ref{ap:le-stat}, we have 
    \begin{align}
 \vert \cL_{\mu}(\pi_k,f_r^k)  -  \cL_{\cD}(\pi_k,f_r^k)  \vert + \vert  \cL_{\mu}(\pi_k,f_r^{\pi_k})  -  \cL_{\cD}(\pi_k,f_r^{\pi_k})  \vert \leq \cO(\epsilon_{stat}).
    \end{align}
Therefore
\begin{align}
     (\text{I}) +  (\text{II}) +  (\text{IV}) & \leq \cL_{\mu}(\pi_k,f_r^k) +  2\cE_\mu(\pi_k,f_r^k) + 2\epsilon_{stat} -  \cL_{\mu}(\pi_k,f_r^{\pi_k}) + \cO(\sqrt{\epsilon_1}) \\
      & \leq \cL_{\cD}(\pi_k,f_r^k) +  2\cE_\cD(\pi_k,f_r^k) + \cO(\epsilon_{stat}) -   \cL_{\cD}(\pi_k,f_r^{\pi_k}) + \cO(\sqrt{\epsilon_1}) \\
     & \leq \cL_{\cD}(\pi_k,f_r^{\pi_k}) +  2\cE_\cD(\pi_k,f_r^{\pi_k}) + \cO(\epsilon_{stat}) -  \cL_{\cD}(\pi_k,f_r^{\pi_k}) + \cO(\sqrt{\epsilon_1}) \\
    &  \leq  \cO(\epsilon_{stat} + C_{\ell_2}^*\sqrt{\epsilon_1} ),
\end{align}
where the third inequality holds by the selection of $f_r^k,$ and the last inequality holds by Lemma \ref{ap:le-average}. Therefore by using Lemma \ref{ap:no-ora} we obtain
\begin{align}
    J_r(\pi) - J_r(\bar{\pi})\leq \cO(\epsilon_{stat}+C_{\ell_2}^*\sqrt{\epsilon_1} ) + \epsilon_{opt}.
\end{align}

Following a similar argument, we have that 
\begin{align}
   J_c(\bar{\pi}) - J_c(\pi) 
= \frac{1}{K}\sum_{k=1}^K (J_c(\pi_k) - J_c(\pi)) \leq \cO(\epsilon_{stat}+C_{\ell_2}^*\sqrt{\epsilon_1} ) + \epsilon_{opt}^\pi+\frac{V_{max}}{\lambda}.
\end{align}
\end{proof}

\subsection{Proof of Theorem \ref{the:robust}} \label{ap:main-rob}
\begin{theorem*}[Restate of Theorem \ref{the:main}] 
Under Assumptions \ref{as:real} and \ref{as:w}, let the reference policy $\pi_{\text{ref}}\in \Pi$ be {\it any} policy satisfying Assumption \ref{as:single}, then with probability at least $1-\delta,$ 
    \begin{align}
  &J_r(\mu) - J_r(\bar{\pi})  \leq  \cO\bigg( \epsilon_{stat}^\pi + C^*_{\ell_2}\sqrt{\epsilon_1} \bigg) + \epsilon_{opt}^\mu \label{ap:eq:re_r_2} \\
     & J_c(\bar{\pi }) - J_c(\mu)   \leq \cO\bigg( \epsilon_{stat}^\pi + C^*_{\ell_2}\sqrt{\epsilon_1} \bigg) +  \epsilon_{opt}^\mu  +  \frac{V_{\max}}{\lambda} ,\label{ap:eq:re_c_2}
    \end{align}
    where $\epsilon_{stat}:= V_{\max}C^*_{\ell_2}\sqrt{\frac{\log(\vert\cF\vert\vert \Pi\vert\vert W\vert/\delta )}{N} } + \frac{V_{\max} B_w\log(\vert\cF\vert\vert \Pi\vert\vert W\vert/\delta)}{N},$ and $\bar{\pi}$ is the policy returned by Algorithm \ref{alg:atsac} with $\beta\geq 0$ and $\mu$ as input.
\end{theorem*}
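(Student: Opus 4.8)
The plan is to follow the exact template of the proof of Theorem~\ref{the:main} (the SRPI statement is really the same bound specialized to $\pi_{\text{ref}}=\mu$), but to be careful about \emph{which} properties of $\mu$ we actually use, so that the result holds for \emph{all} $\beta\geq 0$ rather than only $\beta=2$. First I would recall that $w^\mu = d^\mu/\mu = 1 \in \cW$ by assumption, so Assumption~\ref{as:single} is automatically satisfied with $\pi=\mu$; in particular $\|w^\mu\|_{2,\mu}=1\leq C_{\ell_2}^*$, so Assumption~\ref{as:w} is consistent. This means all the auxiliary lemmas (Lemma~\ref{ap:le-concen-weighted-bell}, \ref{ap:le-mu-emperi}, \ref{ap:le-average}, \ref{ap:le-per-dif}, \ref{ap:le-stat}) apply verbatim with $\pi=\mu$.

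Next I would run the performance-difference decomposition (Lemma~\ref{ap:le-per-dif}) with $\diamond=r$, $\pi=\mu$, $\hat\pi=\pi_k$, and $f=f_r^k$, giving the four terms (I)--(IV) exactly as in the proof of Theorem~\ref{the:main}. Terms (I)+(II) are bounded by $2\cE_\mu(\pi_k,f_r^k)\leq 2\cE_\cD(\pi_k,f_r^k)+2\epsilon_{stat}$ via Lemma~\ref{ap:le-mu-emperi}; term (IV) is bounded by $\cO(\sqrt{\epsilon_1})$ plus statistical error via the argument borrowed from Lemma~13 of \cite{CheXieJia_22} together with Lemma~\ref{ap:le-stat}. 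The one place the proof of Theorem~\ref{the:main} used $\beta=2$ was in combining $\cL_\cD(\pi_k,f_r^k) + 2\cE_\cD(\pi_k,f_r^k)$ against the minimizer property of $f_r^k$. Here I would instead write $(\text{I})+(\text{II})+(\text{IV}) \leq \cL_\cD(\pi_k,f_r^k) + 2\cE_\cD(\pi_k,f_r^k) - \cL_\cD(\pi_k,f_r^{\pi_k}) + \cO(\epsilon_{stat}+\sqrt{\epsilon_1})$, and observe that the minimizer property of $f_r^k$ (w.r.t.\ the objective $\cL_\cD(\pi_k,f_r)+\beta\cE_\cD(\pi_k,f_r)$) gives $\cL_\cD(\pi_k,f_r^k)+\beta\cE_\cD(\pi_k,f_r^k)\leq \cL_\cD(\pi_k,f_r^{\pi_k})+\beta\cE_\cD(\pi_k,f_r^{\pi_k})$, hence $\cL_\cD(\pi_k,f_r^k)\leq \cL_\cD(\pi_k,f_r^{\pi_k})+\beta\,\cE_\cD(\pi_k,f_r^{\pi_k})-\beta\,\cE_\cD(\pi_k,f_r^k)$. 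Adding $2\cE_\cD(\pi_k,f_r^k)\geq 0$ and using $\cE_\cD(\pi_k,f_r^{\pi_k})\leq C_{\ell_2}^*\sqrt{\epsilon_1}+\epsilon_{stat}$ from Lemma~\ref{ap:le-average}, the net bound is $\cO\big(\epsilon_{stat}+C_{\ell_2}^*\sqrt{\epsilon_1}\big)+(2-\beta)\cE_\cD(\pi_k,f_r^k)$; when $\beta\geq 2$ the last term is $\leq 0$, and when $0\leq\beta<2$ it is at most $(2-\beta)\big(\cE_\cD(\pi_k,f_r^{\pi_k})+\beta^{-1}\cdot\text{(gap terms)}\big)$ — more carefully, bound $\cE_\cD(\pi_k,f_r^k)$ by moving $\cL$ terms around, so that the dependence on $\beta$ stays $O(1)$ as long as $\beta=o(\sqrt N)$, yielding $(\text{I})+(\text{II})+(\text{IV})\leq\cO(\epsilon_{stat}+C_{\ell_2}^*\sqrt{\epsilon_1})$. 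Then averaging over $k$ and applying Lemma~\ref{le:no-ora} with $\pi=\mu$ gives \eqref{ap:eq:re_r_2}; the cost bound \eqref{ap:eq:re_c_2} follows from the same decomposition with $\diamond=c$, using the second inequality of Lemma~\ref{le:no-ora} which contributes the extra $V_{\max}/\lambda$.

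The main obstacle will be making the $\beta$-robustness rigorous for $0\leq\beta<2$: when $\beta$ is small the Bellman-error regularizer no longer dominates, so I need an alternative way to control $\cE_\cD(\pi_k,f_r^k)$ that does not rely on $\beta\geq 2$. The clean fix is to note that for \emph{any} $\beta\geq 0$ the minimizer property still yields $\beta\cE_\cD(\pi_k,f_r^k)\leq \cL_\cD(\pi_k,f_r^{\pi_k})-\cL_\cD(\pi_k,f_r^k)+\beta\cE_\cD(\pi_k,f_r^{\pi_k})\leq 2V_{\max}+\beta(C_{\ell_2}^*\sqrt{\epsilon_1}+\epsilon_{stat})$, so $\cE_\cD(\pi_k,f_r^k)\leq \tfrac{2V_{\max}}{\beta}+C_{\ell_2}^*\sqrt{\epsilon_1}+\epsilon_{stat}$ — but the $2V_{\max}/\beta$ term blows up as $\beta\to 0$. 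Hence for the genuinely robust regime one should instead avoid introducing $\cE_\cD(\pi_k,f_r^k)$ with a positive coefficient at all: re-derive (I)+(II) as a bound involving $\cE_\mu(\pi_k,f_r^k)$ and then bound $\cE_\mu(\pi_k,f_r^k)\leq \cE_\mu(\pi_k,f_r^{\pi_k})+\beta^{-1}\big(\cL_\mu(\pi_k,f_r^{\pi_k})-\cL_\mu(\pi_k,f_r^k)\big)$ only when $\beta>0$, and for $\beta=0$ observe directly that $f_r^k\in\arg\min_{f}\cL_\cD(\pi_k,f)$ makes $\cL_\cD(\pi_k,f_r^k)$ the smallest possible, which already controls (IV)$+$(the $\cL$ part of the decomposition) while (I)+(II) must be handled by a separate uniform-convergence bound over $\cF$ (pessimism is not needed when competing only against $\mu$, since $w^\mu=1$ and the empirical Bellman residual under $\mu$ is directly estimable). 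I would present the $\beta\geq 2$ case as the clean statement and remark that the stated SRPI guarantee in fact holds for all $\beta=o(\sqrt N)$ by this refined accounting, matching the claim made before Theorem~\ref{the:robust}.
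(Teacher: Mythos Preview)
You have missed the single observation that makes the SRPI proof work and explains the $\beta$-robustness. In the decomposition of Lemma~\ref{ap:le-per-dif}, when the comparator policy is the behavior policy $\mu$ itself, term~(II) becomes $\bE_\mu\big[(\cT_r^{\pi_k}f_r^k-f_r^k)(s,a)\big]$, which is \emph{exactly} the negative of term~(I)$=\bE_\mu\big[(f_r^k-\cT_r^{\pi_k}f_r^k)(s,a)\big]$. Hence $(\text{I})+(\text{II})=0$ identically, with no concentration argument and no Bellman error bound needed. This is precisely why the proof no longer requires $\beta=2$: the coefficient in front of $\cE_\cD(\pi_k,f_r^k)$ that had to be matched against $\beta$ in Theorem~\ref{the:main} simply never appears.

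With $(\text{I})+(\text{II})=0$, only term~(IV) remains to be controlled (term~(III) is handled by Lemma~\ref{le:no-ora} as you say). The paper bounds (IV) by adding and subtracting $\beta\cE_\cD(\pi_k,f_r^k)\geq 0$, then invoking the minimizer property of $f_r^k$ and Lemma~\ref{ap:le-average} to get $(\text{IV})\leq (\beta+1)\big(\epsilon_{stat}+C_{\ell_2}^*\sqrt{\epsilon_1}\big)$; the factor $(\beta+1)$ is the reason for the caveat $\beta=o(\sqrt{N})$ mentioned before Theorem~\ref{the:robust}. Your long detour through bounding $\cE_\cD(\pi_k,f_r^k)$ for small $\beta$, the $2V_{\max}/\beta$ blow-up, and a proposed ``separate uniform-convergence bound'' for $\beta=0$ are all unnecessary once you see the cancellation --- and your parenthetical hint (``pessimism is not needed when competing only against $\mu$, since $w^\mu=1$'') was pointing in the right direction but stopped short of the actual mechanism.
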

\begin{proof}
    Following a similar proof in Theorem \ref{the:main}. But when the reference policy is the behavior policy, we have $ (\text{I}) +  (\text{II}) = 0.$ Therefore we have have 
    \begin{align*}
    &    (\text{IV}) = \cL_\mu(\pi_k,f_r^k) - \cL_\mu(\pi_k,Q^{\pi_k})  \\
        \leq &\cL_\mu(\pi_k,f_r^k) - \cL_\mu(\pi_k,Q^{\pi_k}) +\beta\cE_{\cD}(\pi_k,f_r^k) \\
        \leq & \cL_\mu(\pi_k,f_r^k) - \cL_\mu(\pi_k,Q^{\pi_k}) + \beta\cE_{\cD}(\pi_k,f_r^k) -\beta\cE_{\cD}(\pi,f_{\pi_k})+\beta C_{\ell_2}^*\sqrt{\epsilon_1} + \beta\epsilon_{stat}  \tag{Lemma \ref{ap:le-average}} \nonumber\\
\leq & \cL_{\cD}(\pi_k,f_r^k) + beta \cE_{\cD}(\pi_k,f_r^k) - \cL_{\cD}(\pi_k,f_r^{\pi_k}) - \beta\cE_{\cD}(\pi,f_{\pi_k}) + (\beta+1)(\epsilon_{stat} + C_{\ell_2}^*\sqrt{\epsilon_1} )\\
\leq & (\beta+1)(\epsilon_{stat} + C_{\ell_2}^*\sqrt{\epsilon_1} ).
     \end{align*}
     We finish the proof.
\end{proof}

\section{Discussion on obtaining the behavior policy \label{ap:bc} }
To extract the behavior policy when it is not provided, we can simply run behavior cloning on the offline data. In particular, we can estimate the learned behavior policy $\hat{\pi}_\mu$ as follows: $\forall s \in \cD, \hat{\pi}_\mu(a \vert s) \leftarrow \frac{n(s,a)}{n(s)}$, and $\forall s \notin \cD, \hat{\pi}_\mu(a \vert s) \leftarrow \frac{1}{\vert \cA \vert}$, where $n(s,a)$ is the number of times $(s,a)$ appears in the offline dataset $\cD$. Essentially, the estimated BC policy matches the empirical behavior policy on states in the offline dataset and takes uniform random actions outside the support of the dataset. It is easy to show that the gap between the learned policy $\hat{\pi}_\mu$ and the behavior policy $\pi_\mu$ is upper bounded by $\cO(\min\{1, \vert \cS \vert / N \})$ \citep{KumHonSin_22,RajYanJia_20}. We can have a very accurate estimate as long as the size of the dataset is large enough.

\section{Expermintal Supplement \label{ap:experiments}}
\subsection{Practical Algorithm}
The practical version of our algorithm WSAC is shown in Algorithm \ref{alg:atsac-sim}.
\begin{algorithm}[hb]
   \caption{WSAC - Practical Version}\label{alg:atsac-sim}
\begin{algorithmic}[1]
   \State {\bfseries Input:} Batch data $\cD,$ policy network $\pi,$ network for the reward critic $f_r,$ network for the cost critic $f_c, \beta >0, \lambda >0.$ 
   \For{$k=1,2,\dots,K$}
  \State Sample minibatch $\cD_{\text{mini}}$ from the dataset $\cD.$ 
  \State {\color{orange}{Update Critic Networks:}}
   {\begin{align*}
    l_{\text{reward}} (f_r) &:= \cL_{\cD_{\text{mini}}}(\pi,f_r) + \beta \cE_{\cD_{\text{mini}}}(\pi,f_r),\\
    f_r& \leftarrow \text{ADAM}(f_r-\eta_{\text{fast}}\nabla l_{\text{reward}}(f_r) ), \\
    l_{\text{cost}} (f_c) &:= -\lambda\cL_{\cD_{\text{mini}}}(\pi,f_c) + \beta \cE_{\cD_{\text{mini}}}(\pi,f_c),\\
    f_c& \leftarrow \text{ADAM}(f_c-\eta_{\text{fast}} \nabla l_{\text{cost}}(f_c) ).
\end{align*}}
\State {\color{orange}{Update Policy Network:}}
 {\begin{align*}
    l_{\text{actor}}(\pi) &:= -\cL_{\text{mini}}(\pi, f_r) + \lambda \{ \cL_{\text{mini}}(\pi,f_c) \}_{+}, \\
    \pi & \leftarrow \text{ADAM}(\pi - \eta_{\text{slow}}\nabla l_{\text{actor}}(\pi) ).
\end{align*}}
   \EndFor
   \State {\bfseries Output:}  $\pi$
\end{algorithmic}
\end{algorithm}
\subsection{Environments Description}
Besides the ``BallCircle" environment, we also study several representative environments as follows. All of them are shown in Figure \ref{fig:ap-env} and their offline dataset is from \cite{LiuGuoLin_23}.
\begin{itemize}
    \item \textbf{CarCircle}: This environment requires the car to move on a circle in a clockwise direction within the safety zone defined by the boundaries. The car is a four-wheeled agent based on MIT's race car. The reward is dense and increases by the car's velocity and by the proximity towards the boundary of the circle and the cost is incurred if the agent leaves the safety zone defined by the two yellow boundaries, which are the same as "CarCircle".
    \item \textbf{PointButton}: This environment requires the point to navigate to the goal button location and touch the right goal button while avoiding more gremlins and hazards. The point has two actuators, one for rotation and the other for forward/backward movement. The reward consists of two parts, indicating the distance between the agent and the goal and if the agent reaches the goal button and touches it. The cost will be incurred if the agent enters the hazardous areas, contacts the gremlins, or presses the wrong button.
    \item \textbf{PointPush}: This environment requires the point to push a box to reach the goal while circumventing hazards and pillars. The objects are in 2D planes and the point is the same as "PointButton". It has a small square in front of it, which makes it easier to determine the orientation visually and also helps point push the box. 
\end{itemize}

\begin{figure}
    \centering
    \includegraphics[height=2.7cm, width=4cm]{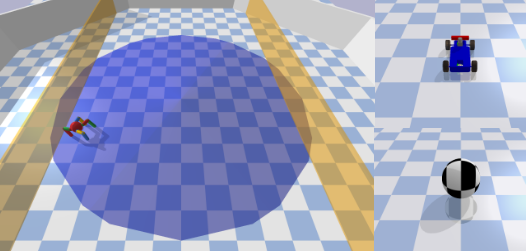}
    \includegraphics[height=2.7cm, width=3.5cm]{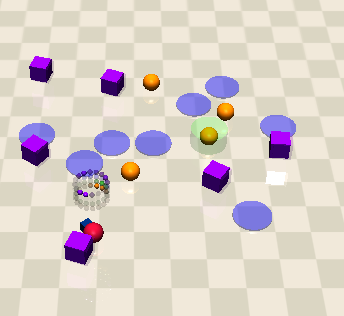}
    \includegraphics[height=2.7cm, width=3.5cm]{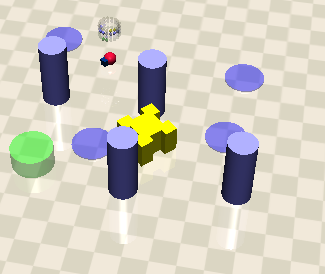}
    \caption{
    BallCircle and CarCircle (left), PointButton (medium), PointPush(right) .}
    \label{fig:ap-env}
\end{figure}

\subsection{Implementation Details and Experimental settings}
We run all the experiments with NVIDIA GeForce RTX $3080$ Ti $8-$Core Processor.

The normalized reward and cost are summarized as follows:
\begin{align}
    &R_{normalized}=\frac{R_\pi-r_{min}(\mathcal{M})}{r_{max}(\mathcal{M})-r_{min}(\mathcal{M})}\\
   & C_{normalized}=\frac{C_\pi+\epsilon}{\kappa+\epsilon},
\end{align}
where $r(\mathcal{M})$ is the empirical reward for task $\mathcal{M}$, $\kappa$ is the cost threshold, $\epsilon$ is a small number to ensure numerical stability. Thus any normalized cost below $1$ is considered as safe. We use $R_\pi$ and $C_\pi$ to dentoe the cumulative rewards and cost for the evaluated policy, respectively. The parameters of $r_{min}(\mathcal{M}),$ $r_{max}(\mathcal{M})$ and $\kappa$ are environment-dependent constants and the specific values can be found in the Appendix ~\ref{ap:experiments}. We remark that the normalized reward and cost only used for demonstrating the performance purpose and are not used in the training process. The detailed value of the reward and costs can be found in Table~\ref{tab:parameters}.
\begin{table}[h]
    \centering
    \caption{Hyperparameters of WSAC}
    \begin{tabular}{|c|c|c|c|c|}
        \hline
        Parameters & BallCircle & CarCircle  & PointButton& PointPush  \\
        \hline
        $\beta_c$ & 30.0 & 38.0 & 30.0 & 30.0\\
        \hline
        $\beta_r$ & 10.0 & 12.0 & 10.0 & 10.0\\
        \hline
        $UB_{Q_C}$ & 30.0 & 28.0 & 32.0 & 30.0\\
        \hline
        $\lambda$ & \multicolumn{4}{c|}{$[1.0, 20.0]$}\\
        \hline
        Batch size & \multicolumn{4}{c|}{$512$}\\
        \hline
         Actor learning rate&\multicolumn{4}{c|}{$0.0001$}\\
         \hline
        Critic learning rate&\multicolumn{4}{c|}{$0.0003$}\\
         \hline
        $\kappa$&\multicolumn{4}{c|}{$40$}\\
         \hline
        $r_{min}(\mathcal{M})$&0.3831&3.4844&0.0141&0.0012\\
        \hline
         $r_{max}(\mathcal{M})$&881.4633&534.3061&42.8986&14.6910\\ 
         \hline
    \end{tabular}
    \label{tab:parameters}
\end{table}
To mitigate the risk of unsafe scenarios, we introduce a hyperparameter $UB_{Q_C}$ to the cost $Q$-function as an overestimation when calculating the actor loss. We use two separate $\beta_r$, $\beta_c$ for reward and cost $Q$ functions to make the algorithm more flexible. 

\begin{figure}
    \centering
    \includegraphics[scale=0.25]{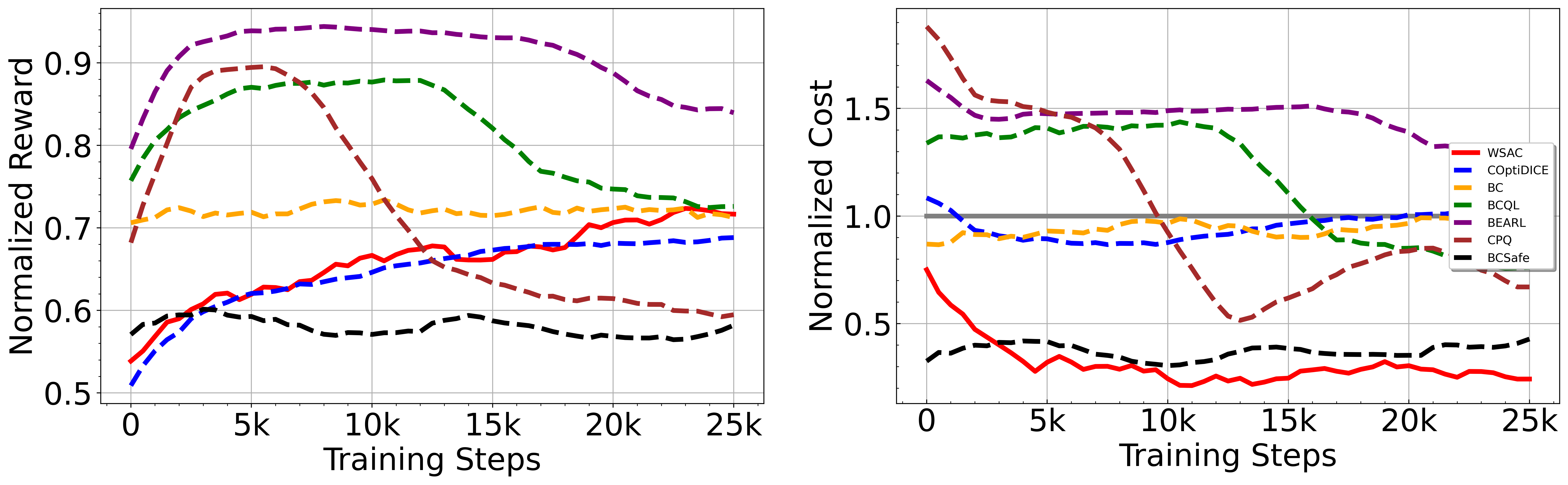}\\
    \text{(a) BallCircle}\\
    \includegraphics[scale=0.25]{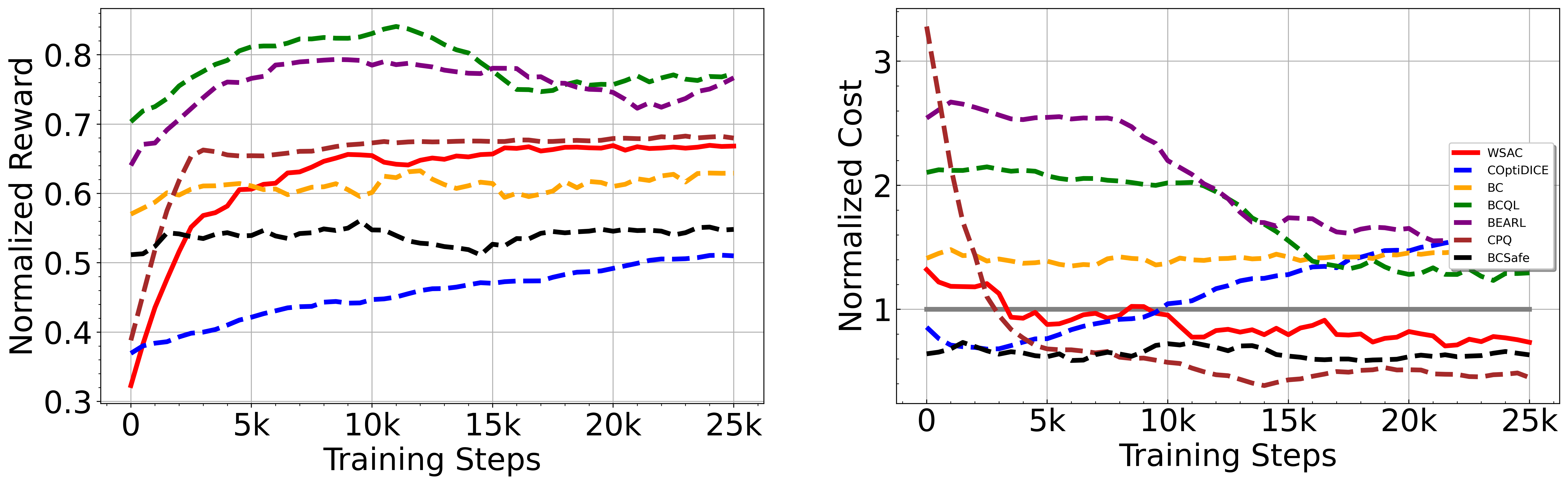}\\
    \text{(b) CarCircle}\\
    \includegraphics[scale=0.25]{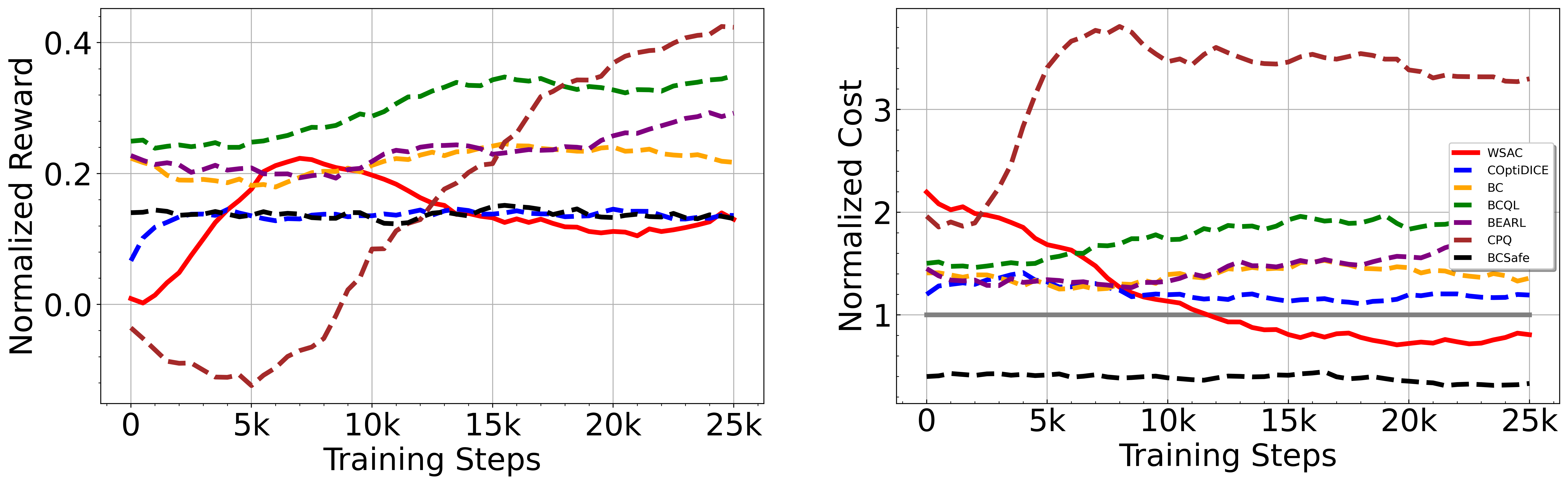}\\
    \text{(c) PointButton}\\
    \includegraphics[scale=0.25]{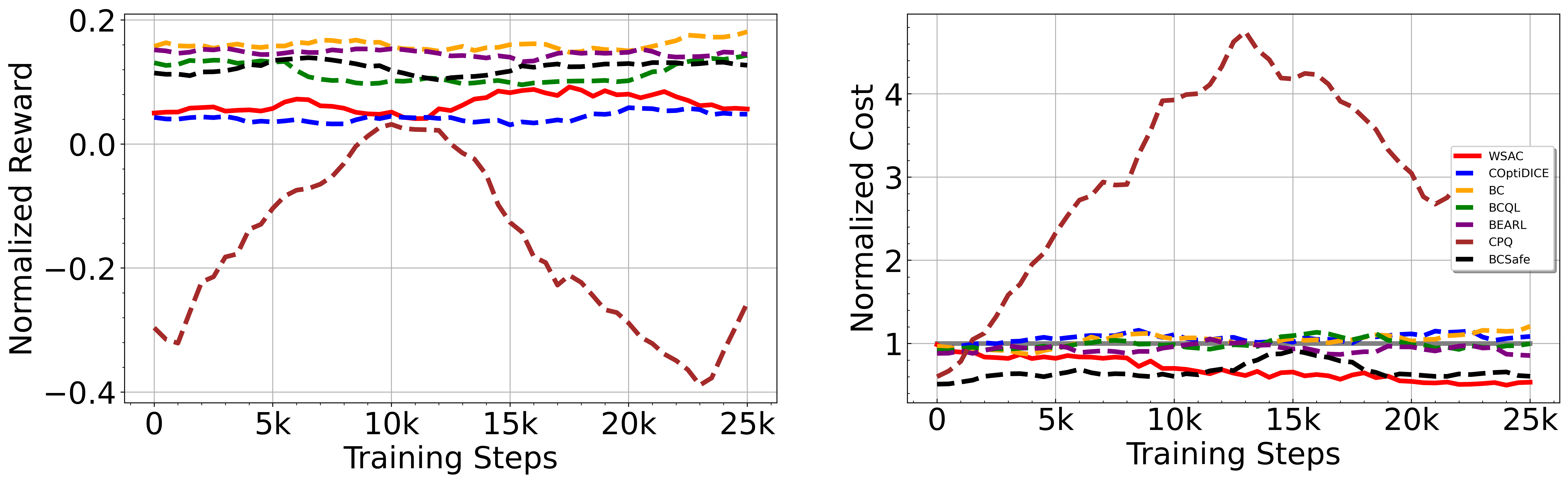}\\
    \text{(d) PointPush}
    \caption{The moving average of evaluation results is recorded every $500$ training steps, with each result representing the average over $20$ evaluation episodes and three random seeds. A cost threshold $1$ is applied, with any normalized cost below 1 considered safe.}
    \label{fig:ap-results}
\end{figure}

We use different $\beta$ for the reward and cost critic networks and different $UB_{Q_C}$ for the actor-network to make the adversarial training more stable. We also let the key parameter $\lambda$ within a certain range balance reward and cost during the training process. Their values are shown in Table~\ref{tab:parameters}. In experiments, we take $\mathcal{W} = \{0, C_\infty\}$ for computation effective. Then we can reduce $\mathcal{E}_{\mathcal{D}}(\pi, f)$ to $C_{\infty}\mathbb{E}_{\mathcal{D}}[(f(s,a)-r-\gamma f(s',\pi))^2]$ and reduce $\hat{\mathcal{E}}_{{\mathcal{D}}}(\pi, f)$ to $C_{\infty}\mathbb{E}_{\mathcal{D}}[(f(s,a)-c-\gamma f(s',\pi))^2]$. In this case, $C_\infty$ can be considered as a part of the hyperparameter $\beta_r(\beta_c)$.

\subsection{Experimental results details and supplements}

The evaluation performances of the agents in each environment after $30000$ update steps of training are shown in Table~\ref{tab:compare-results}, and the performance of average rewards and costs are shown in Figure~\ref{fig:ap-results}. From the results, we observe that WSAC achieves a best reward performance with significantly lowest costs against all the baselines. It suggests WSAC can establish a safe and efficient policy and achieve a steady improvement by leveraging the offline dataset. 

\subsection{Simulations under different cost limits}
To further evaluate the performance of our algorithm under varying situations. We further compare our algorithm with baselines under varying cost limits, we report the average performance of our method and other baselines in Table \ref{tab:different_cost_limit}. Specifically, cost limits of $[10, 20, 40]$ are used for the BallCircle and CarCircle environments, and $[20, 40, 80]$ for the PointButton and PointPush environments, following the standard setup outlined by \cite{LiuGuoLin_23}. Our results demonstrate that WSAC maintains safety across all environments, and its performance is either comparable to or superior to the best baseline in each case. These suggest that WSAC is well-suited for adapting to tasks of varying difficulty.
\begin{table}[h]
\renewcommand{\arraystretch}{1.5}
    \caption{The normalized reward and cost of WSAC and other baselines for different cost limits.  Each value is averaged over 3 distinct cost limits, 20 evaluation episodes, and 3 random seeds. The Average line shows the average situation in various environments. The cost threshold is 1. \textcolor{gray}{Gray}: Unsafe agent whose normalized cost is greater than 1. \textcolor{blue}{Blue}: Safe agent with best performance. The performance of all the baselines is reported according to the results in \cite{LiuGuoLin_23}.}
    \label{tab:different_cost_limit}
    \centering
    \resizebox{\textwidth}{!}{
    \begin{tabular}{|c|cc|cc|cc|cc|cc|cc|cc|cc|}
    \hline
     & \multicolumn{2}{c|}{BC} & \multicolumn{2}{c|}{Safe-BC} & \multicolumn{2}{c|}{CDT} & \multicolumn{2}{c|}{BCQL} &\multicolumn{2}{c|}{BEARL} & \multicolumn{2}{c|}{CPQ}  & \multicolumn{2}{c|}{COptiDICE} &\multicolumn{2}{c|}{WSAC} \\
    \cline{2-17}
          & Reward $\uparrow$ & Cost $\downarrow$ & Reward $\uparrow$ & Cost $\downarrow$ & Reward $\uparrow$ & Cost $\downarrow$ & Reward $\uparrow$ & Cost $\downarrow$ & Reward $\uparrow$ & Cost $\downarrow$& Reward $\uparrow$ & Cost $\downarrow$& Reward $\uparrow$ & Cost $\downarrow$ & Reward $\uparrow$ & Cost $\downarrow$\\
    \hline
    BallCircle & \textcolor{gray}{0.74} & \textcolor{gray}{4.71} & 0.52 & 0.65 & \textcolor{gray}{0.77} & \textcolor{gray}{1.07} & \textcolor{gray}{0.69} & \textcolor{gray}{2.36} & \textcolor{gray}{0.86} & \textcolor{gray}{3.09} & 0.64 & 0.76 & \textcolor{gray}{0.70} & \textcolor{gray}{2.61} & \textcolor{blue}{0.74} & \textcolor{blue}{0.51}  \\
    CarCircle & \textcolor{gray}{0.58} & \textcolor{gray}{3.74} & 0.50 & 0.84 & \textcolor{blue}{0.75} & \textcolor{blue}{0.95} & \textcolor{gray}{0.63} & \textcolor{gray}{1.89} & \textcolor{gray}{0.74} & \textcolor{gray}{2.18} & 0.71 & 0.33 & \textcolor{gray}{0.49} & \textcolor{gray}{3.14} & 0.65 & 0.55 \\
    PointButton & \textcolor{gray}{0.27} & \textcolor{gray}{2.02} & \textcolor{gray}{0.16} & \textcolor{gray}{1.10} & \textcolor{gray}{0.46} & \textcolor{gray}{1.57} & \textcolor{gray}{0.40} & \textcolor{gray}{2.66} & \textcolor{gray}{0.43} & \textcolor{gray}{2.47} & \textcolor{gray}{0.58} & \textcolor{gray}{4.30} & \textcolor{gray}{0.15} & \textcolor{gray}{1.51} & \textcolor{blue}{0.11} & \textcolor{blue}{0.55} \\
    PointPush & 0.18 & 0.91 & 0.11 & 0.80 & 0.21 & 0.65 & \textcolor{blue}{0.23} & \textcolor{blue}{0.99} & 0.16 & 0.89 & \textcolor{gray}{0.11} & \textcolor{gray}{1.04} & \textcolor{gray}{0.02} & \textcolor{gray}{1.18} & 0.07 & 0.61 \\
    \hline
    Average & \textcolor{gray}{0.44} & \textcolor{gray}{2.85} & 0.32 & 0.85 & \textcolor{gray}{0.55} & \textcolor{gray}{1.06} & \textcolor{gray}{0.49}  & \textcolor{gray}{1.98} & \textcolor{gray}{0.55} & \textcolor{gray}{2.16} & \textcolor{gray}{0.51} & \textcolor{gray}{1.61} & \textcolor{gray}{0.34} & \textcolor{gray}{2.11} & \textcolor{blue}{0.39} & \textcolor{blue}{0.56}\\
    \hline
    \end{tabular} }
\end{table}

\subsection{Ablation studies}
To investigate the contribution of each component of our algorithm, including the weighted Bellman regularizer, the aggression-limited objective, and the no-regret policy optimization (which together guarantee our theoretical results), we performed an ablation study in the tabular setting. The results, presented in Table~\ref{tab:ablation}, indicate that the weighted Bellman regularization ensures the safety of the algorithm, while the aggression-limited objective fine-tunes the algorithm to achieve higher rewards without compromising safety.

\begin{table}[h]
    \centering
   \caption{Ablation study under tabular case (cost limit is 0.1) over 10 repeat experiments}
    \begin{tabular}{|c|c|c|}
        \hline
         Components & cost & reward  \\
        \hline
         ALL & 0.014 $\pm 0.006 $ & $0.788\pm 0.004$  \\
         \hline
         W/O no-regret policy optimization & $0.014\pm 0.006$ & $0.788\pm 0.004$  \\
         \hline
         W/O Aggression-limited objective & $0.014\pm 0.006$ & $0.788\pm 0.005$  \\
         \hline
         W/O Weighted Bellman regularizer & $0.323\pm 0.061$ & $0.684\pm 0.017$ \\
         \hline
    \end{tabular}
    
    \label{tab:ablation}
\end{table}

\subsection{Sensitivity Analysis of Hyper-Parameters}

\begin{figure}[t]
    \centering
    \includegraphics[width=0.45\linewidth]{img/diff_beta.png}
    \includegraphics[width=0.45\linewidth]{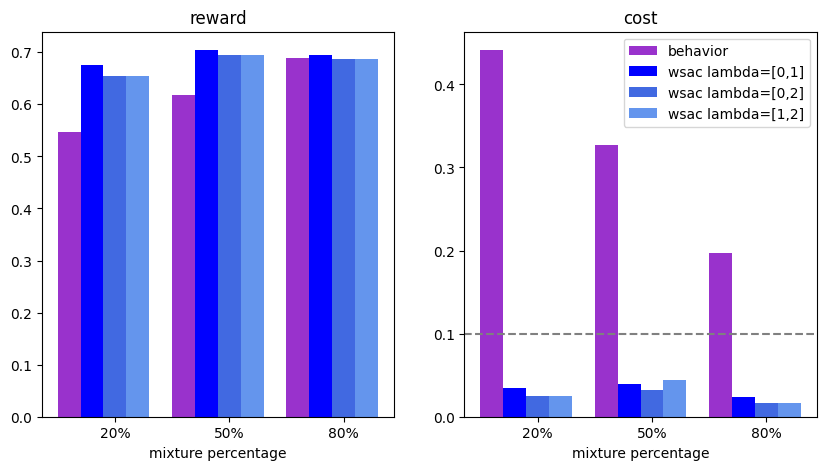}
    \caption{Sensitivity Analysis of Hyperparameters in the Tabular Case. The left figure illustrates tests conducted with various $\beta$ values (For the sake of discussion, we denote $\beta = \beta_r = \beta_c$) with $\lambda = [0,2]$, while the right figure presents tests across different ranges of $\lambda$ with $\beta_r = \beta_c = 2.0$.}
    \label{fig:sensitivity}
\end{figure}

We provide the rewards and costs under different sets of $\beta_r=\beta_c\in \{1,0.5,0.05 \}$ and $\lambda\in\{[0,1],[0,2],[1,2]\}$ (since our $\lambda$ only increases, the closed interval here represents the initial value and the upper bound of $\lambda$) to demonstrate the robustness of our approach in the tabular setting in Figure \ref{fig:sensitivity}. We can observe that the performance is almost the same under different sets of parameters and different qualities of behavior policies.

\end{document}